
\documentclass{article}
  
\usepackage{mathrsfs}
\usepackage{microtype}
\usepackage{graphicx}
\usepackage{subfigure}
\usepackage{booktabs}       
\usepackage{amsfonts}       
\usepackage{microtype}      
\usepackage{xcolor}
\usepackage{url}
\usepackage{verbatim} 
\usepackage{graphicx}
\usepackage{caption} 
\usepackage{multirow}
\usepackage[ruled,linesnumbered,vlined]{algorithm2e}
\usepackage{xspace}
\usepackage{epsfig}
\usepackage{amsmath}
\usepackage{amsthm}
\usepackage{amssymb}
\usepackage{amscd,amsfonts,amsbsy,rotating}
\usepackage{mathtools}
\usepackage{times}
\usepackage{xr}
\usepackage{bbm}
\usepackage{dsfont}
\usepackage{bm}
\usepackage{xcolor}[dvipsnames]
\usepackage{enumitem}
\usepackage{hyperref}       



\DeclareMathOperator*{\argmax}{arg\,max}

\usepackage{hyperref}


\newtheorem{proposition}{Proposition}
\newtheorem{definition}{Definition}
\newtheorem{lemma}{Lemma}

\usepackage[accepted]{icml2025}


\icmltitlerunning{}
\newcommand{\titl}{Large Language Models are Demonstration
Pre-Selectors for Themselves}
\newcommand{\titlshort}{FEEDER}

\begin{document}

\twocolumn[
\icmltitle{\titl}



\icmlsetsymbol{equal}{*}
\icmlsetsymbol{visit}{$\dagger$}

\begin{icmlauthorlist}
\icmlauthor{Jiarui Jin}{1,2,equal,visit}
\icmlauthor{Yuwei Wu}{3,equal}
\icmlauthor{Haoxuan Li}{4}
\icmlauthor{Xiaoting He}{5}
\icmlauthor{Weinan Zhang}{1}
\icmlauthor{Yiming Yang}{3}
\icmlauthor{Yong Yu}{1}
\icmlauthor{Jun Wang}{6}
\icmlauthor{Mengyue Yang}{7}
\end{icmlauthorlist}

\icmlaffiliation{1}{Shanghai Jiao Tong University}
\icmlaffiliation{2}{Xiaohongshu Inc.}
\icmlaffiliation{3}{Carnegie Mellon University}
\icmlaffiliation{4}{Peking University}
\icmlaffiliation{5}{No Affiliation}
\icmlaffiliation{6}{University College London}
\icmlaffiliation{7}{University of Bristol}

\icmlcorrespondingauthor{Mengyue Yang}{mengyue.yang@bristol.ac.uk}

\icmlkeywords{Machine Learning, ICML}

\vskip 0.3in
]



\printAffiliationsAndNotice{\textsuperscript{*} Equal contributions: Jiarui Jin and Yuwei Wu. \textsuperscript{$\dagger$} Work done during Jiarui Jin's visit at University College London.}

\begin{abstract}
     In-context learning (ICL) with large language models (LLMs) delivers strong few-shot performance by choosing few-shot demonstrations from the entire training data. However, existing ICL methods, which rely on similarity or diversity scores to choose demonstrations, incur high computational costs due to repeatedly retrieval from large-scale datasets for each query.
     To this end, we propose $\mathtt{FEEDER}$ (FEw yet Essential Demonstration prE-selectoR), a novel \emph{pre-selection} framework that identifies a representative subset of demonstrations containing the most representative examples in the training data, tailored to specific LLMs.
     To construct this subset, we introduce the ``sufficiency'' and ``necessity'' metrics in the pre-selection stage and design a tree-based algorithm to identify representative examples efficiently.
     Once pre-selected, this representative subset can effectively replace the full training data, improving efficiency while maintaining comparable performance in ICL.
     Additionally, our pre-selected subset also benefits fine-tuning LLMs, where we introduce a bi-level optimization method that enhances training efficiency without sacrificing performance. 
     Experiments with LLMs ranging from 300M to 8B parameters show that $\mathtt{FEEDER}$ can reduce training data size by over 20\% while maintaining performance and seamlessly integrating with various downstream demonstration selection strategies in ICL.   
 \end{abstract}

\section{Introduction}
\label{sec:intro}

Large language models (LLMs), e.g., GPT \citep{brown2020language}, Gemma \citep{team2024gemma}, and Llama \citep{touvron2023llama}, have demonstrated impressive performance across a wide range of tasks by employing few-shot inference, referred as in-context learning (ICL) \citep{brown2020language,dong2022survey}.
This approach avoids the computational expense associated with fine-tuning LLMs.
Here, the core challenge is how to select the most representative demonstrations from large training data.

Early approaches \citep{qiu2022evaluating, liu2021makes, rubin2021learning, wang2022training} primarily selected demonstrations based on relevance, typically using similarity scores between each demonstration and the input question. However, recent studies \citep{levy2022diverse, koksal2022meal, zhou2023survival} indicate that evaluating examples in isolation is suboptimal. Instead, they advocate for incorporating additional selection criteria, such as diversity, uncertainty, or clustering-based metrics, alongside similarity, to enhance demonstration selection effectiveness.
As a result, the above enhancements in the demonstration selection process introduce significant computational overhead, particularly when the number of shots is large.
Beyond efficiency, we argue that the effectiveness of selected demonstrations should also consider the specific LLM in use, as different LLMs exhibit varying capabilities and knowledge domains.

\begin{figure*}[t]
\centering
\includegraphics[width=1\linewidth]{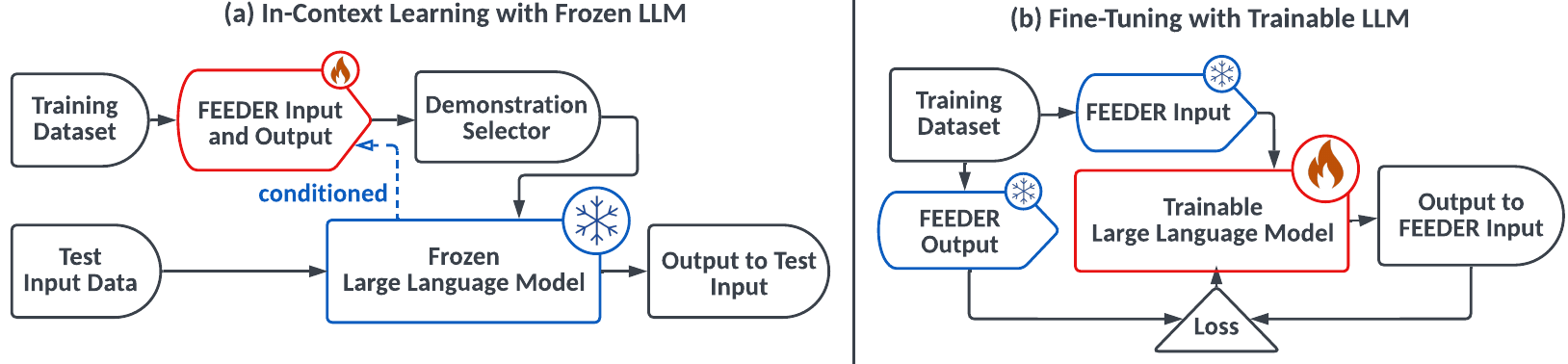}
\vspace{-8mm}
\caption{
    Overview of $\mathtt{FEEDER}$ that operates effectively within both in-context learning and fine-tuning settings.
    In the in-context learning setting, depicted in (a), we first \emph{pre-select} a representative subset termed $\mathtt{FEEDER}$ from the training dataset, and then incorporate existing demonstration selector to get samples regarding specific test input.
    This pre-selected subset is characterized by its sufficiency and necessity conditioned on the frozen LLM.
    In the fine-tuning setting, shown in (b), $\mathtt{FEEDER}$ allows the LLM to be tuned on the fixed subset $\mathtt{FEEDER}$, and this subset is intentionally pre-selected to be a faithful representation of the training dataset, with the dual objectives of maintaining data quality and minimizing computational expenses.
    The above two processes can be encapsulated into a bi-level optimization framework, allowing for iterative refinement of both the pre-selected $\mathtt{FEEDER}$ and the fine-tuned LLM.
    }
\label{fig:overview}
\vspace{-3mm}
\end{figure*}

To this end, we introduce a new stage called the pre-selection stage and propose a demonstration pre-selector named $\mathtt{FEEDER}$ ($\mathtt{FE}$w yet $\mathtt{E}$ssential $\mathtt{D}$emonstration pr$\mathtt{E}$-selecto$\mathtt{R}$).
$\mathtt{FEEDER}$ functions as a representative subset selector, aiming to identify a representative subset from the full training data before the demonstration selection process, thereby enhancing both efficiency and effectiveness in demonstration selection. 
For this purpose, we introduce the concepts of ``sufficiency'' and ``necessity'', where sufficiency investigates whether incorporating a demonstration is representative of other samples, while necessity examines whether a demonstration provides redundant information, taking into account the LLM's capability, knowledge, and the selected samples.
By directing LLMs to focus on the selected subset, $\mathtt{FEEDER}$ prevents them from processing unnecessary data, thereby improving efficiency.
Since exhaustively enumerating and evaluating all possible subsets is impractical, we develop a tree-based approximation algorithm that leverages the capability of the given LLM to assess whether each subset is sufficient and necessary for representing others.

Besides ICL, we further observe that our pre-selection process can enhance the fine-tuning process by allowing LLMs to achieve comparable or even superior performance while training on the pre-selected representative subset of the data, rather than the full data, where we introduce a bi-level optimization method that enhances training efficiency without sacrificing performance. 

Our empirical evaluations encompass six LLM bases, ranging from 335M to 7B parameters, and include six demonstration selectors in the demonstration selection stage, applied to text classification, reasoning, and semantic parsing tasks.
Our results demonstrate that our pre-selection process produces a subset that consistently reduces the data size for demonstration selection by over 20\% across various datasets.
Moreover, using this pre-selected subset in ICL with a simple similarity-based metric can outperform existing sophisticated metrics such as diversity-based approaches.
Additionally, we show that this subset can enhance the fine-tuning process, enabling LLMs trained on the selected representative subset to achieve comparable or even superior performance.

To summarize, our pre-selector $\mathtt{FEEDER}$ offers three key advantages:
\begin{itemize}[itemsep=2pt,topsep = 3pt,leftmargin =10pt,parsep=2pt,partopsep=0pt]
\item The pre-selection stage effectively identifies a representative subset, eliminating unnecessary data and significantly reducing the high computational complexity of downstream demonstration selection.
\item $\mathtt{FEEDER}$ evaluates the representativeness of demonstrations in terms of sufficiency and necessity, taking the LLM's capabilities and knowledge into consideration.
\item The pre-selected subset is beneficial not only for ICL but also for accelerating the fine-tuning process while maintaining comparable or even superior performance.
\end{itemize}

\section{Demonstration Pre-Selection}
\label{sec:bilevel}
We begin by delineating two distinct contexts where $\mathtt{FEEDER}$ can operate: ICL and fine-tuning settings.
Throughout this paper, we approach both scenarios from a data-centric perspective \citep{strickland2022andrew}, emphasizing the significance of \emph{data quality} over \emph{data quantity}.

In the ICL setting, we are given a training dataset $\mathcal{D}_\mathtt{TRAIN}=\{(\bm{x}_n,\bm{y}_n)\}^N_{n=1}$ consisting of pairs of input data (e.g., questions) and output labels (e.g., answers).
We are also given a test dataset $\mathcal{D}_\mathtt{TEST}=\{(\bm{x}_{m},\bm{y}_m)\}^M_{m=1}$, where we assume that $\mathcal{D}_\mathtt{TRAIN}$ share the same support set \citep{yosida2012functional} with $\mathcal{D}_\mathtt{TEST}$.
Our goal is to develop a demonstration selector that extracts n-shot demonstrations from the training dataset, denoted as $\mathcal{D}_\mathtt{DEMO}\subseteq \mathcal{D}_\mathtt{TRAIN}$.
We use $\Psi_\mathtt{LLM}:\mathbb{X}\times\mathbb{D}\rightarrow\mathbb{Y}$ to represent a LLM using selected demonstrations as the context.
Here, $\bm{x}_\cdot\in\mathbb{X}$ is an input text, $\bm{y}_\cdot\in\mathbb{Y}$ is the corresponding output, and $(\bm{x}_\cdot,\bm{y}_\cdot)\in\mathbb{D}$ is one demonstration.
Formally, our objective is to minimize:
\begin{equation}
\label{eqn:loss}
\mathcal{L}(\mathcal{D}_\mathtt{DEMO},\mathcal{D}_\mathtt{TEST})=\sum_{(\bm{x}_m,\bm{y}_m)\in\mathcal{D}_\mathtt{TEST}}\ell\Big(\Psi^*_\mathtt{LLM}(\bm{x}_m,\mathcal{D}_\mathtt{DEMO}),\bm{y}_m\Big),
\end{equation}
where $\ell(\cdot,\cdot)$ is the task-specific loss function, and $\Psi^*_\mathtt{LLM}(\cdot)$ means that the LLM is frozen. 
However, since we do not have access to $\mathcal{D}_\mathtt{TEST}$ during the training phase, it is impractical to optimize the demonstration selection directly by minimizing $\mathcal{L}(\mathcal{D}_\mathtt{DEMO},\mathcal{D}_\mathtt{TEST})$.

Instead, we re-consider the demonstration selection task as a two-stage problem, where we first \emph{pre-select} a subset of \emph{high-quality} demonstrations from $\mathcal{D}_\mathtt{TRAIN}$ as the selection pool, i.e., a $\mathtt{FEEDER}$ subset denoted as $\mathcal{D}_\mathtt{FEEDER}$; and then we apply existing demonstration selectors such as random or similarity-based selectors on $\mathcal{D}_\mathtt{FEEDER}$, to choose the corresponding demonstrations as context for a specific test instance.
Our key idea is that a high-quality training dataset $\mathcal{D}_\mathtt{FEEDER}$ should be both representative of the entire training dataset $\mathcal{D}_\mathtt{TRAIN}$ and as minimal in size as possible.
Formally, we use the loss function $\mathcal{L}(\mathcal{D}_\mathtt{FEEDER},\mathcal{D}_\mathtt{TRAIN})$ from Eq.~(\ref{eqn:loss}) to evaluate our \emph{pre-selector}, i.e., how well the representation of $\mathcal{D}_\mathtt{FEEDER}$ aligns with $\mathcal{D}_\mathtt{TRAIN}$.
Then, our objective can be written as:
\begin{equation}
\label{eqn:outdemo}
\begin{aligned}
\min_{\mathcal{D}_\mathtt{FEEDER}\subseteq\mathcal{D}_\mathtt{TRAIN}} &|\mathcal{D}_\mathtt{FEEDER}|,\\ \text{ s.t. }
\mathcal{L}(\mathcal{D}_\mathtt{FEEDER},\mathcal{D}_\mathtt{TRAIN}) &\leq 
\mathcal{L}(\mathcal{D}_\mathtt{TRAIN},\mathcal{D}_\mathtt{TRAIN}).   
\end{aligned}
\end{equation}
This formulation indices that $\mathcal{D}_\mathtt{FEEDER}$ should be not only sufficient but also necessary to represent $\mathcal{D}_\mathtt{TRAIN}$, thus removing redundant data points to save computation costs meanwhile maintaining LLM performance.

Our pre-selected subset of high-quality data $\mathcal{D}_\mathtt{FEEDER}$ also can be applied to fine-tune LLMs. 
Concretely, instead of fine-tuning LLMs on the entire training dataset $\mathcal{D}_\mathtt{TRAIN}$, $\mathcal{D}_\mathtt{FEEDER}$ allows us to fine-tune LLMs with few but high-quality data, reducing computation costs.
In this case, the LLM $\Psi_\mathtt{LLM}$ is usually trainable, and our goal can be formulated as:
\begin{equation}
\label{eqn:in}
\min_{\Psi_\mathtt{LLM}}\mathbb{E}_{(\bm{x}_n,\bm{y}_n)\in\mathcal{D}^*_\mathtt{FEEDER}}[\ell\Big(\Psi_\mathtt{LLM}(\bm{x}_n,\emptyset),\bm{y}_n\Big)],
\end{equation}
where $\mathcal{D}^*_\mathtt{FEEDER}$ means that the pre-selected $\mathcal{D}_\mathtt{FEEDER}$ is fixed during fine-tuning.

\begin{algorithm}[h]
	\caption{Bi-level Optimization}
	\label{algo:bilevel}
	{\bfseries Input:}
	Training dataset $\mathcal{D}_\mathtt{TRAIN}$, LLM $\Psi_\mathtt{LLM}$.\\
        {\bfseries Output:}
	Approximated subset $\widetilde{\mathcal{D}}_\mathtt{FEEDER}$, tuned LLM $\Psi_\mathtt{LLM}$.\\
    Initialize $\widetilde{\mathcal{D}}_\mathtt{FEEDER}=\mathcal{D}_\mathtt{TRAIN}$.\\
    \For{each iteration}{
    Update $\widetilde{\mathcal{D}}_\mathtt{FEEDER}$ by using our approximation algorithm with frozen LLM $\Psi_\mathtt{LLM}$.  
    \label{line:data}\\
    Tune LLM $\Psi_\mathtt{LLM}$ by using Eq.~(\ref{eqn:in}) as our loss function on fixed $\widetilde{\mathcal{D}}_\mathtt{FEEDER}$.
    \label{line:model}
    }
\end{algorithm}

Given the above analysis, we can further bridge the (pre)-selection of $\mathcal{D}_\mathtt{FEEDER}$ and the LLM fine-tuning on $\mathcal{D}_\mathtt{FEEDER}$ into a bi-level optimization framework. 
On the outer level, following Eq.~(\ref{eqn:outdemo}), we optimize the selection of $\mathcal{D}_\mathtt{FEEDER}$ in the context of a frozen LLM $\Psi^*_\mathtt{LLM}$; while on the inner level, following Eq.~(\ref{eqn:in}), we optimize the LLM $\Psi_\mathtt{LLM}$ using the fixed dataset $\mathcal{D}^*_\mathtt{FEEDER}$.
The bi-level optimization procedure described above is amenable to repetition, enabling iterative refinement of both the selected $\mathcal{D}_\mathtt{FEEDER}$ and the tuned LLM. 
The overall process is summarized in Algorithm~\ref{algo:bilevel}, and the construction of our $\mathtt{FEEDER}$ subset is detailed in the subsequent sections.

\section{Connections to Existing Work}
With the growing capabilities of LLMs, data (referred to as ``demonstrations'') selection has gained prominence, which involves selecting suitable examples as the context for in-context learning \citep{dong2022survey,yang2023large,zhou2022large} or filtering a subset from training examples for fine-tuning \citep{sachdeva2024train,zhou2024lima}. 
Previous solutions have revolved around constructing either parameter-free selection mechanisms \citep{wang2022training,zemlyanskiy2022generate,gao2023ambiguity} or neural-based selection methods \citep{pasupat2021controllable,liu2021makes,gupta2021retronlu,rubin2021learning,li2023unified}.
Recent investigations \citep{xia2024less,marion2023less} focus on mining training examples for fine-tuning specific tasks, with \citet{wang2024large} and \citet{wan2025few} extending this approach to in-context learning.
In contrast to previous methods that use LLMs as demonstration selectors, our work leverages the powerful few-shot inference capabilities of LLMs by employing them as \emph{pre-selectors}.
For this purpose, we introduce a pre-selection stage to examine ``sufficiency'' and ``necessity'' to identify a representative subset of training examples. 
The resulting $\mathtt{FEEDER}$ subset can serve a dual purpose: they can be used as candidate input contexts or to fine-tune the LLM. 
In both scenarios, $\mathtt{FEEDER}$ can significantly reduce the computation costs by substituting the entire training data with $\mathtt{FEEDER}$ subset.

\section{Demonstration Pre-Selector with Sufficiency and Necessity Metrics}
\label{sec:model}
\textbf{Notations.}
Let $X, C$ denote variables for the input and the context (i.e., pre-selected demonstrations).
We introduce $Y$, a boolean variable, to represent whether the corresponding output is correct.
For simplicity, we use $Y_{\bm{x}_n}=1$ to denote $Y=1|X=\bm{x}_n$, meaning that the LLM generates the correct output for the input $\bm{x}_n$.
Similarly, $Y_{\bm{x}_n}=0$, equivalent to $Y=0|X=\bm{x}_n$, indicates that LLM produces an incorrect output for $\bm{x}_n$.
For convenience, we introduce $S$, a variable to record the original status of the LLM before new plug-in and unplug operations (denoted as $\mathtt{plug}(\cdot)$ and $\mathtt{unplug}(\cdot)$ respectively).
The connections between the above operations and the $\mathtt{do}(\cdot)$ operation in causality are discussed in Appendix~\ref{app:related}.

We begin by considering the relationship between two demonstrations, denoted as $(\bm{x}_n,\bm{y}_n)$ and $(\bm{x}_m,\bm{y}_m)$, and the sufficiency and necessity metrics based on particular LLMs as follows.

The sufficiency metric is introduced to assess whether plugging in one data point is adequate for the LLM to produce the correct answer to another data point.
Formally, we define sufficiency between pair of demonstrations as:
\begin{definition}
[Sufficiency Metric]
\label{def:sufficient}
Given tuple $(X,Y,C,S)$,
a training sample $(\bm{x}_n,\bm{y}_n)$ is considered sufficient for another one $(\bm{x}_m,\bm{y}_m)$, if the following equation holds:
\begin{equation}
\begin{aligned}
Y_{\bm{x}_m}=1|\mathtt{plug}((\bm{x}_n,\bm{y}_n));C,S=(Y_{\bm{x}_m}=0),
\end{aligned}
\end{equation}
where $(\bm{x}_n,\bm{y}_n)$ is not included in $C$.
It means that when plugging in $(\bm{x}_n,\bm{y}_n)$, it would correct the LLM's answer to $\bm{x}_m$.
\end{definition}

The necessity metric is introduced to assess whether it is necessary to retain a particular plugged-in data point to maintain the correct output of another data point.
Its formal definition over pairs of demonstrations can be written as:
\begin{definition}
[Necessary Metric]
\label{def:necessary}
Given tuple $(X,Y,C,S)$, a training sample $(\bm{x}_n,\bm{y}_n)$ is considered necessary for $(\bm{x}_m,\bm{y}_m)$, if the following equation holds:
\begin{equation}
Y_{\bm{x}_m}=0|\mathtt{unplug}((\bm{x}_n,\bm{y}_n)); C, S=(Y_{\bm{x}_m}=1),
\end{equation}
where $(\bm{x}_n, \bm{y}_n)$ is included in $C$.
It means that prior to unplugging $(\bm{x}_n,\bm{y}_n)$, the LLM's output is correct.
However, when we do unplug $(\bm{x}_n,\bm{y}_n)$ from the context, it causes the LLM to offer an incorrect output.
\end{definition}
The above definitions of sufficiency and necessity metrics, operating on the instance level, are further clarified with examples in Appendix~\ref{app:instance}.

Extending these definitions to the set level, a sufficient set signifies that plugging in a specific set is adequate to ensure the correct outputs for all examples in another set, while a necessary set implies that removing any example from this set would result in incorrect answers for at least one example within another set.
Formal definitions for the above set-level metrics, along with examples, are available in Appendix~\ref{app:set}.

Taking into account both the sufficiency and necessity metrics, we define a subset of the training dataset $\mathcal{D}_\mathtt{TRAIN}$ as $\mathtt{FEEDER}$ subset $\mathcal{D}_\mathtt{FEEDER}$, if it can be both sufficient and necessary to represent $\mathcal{D}_\mathtt{TRAIN}$.
We provide its formal definition along with some examples in Appendix~\ref{app:feeder}.
Strictly following the above definition to discover a $\mathtt{FEEDER}$ subset is impractical because the constraints are too stringent and the computational costs are prohibitively high with $O(2^N)$ computational complexity to enumerate all the possible subsets.
Therefore, we propose an approximation algorithm for discovering a $\mathtt{FEEDER}$ subset with a tree-based algorithm.



\begin{figure}
    \centering
\captionsetup{font=footnotesize}
\includegraphics[width=0.8\linewidth]{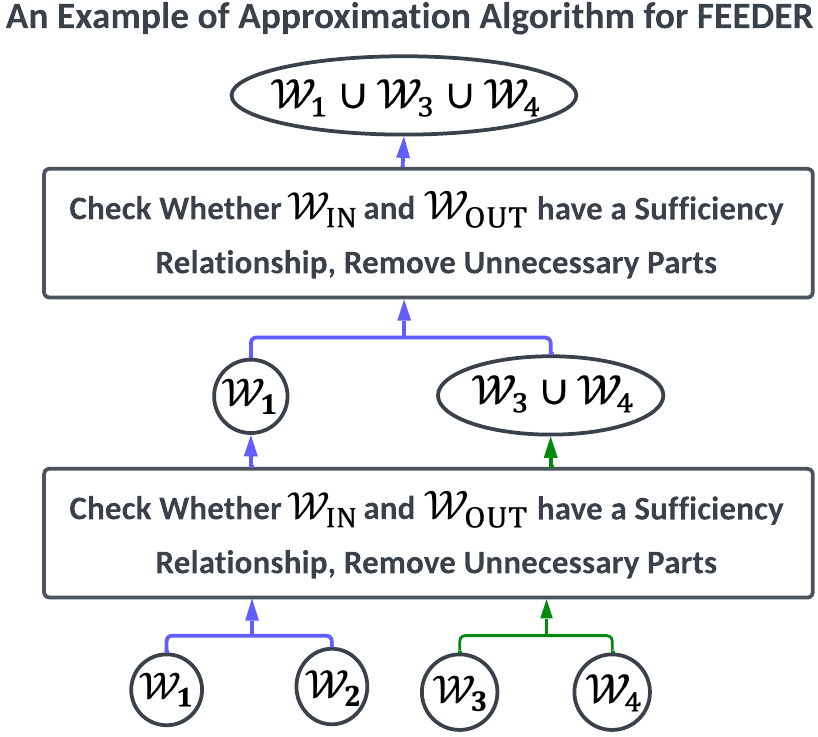}
    \vspace{-3mm}
    \caption{
        An illustrated example of our approximation algorithm for $\mathtt{FEEDER}$.
        At each round (corresponding to each layer of the tree), we check whether there is a sufficiency relationship between each pair of nodes. 
        After each check, we remove those unnecessary parts from $\mathscr{W}_\cdot$.
        }
    \label{fig:treedemo}
    \vspace{-2mm}
\end{figure}


Our tree-based approach leverages the capability of the LLM to assess the sufficiency and necessity for each sample in $\mathcal{D}_\mathtt{TRAIN}$, which can filter out unnecessary portions of $\mathcal{D}_\mathtt{TRAIN}$, while retaining a sufficient subset that effectively represents the entire train data $\mathcal{D}_\mathtt{TRAIN}$.
Concretely, our tree expands from the bottom to the top, where each node represents a set of instances. 
Formally, we use the variable $K$ to represent the depth of the tree, corresponding to the number of rounds.
Specifically, we use $k=1,2,\ldots,K$ to refer to each $k$-th round; and during each $k$-th round, we generate the $(k+1)$-th layer of the tree.
We denote $\mathscr{W}_k$ as the set of nodes after the $k$-th round.
We initialize $\mathscr{W}_0$ by assigning all the samples in $\mathcal{D}_\mathtt{TRAIN}$ as the bottom nodes:
\begin{equation}
\mathscr{W}_0 \coloneqq
\{\mathcal{W}_n \coloneqq \{(\bm{x}_n,\bm{y}_n)\}|(\bm{x}_n,\bm{y}_n)\in \mathcal{D}_\mathtt{TRAIN}\}.
\end{equation}
During each $k$-th round, we generate $\mathscr{W}_k$ from $\mathscr{W}_{k-1}$.
This is achieved by examining the sufficiency relationship between every pair of nodes in $\mathscr{W}_{k-1}$, denoted as $\mathcal{W}_i, \mathcal{W}_j\in\mathscr{W}_{k-1}$.
In this evaluation, we assess whether the following equation holds true by assigning $\mathcal{W}_i$ and $\mathcal{W}_j$ as $\mathcal{W}_\mathtt{IN}$ and $\mathcal{W}_\mathtt{OUT}$, or vice versa:
\begin{equation}
\label{eqn:sufficient}
Y_{(\{\bm{x}_n|\bm{x}_n\in\mathcal{W}_\mathtt{OUT}\})}=\mathbf{1}_{|\mathcal{W}_\mathtt{OUT}|}|\mathtt{plug}(\mathcal{W}_\mathtt{IN});C,S,
\end{equation}
where $C=\emptyset$ and $S$ is loosened to allow for any value.
If the above equation holds, it signifies that plugging in $\mathcal{W}_\mathtt{IN}$ is \emph{sufficient} for the LLM to generate the correct output to any input in $\mathcal{W}_\mathtt{OUT}$.
In other words, once we have $\mathcal{W}_\mathtt{IN}$ included in the plugged-in context, it is \emph{unnecessary} to further include $\mathcal{W}_\mathtt{OUT}$.
Formally, we can derive the following equation that is equivalent to Eq.~(\ref{eqn:sufficient}) as:
\begin{equation}
\label{eqn:unnecessary}
Y_{(\{\bm{x}_n|\bm{x}_n\in\mathcal{W}_\mathtt{OUT}\})}=\mathbf{1}_{|\mathcal{W}_\mathtt{OUT}|}|\mathtt{unplug}(\mathcal{W}_\mathtt{OUT});C,S,
\end{equation}
where $C=(\mathcal{W}_\mathtt{IN}\cup\mathcal{W}_\mathtt{OUT})$ and $S$ is loosened to be any value.

Concretely, there are three possible scenarios by examining each pair of nodes in $\mathscr{W}_{k-1}$ using Eq.~(\ref{eqn:sufficient}):
(i) If both $\mathcal{W}_i$ and $\mathcal{W}_j$ are sufficient for representing each other, then we select the one with fewer elements to append to $\mathscr{W}_k$.
(ii) If only one of $\mathcal{W}_i$ and $\mathcal{W}_j$ is sufficient to represent the other, then we append the sufficient one to $\mathscr{W}_k$.
(iii) If neither $\mathcal{W}_i$ nor $\mathcal{W}_j$ is sufficient to represent the other, we append $\mathcal{W}_i\cup\mathcal{W}_j$ to $\mathscr{W}_k$.
After performing the above calculations for each pair of nodes, we remove them from $\mathscr{W}_{k-1}$.
When there is only one element left in $\mathscr{W}_{k-1}$, it is directly appended to $\mathscr{W}_k$.
This process continues until $\mathscr{W}_\cdot$ contains only one element.

\begin{table*}
        \vspace{-3mm}
	\centering
	\caption{Performance comparisons on text classification datasets are conducted in the in-context learning setting.
    We report both the mean and variance of accuracy using 8 different seeds and 5 different permutations of n-shots.
    Refer to Appendix~\ref{app:addres} for more extended results on LLMs GPT-2 variants and GPT-3 variants, datasets FPB, SST-5, TREC, and demonstration selectors Uncertainty, Clustering and Latent.}
        \vspace{1mm}
	\resizebox{1\textwidth}{!}{
		\begin{tabular}{@{\extracolsep{4pt}}|c|c|c|c|c|c|c|c|c|c|c|c|}
			\toprule
			\multirow{2}{*}{$\Psi_\mathtt{LLM}(\cdot)$} & \multirow{2}{*}{$\mathcal{D}$} & \multirow{2}{*}{$n$} & \multicolumn{3}{c|}{SUBJ} & \multicolumn{3}{c|}{SST-2} & \multicolumn{3}{c|}{COLA} \\
			\cmidrule{4-6}
			\cmidrule{7-9}
			\cmidrule{10-12}
			{} & {} & {} & Random & Similarity & Diversity & Random & Similarity & Diversity & Random & Similarity & Diversity \\
                \midrule
			\multirow{8}{*}{Gemma-2 \scriptsize{(2B)}} & \multirow{4}{*}{$\mathcal{D}_\mathtt{TRAIN}$} & 
			1 & 
                45.0 \scriptsize{(5.9)} & 48.1 \scriptsize{(0.6)} & 48.1 \scriptsize{(0.6)} & 
			51.2 \scriptsize{(6.8)} & 52.2 \scriptsize{(0.8)} & 52.2 \scriptsize{(0.8)} & 
                37.5 \scriptsize{(7.0)} & 40.5 \scriptsize{(1.3)} & 40.5 \scriptsize{(1.3)} \\ 
			{} & {} & 
			2 & 
                62.3 \scriptsize{(6.9)} & 82.5 \scriptsize{(1.8)} & 74.2 \scriptsize{(1.3)} & 
			71.5 \scriptsize{(5.6)} & 78.5 \scriptsize{(1.5)} & 75.9 \scriptsize{(0.9)} & 
                40.6 \scriptsize{(5.9)} & 62.5 \scriptsize{(1.0)} & 61.6 \scriptsize{(0.5)} \\ 
                {} & {} & 
			5 & 
                68.0 \scriptsize{(7.1)} & 91.5 \scriptsize{(1.2)} & 84.2 \scriptsize{(1.6)} & 
			70.2 \scriptsize{(5.6)} & 80.5 \scriptsize{(1.6)} & 80.6 \scriptsize{(0.7)} & 
                46.5 \scriptsize{(5.9)} & 67.2 \scriptsize{(1.8)} & 65.6 \scriptsize{(0.6)} \\ 
                {} & {} & 
			10 & 
                50.3 \scriptsize{(8.2)} & 86.2 \scriptsize{(1.9)} & 85.6 \scriptsize{(0.8)} & 
			68.2 \scriptsize{(4.8)} & 85.5 \scriptsize{(1.5)} & 76.3 \scriptsize{(1.3)} & 
                50.2 \scriptsize{(7.4)} & 69.8 \scriptsize{(1.5)} & 71.5 \scriptsize{(1.2)} \\ 
                \cmidrule{2-12}
                {} & \multirow{4}{*}{$\mathcal{D}_\mathtt{FEEDER}$} & 
			1 & 
                \textbf{48.2} \scriptsize{(4.2)} & \textbf{49.5} \scriptsize{(1.0)} & \textbf{49.5} \scriptsize{(1.0)} & 
			\textbf{52.6} \scriptsize{(4.6)} & \textbf{53.1} \scriptsize{(0.8)} & \textbf{53.1} \scriptsize{(0.8)} & 
                \textbf{38.9} \scriptsize{(5.2)} & 39.6 \scriptsize{(0.8)} & 39.6 \scriptsize{(0.8)} \\ 
			{} & {} & 
			2 & 
                \textbf{65.2} \scriptsize{(2.9)} & \textbf{85.2} \scriptsize{(1.0)} & \textbf{80.3} \scriptsize{(0.8)} & 
			\textbf{74.2} \scriptsize{(4.9)} & \textbf{82.1} \scriptsize{(1.2)} & \textbf{83.0} \scriptsize{(0.7)} & 
                \textbf{52.5} \scriptsize{(2.5)} & \textbf{68.9} \scriptsize{(2.1)} & \textbf{67.8} \scriptsize{(1.5)} \\ 
                {} & {} & 
			5 & 
                \textbf{72.2} \scriptsize{(6.2)} & \textbf{94.5} \scriptsize{(5.3)} & \textbf{85.5} \scriptsize{(0.7)} & 
			\textbf{72.0} \scriptsize{(4.2)} & \textbf{83.6} \scriptsize{(2.1)} & \textbf{84.5} \scriptsize{(1.7)} & 
                \textbf{55.2} \scriptsize{(4.8)} & \textbf{77.6} \scriptsize{(2.5)} & \textbf{73.9} \scriptsize{(2.3)} \\ 
                {} & {} & 
			10 & 
                \textbf{60.5} \scriptsize{(4.0)} & \textbf{86.5} \scriptsize{(2.5)} & \textbf{88.4} \scriptsize{(2.4)} & 
			\textbf{70.5} \scriptsize{(5.6)} & \textbf{92.6} \scriptsize{(2.6)} & \textbf{78.5} \scriptsize{(5.3)} & 
                \textbf{58.6} \scriptsize{(4.6)} & \textbf{75.6} \scriptsize{(2.9)} & \textbf{76.6} \scriptsize{(2.5)} \\ 
                \midrule
			\multirow{8}{*}{GPT-3 \scriptsize{(6B)}} & \multirow{4}{*}{$\mathcal{D}_\mathtt{TRAIN}$} & 
			1 & 
                44.9 \scriptsize{(6.6)} & 49.5 \scriptsize{(0.1)} & 49.5 \scriptsize{(0.1)} & 
			48.2 \scriptsize{(2.9)} & 47.0 \scriptsize{(0.1)} & 47.0 \scriptsize{(0.1)} & 
                38.9 \scriptsize{(6.7)} & 41.2 \scriptsize{(0.2)} & 41.2 \scriptsize{(0.2)} \\ 
			{} & {} & 
			2 & 
                55.4 \scriptsize{(3.5)} & 85.5 \scriptsize{(0.1)} & 86.5 \scriptsize{(0.2)} & 
			68.1 \scriptsize{(4.2)} & 78.7 \scriptsize{(0.2)} & 77.5 \scriptsize{(0.1)} & 
                42.8 \scriptsize{(4.0)} & 45.5 \scriptsize{(0.3)} & 45.6 \scriptsize{(0.2)} \\
                {} & {} & 
			5 & 
                51.2 \scriptsize{(4.4)} & 90.8 \scriptsize{(0.2)} & 82.7 \scriptsize{(0.1)} & 
			75.2 \scriptsize{(3.3)} & 80.7 \scriptsize{(0.1)} & 77.8 \scriptsize{(0.2)} & 
                48.5 \scriptsize{(3.3)} & 51.8 \scriptsize{(0.3)} & 52.1 \scriptsize{(0.2)} \\
                {} & {} & 
			10 & 
                57.7 \scriptsize{(4.8)} & 87.3 \scriptsize{(0.1)} & 85.3 \scriptsize{(0.1)} & 
			72.1 \scriptsize{(3.8)} & 77.6 \scriptsize{(0.1)} & 76.5 \scriptsize{(0.2)} & 
                59.1 \scriptsize{(4.2)} & 60.3 \scriptsize{(0.1)} & 61.0 \scriptsize{(0.2)} \\
                \cmidrule{2-12}
                {} & \multirow{4}{*}{$\mathcal{D}_\mathtt{FEEDER}$} & 
			1 & 
                43.9 \scriptsize{(4.2)} & \textbf{51.2} \scriptsize{(1.0)} & \textbf{51.2} \scriptsize{(1.0)} & 
			\textbf{49.6} \scriptsize{(2.4)} & \textbf{51.3} \scriptsize{(1.6)} & \textbf{51.3} \scriptsize{(1.6)} & 
                \textbf{41.2} \scriptsize{(2.1)} & \textbf{43.8} \scriptsize{(1.8)} & \textbf{43.8} \scriptsize{(1.8)} \\ 
			{} & {} & 
			2 & 
                \textbf{65.7} \scriptsize{(3.0)} & \textbf{91.5} \scriptsize{(1.1)} & \textbf{88.8} \scriptsize{(1.6)} & 
			\textbf{73.5} \scriptsize{(2.5)} & \textbf{85.7} \scriptsize{(4.2)} & 76.1 \scriptsize{(2.1)} & 
                \textbf{61.8} \scriptsize{(2.1)} & \textbf{63.1} \scriptsize{(1.5)} & \textbf{60.1} \scriptsize{(1.4)} \\
                {} & {} & 
			5 & 
                \textbf{53.7} \scriptsize{(3.8)} & \textbf{92.9} \scriptsize{(0.8)} & \textbf{91.5} \scriptsize{(1.4)} & 
			\textbf{77.6} \scriptsize{(4.0)} & \textbf{81.0} \scriptsize{(1.3)} & \textbf{79.4} \scriptsize{(1.0)} & 
                \textbf{50.6} \scriptsize{(2.7)} & \textbf{63.3} \scriptsize{(1.4)} & \textbf{65.8} \scriptsize{(1.4)} \\
                {} & {} & 
			10 & 
                \textbf{58.0} \scriptsize{(3.4)} & \textbf{88.8} \scriptsize{(0.9)} & \textbf{87.8} \scriptsize{(1.2)} & 
			\textbf{83.8} \scriptsize{(2.8)} & \textbf{86.4} \scriptsize{(2.0)} & \textbf{87.2} \scriptsize{(1.3)} & 
                \textbf{59.7} \scriptsize{(3.0)} & \textbf{67.5} \scriptsize{(1.9)} & \textbf{68.4} \scriptsize{(2.2)} \\
                \midrule
			\multirow{8}{*}{Llama-2 \scriptsize{(7B)}} & \multirow{4}{*}{$\mathcal{D}_\mathtt{TRAIN}$} & 
			1 & 
                42.9 \scriptsize{(6.6)} & 48.5 \scriptsize{(0.1)} & 48.5 \scriptsize{(0.1)} & 
			46.2 \scriptsize{(2.7)} & 49.1 \scriptsize{(0.1)} & 49.1 \scriptsize{(0.1)} & 
                40.1 \scriptsize{(6.1)} & 42.0 \scriptsize{(0.2)} & 42.0 \scriptsize{(0.2)} \\ 
			{} & {} & 
			2 & 
                51.9 \scriptsize{(4.4)} & 90.7 \scriptsize{(0.1)} & 85.2 \scriptsize{(0.2)} & 
			67.8 \scriptsize{(3.2)} & 73.5 \scriptsize{(0.2)} & 74.5 \scriptsize{(0.2)} & 
                43.5 \scriptsize{(4.5)} & 47.4 \scriptsize{(0.2)} & 49.6 \scriptsize{(0.1)} \\
                {} & {} & 
			5 & 
                51.6 \scriptsize{(3.2)} & 86.8 \scriptsize{(0.2)} & 82.9 \scriptsize{(0.1)} & 
			74.8 \scriptsize{(3.8)} & 81.2 \scriptsize{(0.2)} & 78.7 \scriptsize{(0.2)} & 
                50.2 \scriptsize{(3.7)} & 52.6 \scriptsize{(0.2)} & 48.2 \scriptsize{(0.3)} \\
                {} & {} & 
			10 & 
                56.1 \scriptsize{(4.6)} & 81.3 \scriptsize{(0.1)} & 85.7 \scriptsize{(0.1)} & 
			73.2 \scriptsize{(3.1)} & 76.3 \scriptsize{(0.1)} & 77.1 \scriptsize{(0.1)} & 
                59.6 \scriptsize{(4.3)} & 55.3 \scriptsize{(0.2)} & 60.0 \scriptsize{(0.4)} \\
                \cmidrule{2-12}
                {} & \multirow{4}{*}{$\mathcal{D}_\mathtt{FEEDER}$} & 
			1 & 
                \textbf{43.8} \scriptsize{(4.3)} & \textbf{49.7} \scriptsize{(1.0)} & \textbf{49.7} \scriptsize{(1.0)} & 
			\textbf{47.2} \scriptsize{(2.4)} & \textbf{50.8} \scriptsize{(1.7)} & \textbf{50.8} \scriptsize{(1.7)} & 
                \textbf{41.2} \scriptsize{(2.1)} & \textbf{43.8} \scriptsize{(1.8)} & \textbf{43.8} \scriptsize{(1.8)} \\ 
			{} & {} & 
			2 & 
                \textbf{54.8} \scriptsize{(3.0)} & \textbf{92.5} \scriptsize{(1.1)} & 84.8 \scriptsize{(0.7)} & 
			\textbf{72.2} \scriptsize{(3.1)} & \textbf{82.5} \scriptsize{(4.0)} & \textbf{80.1} \scriptsize{(2.6)} & 
                \textbf{50.8} \scriptsize{(2.3)} & \textbf{58.6} \scriptsize{(1.7)} & \textbf{53.5} \scriptsize{(1.3)} \\
                {} & {} & 
			5 & 
                \textbf{53.7} \scriptsize{(3.8)} & \textbf{87.9} \scriptsize{(1.8)} & \textbf{91.5} \scriptsize{(1.4)} & 
			\textbf{78.3} \scriptsize{(4.6)} & \textbf{83.2} \scriptsize{(1.1)} & \textbf{80.1} \scriptsize{(1.4)} & 
                \textbf{53.8} \scriptsize{(2.8)} & \textbf{65.3} \scriptsize{(1.6)} & \textbf{61.8} \scriptsize{(1.4)} \\
                {} & {} & 
			10 & 
                \textbf{58.0} \scriptsize{(3.4)} & \textbf{85.8} \scriptsize{(0.9)} & \textbf{87.8} \scriptsize{(1.2)} & 
			\textbf{85.0} \scriptsize{(2.2)} & \textbf{87.1} \scriptsize{(2.2)} & \textbf{86.9} \scriptsize{(1.0)} & 
                \textbf{60.5} \scriptsize{(3.1)} & \textbf{68.0} \scriptsize{(1.7)} & \textbf{68.4} \scriptsize{(2.0)} \\
			\bottomrule
	\end{tabular}
	}
        \label{tab:main}
	\vspace{-3mm}
\end{table*}

Then, we can effectively remove unnecessary samples from $\mathcal{D}_\mathtt{TRAIN}$ by extending the tree structure above from the bottom to the top.
Moreover, the complexity of the above algorithm with $K$ rounds (corresponding to a tree depth of $K+1$) is $O(K\log_2^{|\mathcal{D}_\mathtt{TRAIN}|})$.
In practice, we investigate the impact of varying $K$ and find that setting $K=1$ already produces excellent performance. 
This indicates that a one-shot inference by LLM is sufficient to assess the sufficiency between each pair of samples.
Once the results of pairs of samples are computed, we take their union to form the resulting subset $\mathcal{D}_\mathtt{FEEDER}$.
Figure~\ref{fig:treedemo} illustrates the process for $K=2$.
When $K=1$, the top-level check between $\mathcal{W}_1$ and $\mathcal{W}_3\cup\mathcal{W}_4$ is no longer required.
Instead, the resulting subset is $\mathcal{W}_1\cup\mathcal{W}_3\cup\mathcal{W}_4$ after the bottom-level check.

Moreover, our tree-based algorithm can be iterated across multiple runs to further refine the necessary components.
Specifically, the $\mathtt{FEEDER}$ subset obtained from one run serves as the input for the next.
We denote the number of runs as $R$, leading to an overall complexity of $O(RK\log_2^{|\mathcal{D}_\mathtt{TRAIN}|})$.
Empirical investigations show that varying $R$ has minimal impact, with a single run ($R=1$) already achieving strong performance.
Thus, our implementation complexity simplifies to $O(\log_2^{|\mathcal{D}_\mathtt{TRAIN}|})$.

Our tree-based approximation algorithm is summarized in Algorithm~\ref{algo:framework} in Appendix~\ref{app:sufficiency}.
If we hypothesize that sufficiency is transitive among sets, our tree-based approximation algorithm can ensure that the remaining subset remains sufficient to represent the entire $\mathcal{D}_\mathtt{TRAIN}$, as verified in the following proposition.

\begin{proposition}
If we successively apply our tree-based approximation algorithm on $\mathcal{D}_\mathtt{TRAIN}$ for a certain number of runs to obtain a subset (denoted as
$\widetilde{\mathcal{D}}_\mathtt{FEEDER}$), then $\mathcal{D}_\mathtt{FEEDER}$ is sufficient to represent $\mathcal{D}_\mathtt{TRAIN}$.
\end{proposition}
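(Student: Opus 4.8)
The plan is to prove the statement by a two-level induction that turns the transitivity hypothesis into a telescoping chain of sufficiency relations. Writing $U_k \coloneqq \bigcup_{\mathcal{W}\in\mathscr{W}_k}\mathcal{W}$ for the union of all surviving nodes after the $k$-th round, I would first establish the invariant that $U_k$ is sufficient to represent $\mathcal{D}_\mathtt{TRAIN}$ for every $k$. Since the algorithm halts when a single node remains, the output subset $\widetilde{\mathcal{D}}_\mathtt{FEEDER}$ equals $U_K$, so the invariant evaluated at $k=K$ is exactly the desired conclusion. The base case $U_0=\mathcal{D}_\mathtt{TRAIN}$ holds by reflexivity of sufficiency, i.e.\ a set is sufficient for itself, which is the set-level reading of Eq.~(\ref{eqn:sufficient}) applied with $\mathcal{W}_\mathtt{IN}=\mathcal{W}_\mathtt{OUT}$.

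For the inductive step I would show that $U_k$ is sufficient for $U_{k-1}$ and then combine this with the inductive hypothesis ($U_{k-1}$ sufficient for $\mathcal{D}_\mathtt{TRAIN}$) through the transitivity hypothesis to obtain that $U_k$ is sufficient for $\mathcal{D}_\mathtt{TRAIN}$. To see that $U_k$ is sufficient for $U_{k-1}$, I would decompose $U_{k-1}$ over the nodes processed in the round and handle the three merge cases separately: in case (iii) the union $\mathcal{W}_i\cup\mathcal{W}_j$ is retained, so nothing is lost; in cases (i) and (ii) every dropped node is, by the construction in Eq.~(\ref{eqn:sufficient}), represented by a retained node that is sufficient for it. Because set-level sufficiency demands correctness on \emph{every} element of the target set, it decomposes over unions on the target side, so these per-node relations can be assembled into the single statement that $U_k$ is sufficient for the whole of $U_{k-1}$.

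Finally, to account for the ``certain number of runs'', I would wrap the per-run argument in an outer induction: each run emits a subset that is sufficient for its input by the invariant above, and chaining these relations across runs via transitivity yields sufficiency of the final subset for the original $\mathcal{D}_\mathtt{TRAIN}$. The main obstacle is the within-round assembly on the plugged-in side: the pairwise test only certifies that $\mathcal{W}_i$, rather than the larger $U_k$, is sufficient for the dropped $\mathcal{W}_j$, so concluding that $U_k\supseteq\mathcal{W}_i$ is still sufficient for $\mathcal{W}_j$ requires a monotonicity-type property asserting that enlarging the plugged-in context does not overturn already-correct answers. For LLMs this is not automatic, so I would either record it as an explicit auxiliary assumption alongside transitivity, or restrict the claim to the regime in which each node's sufficiency is evaluated in isolation; granting this, the remainder follows mechanically from reflexivity, the union decomposition on the target side, and the assumed transitivity.
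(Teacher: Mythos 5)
Your proposal is correct and rests on the same two pillars as the paper's own proof---the three-case analysis showing that whatever survives a pairwise comparison is sufficient for what is discarded, together with the assumed transitivity of sufficiency over sets---but you organize the induction differently. The paper argues along tree paths: it establishes that each parent node is sufficient for each of its children (in cases (i)/(ii) the parent \emph{is} the certified-sufficient sibling; in case (iii) the union parent is asserted to be ``inherently'' sufficient for either child), then chains transitivity down every root-to-leaf path to conclude that the root, i.e.\ $\widetilde{\mathcal{D}}_\mathtt{FEEDER}$, is sufficient for every training sample; the target-side union step is invisible there because the leaves are singletons. Your layer-wise invariant on $U_k=\bigcup_{\mathcal{W}\in\mathscr{W}_k}\mathcal{W}$ reaches the same conclusion but makes the plug-side enlargement explicit, and the obstacle you flag is genuine: the check in Eq.~(\ref{eqn:sufficient}) only certifies $\mathcal{W}_i$, not the larger $U_k$, as sufficient for a dropped node, so your assembly step needs a monotonicity property (enlarging the plugged-in context does not overturn correct answers). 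It is worth noting that the paper's path-wise proof silently relies on the very same property: its case-(iii) claim that $\mathcal{W}_i\cup\mathcal{W}_j$ is sufficient for $\mathcal{W}_i$ alone, and the implicit reflexivity for retained nodes, are instances of exactly this assumption, asserted without justification. Your version is therefore more candid about the hypotheses actually in play. You also supply an outer induction over runs $R$, a detail the proposition's statement requires but the paper's proof leaves tacit (its argument covers a single tree build). In short: the paper's chaining is notationally leaner, while your formulation localizes and names the hidden monotonicity assumption and explicitly covers the multi-run claim.
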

\vspace{-1mm}
We offer the proof of the above proposition in Appendix~\ref{app:sufficiency}, which demonstrates that our tree-based algorithm can effectively remove unnecessary samples from $\mathcal{D}_\mathtt{TRAIN}$ while ensuring that the resulting subset remains sufficient to represent the entire training dataset.

Additionally, we present another algorithm for finding an exact sufficient and necessary subset from $\mathcal{D}_\mathtt{TRAIN}$, along with its proof and deployment discussion, in Appendices~\ref{app:exact1}, Appendix~\ref{app:exact2}, and \ref{app:integrate}.

\begin{table*}
        \vspace{-3mm}
	\centering
	\caption{Performance comparisons on reasoning GSM8K dataset and semantic-parsing SMCALFlow dataset are conducted in the in-context learning setting. 
    We report both the mean and variance of accuracy using 8 different seeds and 5 different permutations of n-shots.
    Refer to Appendix~\ref{app:addselector} for more extended results on demonstration selectors Clustering and Latent. 
    }
        \vspace{1mm}
	\resizebox{1.00\textwidth}{!}{
		\begin{tabular}{@{\extracolsep{4pt}}|c|c|c|c|c|c|c|c|c|c|c|c|c|c|c|c|}
			\toprule
			\multirow{2}{*}{$\Psi_\mathtt{LLM}(\cdot)$} & \multirow{2}{*}{$\mathcal{D}$} & \multirow{2}{*}{$n$} & \multicolumn{4}{c|}{GSM8K} & \multicolumn{4}{c|}{SMCALFlow} \\
			\cmidrule{4-7}
			\cmidrule{8-11}
			{} & {} & {} & Random & Similarity & Diversity & Uncertainty & Random & Similarity & Diversity & Uncertainty \\
   \midrule
			\multirow{8}{*}{Gemma-2 \scriptsize{(2B)}} & \multirow{4}{*}{$\mathcal{D}_\mathtt{TRAIN}$} & 
			1 & 
                6.54 \scriptsize{(1.56)} & 15.16 \scriptsize{(0.17)} & 
                15.16 \scriptsize{(0.17)} & 10.51 \scriptsize{(0.78)} &
                8.54 \scriptsize{(1.64)} & 19.12 \scriptsize{(0.15)} &  
                19.12 \scriptsize{(0.15)} & 11.21 \scriptsize{(0.89)} \\
			{} & {} & 
			2 & 
			8.56 \scriptsize{(0.85)} & 18.89 \scriptsize{(0.85)} &
                19.52 \scriptsize{(0.45)} & 17.58 \scriptsize{(0.27)} &
                9.56 \scriptsize{(0.84)} & 20.05 \scriptsize{(0.36)} &
                22.50 \scriptsize{(0.41)} & 13.58 \scriptsize{(0.77)} \\
                {} & {} & 
			5 & 
			15.30 \scriptsize{(2.89)} & 20.31 \scriptsize{(0.58)} &
                21.56 \scriptsize{(0.78)} & 19.30 \scriptsize{(0.90)} &
                18.56 \scriptsize{(4.58)} & 28.65 \scriptsize{(0.95)} &
                27.89 \scriptsize{(1.85)} & 25.22 \scriptsize{(3.56)} \\
                {} & {} & 
			10 & 
			17.45 \scriptsize{(4.21)} & 21.52 \scriptsize{(0.49)} &
                20.85 \scriptsize{(0.55)} & 20.66 \scriptsize{(1.84)} &
                19.85 \scriptsize{(5.21)} & 30.58 \scriptsize{(1.04)} &
                28.56 \scriptsize{(0.58)} & 31.00 \scriptsize{(0.88)} \\
                \cmidrule{2-11}
                {} & \multirow{4}{*}{$\mathcal{D}_\mathtt{FEEDER}$} & 
			1 & 
                \textbf{10.25} \scriptsize{(0.51)} & \textbf{16.25} \scriptsize{(0.21)} &
                \textbf{16.25} \scriptsize{(0.21)} & \textbf{11.12} \scriptsize{(1.78)} &
                \textbf{9.64} \scriptsize{(0.55)} & \textbf{20.54} \scriptsize{(0.66)} & 
                \textbf{20.54} \scriptsize{(0.66)} & \textbf{15.25} \scriptsize{(0.87)} \\ 
			{} & {} & 
			2 & 
			\textbf{13.76} \scriptsize{(0.48)} & \textbf{19.68} \scriptsize{(0.13)} &
                \textbf{20.51} \scriptsize{(1.55)} & 16.85 \scriptsize{(3.65)} &
                \textbf{10.25} \scriptsize{(0.52)} & \textbf{23.73} \scriptsize{(0.18)} &
                \textbf{24.25} \scriptsize{(2.65)} & \textbf{17.58} \scriptsize{(6.58)} \\
                {} & {} & 
			5 & 
			\textbf{18.52} \scriptsize{(5.21)} & \textbf{22.58} \scriptsize{(0.85)} &
                \textbf{22.05} \scriptsize{(0.77)} & \textbf{20.20} \scriptsize{(2.05)} &
                \textbf{20.44} \scriptsize{(5.12)} & \textbf{30.54} \scriptsize{(4.58)} &
                \textbf{32.54} \scriptsize{(5.21)} & \textbf{28.95} \scriptsize{(3.66)} \\
                {} & {} & 
			10 & 
			\textbf{19.20} \scriptsize{(5.22)} & \textbf{22.20} \scriptsize{(1.45)} &
                \textbf{23.52} \scriptsize{(2.20)} & \textbf{22.10} \scriptsize{(6.21)} &
                \textbf{21.52} \scriptsize{(2.01)} & \textbf{31.48} \scriptsize{(1.52)} &
                \textbf{31.02} \scriptsize{(2.54)} & 30.01 \scriptsize{(1.20)} \\
                \midrule
			\multirow{8}{*}{Llama-2 \scriptsize{(7B)}} & \multirow{4}{*}{$\mathcal{D}_\mathtt{TRAIN}$} & 
			1 & 
                2.45 \scriptsize{(0.83)} & 3.52 \scriptsize{(0.88)} & 
                3.52 \scriptsize{(0.88)} & 3.05 \scriptsize{(0.25)} &
                2.25 \scriptsize{(0.64)} & 10.25 \scriptsize{(0.85)} &
                10.25 \scriptsize{(0.85)} & 9.01 \scriptsize{(0.33)} \\ 
			{} & {} & 
			2 & 
			2.65 \scriptsize{(0.77)} & 4.97 \scriptsize{(0.18)} &
                5.62 \scriptsize{(0.85)} & 4.12 \scriptsize{(0.47)} &
                4.97 \scriptsize{(0.84)} & 10.05 \scriptsize{(2.36)} &
                10.52 \scriptsize{(1.45)} & 11.20 \scriptsize{(1.54)} \\
                {} & {} & 
			5 & 
			3.54 \scriptsize{(0.88)} & 8.25 \scriptsize{(0.89)} &
                7.25 \scriptsize{(0.96)} & 7.88 \scriptsize{(0.64)} &
                7.52 \scriptsize{(0.85)} & 16.20 \scriptsize{(1.85)} &
                15.28 \scriptsize{(1.75)} & 15.33 \scriptsize{(1.30)} \\
                {} & {} & 
			10 & 
			4.25 \scriptsize{(0.36)} & 8.85 \scriptsize{(0.85)} &
                9.21 \scriptsize{(1.98)} & 8.10 \scriptsize{(1.11)} &
                8.70 \scriptsize{(1.05)} & 18.95 \scriptsize{(1.25)} &
                19.55 \scriptsize{(2.01)} & 17.52 \scriptsize{(2.66)} \\
                \cmidrule{2-11}
                {} & \multirow{4}{*}{$\mathcal{D}_\mathtt{FEEDER}$} & 
			1 & 
                \textbf{3.54} \scriptsize{(0.51)} & \textbf{4.44} \scriptsize{(0.89)} &
                \textbf{4.44} \scriptsize{(0.89)} & \textbf{3.36} \scriptsize{(0.66)} &
                \textbf{3.64} \scriptsize{(0.55)} & \textbf{10.89} \scriptsize{(0.63)} & 
                \textbf{10.89} \scriptsize{(0.63)} & \textbf{10.02} \scriptsize{(0.69)} \\ 
			{} & {} & 
			2 & 
			\textbf{3.76} \scriptsize{(0.48)} & \textbf{5.68} \scriptsize{(0.13)} &
                \textbf{6.66} \scriptsize{(0.58)} & \textbf{4.85} \scriptsize{(0.88)} &
                \textbf{4.25} \scriptsize{(0.52)} & \textbf{12.03} \scriptsize{(0.16)} &
                \textbf{11.13} \scriptsize{(1.10)} & \textbf{12.50} \scriptsize{(2.01)} \\
                {} & {} & 
			5 & 
			\textbf{4.20} \scriptsize{(1.23)} & \textbf{9.22} \scriptsize{(1.01)} &
                \textbf{8.81} \scriptsize{(0.98)} & \textbf{8.20} \scriptsize{(1.14)} &
                \textbf{8.25} \scriptsize{(1.25)} & \textbf{17.20} \scriptsize{(3.66)} &
                \textbf{16.66} \scriptsize{(5.20)} & \textbf{16.06} \scriptsize{(2.22)} \\
                {} & {} & 
			10 & 
			\textbf{5.02} \scriptsize{(1.51)} & \textbf{10.22} \scriptsize{(1.32)} &
                \textbf{9.25} \scriptsize{(0.79)} & \textbf{9.45} \scriptsize{(0.66)} &
                \textbf{9.20} \scriptsize{(0.77)} & \textbf{20.11} \scriptsize{(2.02)} &
                \textbf{21.25} \scriptsize{(3.36)} & \textbf{20.22} \scriptsize{(4.02)} \\
                \midrule
			\multirow{8}{*}{{Llama-3 \scriptsize{(8B)}}} & \multirow{4}{*}{{$\mathcal{D}_\mathtt{TRAIN}$}} & 
			{1} & 
                {78.24 \scriptsize{(6.56)}} & {79.56 \scriptsize{(3.42)}} & 
                {79.56 \scriptsize{(3.42)}} & {78.42 \scriptsize{(3.76)}} &
                {12.37 \scriptsize{(6.65)}} & {15.64 \scriptsize{(2.34)}} &
                {15.64 \scriptsize{(2.34)}} & {14.35 \scriptsize{(4.56)}} \\ 
			{} & {} & 
			{2} & 
			{79.55 \scriptsize{(7.29)}} & {83.40 \scriptsize{(4.53)}} &
                {83.67 \scriptsize{(4.05)}} & {81.23 \scriptsize{(3.53)}} &
                {13.21 \scriptsize{(4.34)}} & {16.74 \scriptsize{(3.45)}} &
                {17.43 \scriptsize{(3.65)}} & {16.60 \scriptsize{(4.62)}} \\
                {} & {} & 
			{5} & 
			{81.45 \scriptsize{(5.43)}} & {83.47 \scriptsize{(5.63)}} &
                {84.52 \scriptsize{(4.76)}} & {82.34 \scriptsize{(5.34)}} &
                {14.53 \scriptsize{(5.23)}} & {16.54 \scriptsize{(2.35)}} &
                {17.87 \scriptsize{(1.35)}} & {16.52 \scriptsize{(3.21)}} \\
                {} & {} & 
			{10} & 
			{82.31 \scriptsize{(6.34)}} &
                {84.42 \scriptsize{(3.24)}} &
                {84.53 \scriptsize{(4.45)}} & {84.12 \scriptsize{(4.44)}} &
                {14.63 \scriptsize{(4.53)}} & {16.50 \scriptsize{(2.21)}} &
                {18.64 \scriptsize{(2.34)}} & {17.87 \scriptsize{(2.23)}} \\
                \cmidrule{2-11}
                {} & \multirow{4}{*}{{$\mathcal{D}_\mathtt{FEEDER}$}{}} & 
			{1} & 
                {\textbf{80.23} \scriptsize{(4.43)}} & {\textbf{81.21} \scriptsize{(3.45)}} &
               {\textbf{81.21} \scriptsize{(3.45)}} & {\textbf{79.64} \scriptsize{(2.34)}} &
                {\textbf{13.56} \scriptsize{(3.22)}} & {\textbf{16.55} \scriptsize{(2.31)}} & 
                {\textbf{16.55} \scriptsize{(2.31)}} & {\textbf{15.40} \scriptsize{(2.44)}} \\ 
			{} & {} & 
			{2} & 
			{\textbf{82.13} \scriptsize{(4.76)}} & {\textbf{84.43} \scriptsize{(3.23)}} &
                {\textbf{83.88} \scriptsize{(3.33)}} & {\textbf{82.22} \scriptsize{(3.43)}} &
                {\textbf{14.03} \scriptsize{(3.35)}} & {\textbf{17.45} \scriptsize{(3.64)}} &
                {\textbf{17.77} \scriptsize{(3.20)}} & {\textbf{17.00} \scriptsize{(4.57)}} \\
                {} & {} & 
			{5} & 
			{\textbf{82.55} \scriptsize{(5.96)}} & {\textbf{85.03} \scriptsize{(3.66)}} &
                {\textbf{84.77} \scriptsize{(3.77)}} & {\textbf{83.56} \scriptsize{(3.76)}} &
               {\textbf{14.58} \scriptsize{(3.45)}} & {\textbf{18.22} \scriptsize{(2.78)}} &
                {\textbf{18.12} \scriptsize{(2.01)}} & {\textbf{17.53} \scriptsize{(2.55)}} \\
                {} & {} & 
			{10} & 
			{\textbf{84.56} \scriptsize{(2.33)}} & {\textbf{85.79} \scriptsize{(3.56)}} &
                {\textbf{85.43} \scriptsize{(4.55)}} & {\textbf{84.98} \scriptsize{(4.76)}} &
                {\textbf{14.99} \scriptsize{(4.65)}} & {\textbf{16.66} \scriptsize{(2.33)}} &
                {\textbf{18.78} \scriptsize{(3.42)}} & {\textbf{18.01} \scriptsize{(2.44)}}\\
			\bottomrule
		\end{tabular}
	}
	\label{tab:new}
	\vspace{-2mm}
\end{table*}

\section{Experiments}
\label{sec:exp}
Our evaluations are mainly conducted on 6 text classification datasets:
SST-2 \citep{socher2013recursive}, SST-5 \citep{socher2013recursive}, COLA \citep{warstadt2018neural}, TREC \citep{voorhees2000building}, SUBJ \citep{pang2004sentimental}, and FPB \citep{malo2014good}.
These datasets cover a range of tasks from sentiment classification and linguistic analysis to textual entailment.
We also further assess $\mathtt{FEEDER}$ on the reasoning dataset GSM8K \citep{cobbe2021training}, the semantic-parsing dataset SMCALFlow \citep{andreas2020task}, and the scientific question-answering dataset GPQA \citep{rein2024gpqa}.
For each dataset, we directly follow the official splits to obtain $\mathcal{D}_\mathtt{TRAIN}$ and $\mathcal{D}_\mathtt{TEST}$.

To evaluate the performance of our approach, we employed two GPT-2 variants \citep{radford2019language}: one with 335M parameters, and the other with 774M parameters; one GPT-neo with 1.3B parameters; one GPT-3 variant \citep{brown2020language} with 6B parameters; one Gemma-2 variant \citep{team2024gemma} with 2B parameters, one Llama-2 variant \citep{touvron2023llama} with 7B parameters,
Llama-3 variant \citep{meta2024introducing} with 8B parameters,
and Qwen-2.5 variant \citep{qwen2.5} with 32B parameters, as the LLM base.

\begin{table*}
        \vspace{-3mm}
	\centering
	\caption{Performance comparisons on text classification datasets are conducted in our bi-level optimization (i.e., fine-tuning) setting, where we tune the LLMs and evaluate their few-shot inference performance. 
    We report both the mean and variance of accuracy using 8 different seeds and 5 different permutations of n-shots.
    Refer to Appendix~\ref{app:addfinetune} for more extended results on datasets FPB, SST-5, TREC.}
        \vspace{1mm}
	\resizebox{1\textwidth}{!}{
			\begin{tabular}{@{\extracolsep{4pt}}|c|c|c|c|c|c|c|c|c|c|c|c|}
			\toprule
			\multirow{2}{*}{$\Psi_\mathtt{LLM}(\cdot)$} & \multirow{2}{*}{$\mathcal{D}$} & \multirow{2}{*}{$n$} & \multicolumn{3}{c|}{SUBJ} & \multicolumn{3}{c|}{SST-2} & \multicolumn{3}{c|}{COLA} \\
			\cmidrule{4-6}
			\cmidrule{7-9}
			\cmidrule{10-12}
			{} & {} & {} & Random & Similarity & Diversity & Random & Similarity & Diversity & Random & Similarity & Diversity \\
                \midrule
			\multirow{8}{*}{GPT-2 \scriptsize{(0.8B)}} & \multirow{4}{*}{$\mathcal{D}_\mathtt{TRAIN}$} & 
			1 & 
                67.8 \scriptsize{(7.2)} & 83.7 \scriptsize{(0.1)} & 83.7 \scriptsize{(0.1)} & 
			61.3 \scriptsize{(8.1)} & 71.6 \scriptsize{(0.2)} & 71.6 \scriptsize{(0.2)} & 
                59.3 \scriptsize{(5.2)} & 69.4 \scriptsize{(0.2)} & 69.4 \scriptsize{(0.2)} \\ 
			{} & {} & 
			2 & 
                69.1 \scriptsize{(4.3)} & 88.7 \scriptsize{(0.2)} & 86.9 \scriptsize{(0.2)} & 
                73.5 \scriptsize{(3.2)} & 75.8 \scriptsize{(0.5)} & 74.2 \scriptsize{(0.3)} & 
                64.1 \scriptsize{(5.7)} & 74.1 \scriptsize{(0.2)} & 74.0 \scriptsize{(0.3)} \\
                {} & {} & 
			5 & 
			70.8 \scriptsize{(5.1)} & 73.3 \scriptsize{(0.1)} & 72.7 \scriptsize{(0.2)} & 
                74.6 \scriptsize{(4.1)} & 82.8 \scriptsize{(0.3)} & 75.3 \scriptsize{(0.2)} & 
                60.9 \scriptsize{(4.6)} & 76.7 \scriptsize{(0.3)} & 76.4 \scriptsize{(0.3)} \\
                {} & {} & 
			10 & 
                89.2 \scriptsize{(4.1)} & 94.0 \scriptsize{(0.2)} & 91.6 \scriptsize{(0.2)}  & 
                70.8 \scriptsize{(2.9)} & 84.5 \scriptsize{(0.2)} & 77.4 \scriptsize{(0.2)} & 
                70.7 \scriptsize{(3.8)} & 75.7 \scriptsize{(0.3)} & 77.6 \scriptsize{(0.5)} \\
                \cmidrule{2-12}
                {} & \multirow{4}{*}{$\mathcal{D}_\mathtt{FEEDER}$} & 
			1 & 
                \textbf{93.0} \scriptsize{(4.3)} & \textbf{93.5} \scriptsize{(1.8)} & \textbf{93.5} \scriptsize{(1.8)} & 
			\textbf{89.5} \scriptsize{(4.3)} & \textbf{88.4} \scriptsize{(1.6)} & \textbf{88.4} \scriptsize{(1.6)} & 
                \textbf{81.5} \scriptsize{(3.3)} & \textbf{82.6} \scriptsize{(1.4)} & \textbf{82.6} \scriptsize{(1.4)} \\ 
			{} & {} & 
			2 & 
			\textbf{96.1} \scriptsize{(3.8)} & \textbf{94.1} \scriptsize{(1.3)} & \textbf{92.6} \scriptsize{(1.2)} & 
                \textbf{92.6} \scriptsize{(2.8)} & \textbf{94.4} \scriptsize{(0.6)} & \textbf{93.8} \scriptsize{(0.7)} & 
                \textbf{90.2} \scriptsize{(3.8)} & \textbf{91.2} \scriptsize{(1.7)} & \textbf{90.8} \scriptsize{(0.9)} \\
                {} & {} & 
			5 & 
			\textbf{85.7} \scriptsize{(3.5)} & \textbf{94.7} \scriptsize{(1.5)} & \textbf{94.1} \scriptsize{(1.1)} & 
                \textbf{87.5} \scriptsize{(4.1)} & \textbf{92.5} \scriptsize{(1.7)} & \textbf{93.7} \scriptsize{(1.7)} &   
                \textbf{87.7} \scriptsize{(3.2)} & \textbf{89.6} \scriptsize{(2.7)} & \textbf{90.0} \scriptsize{(3.9)} \\
                {} & {} & 
			10 & 
			\textbf{90.5} \scriptsize{(3.3)} & \textbf{95.5} \scriptsize{(1.3)} & \textbf{95.6} \scriptsize{(1.4)} & 
                \textbf{91.9} \scriptsize{(2.9)} & \textbf{93.1} \scriptsize{(2.1)} & \textbf{89.0} \scriptsize{(1.4)} &  
                \textbf{91.3} \scriptsize{(3.5)} & \textbf{92.4} \scriptsize{(1.8)} & \textbf{93.5} \scriptsize{(1.9)} \\
			\midrule
			\multirow{8}{*}{GPT-neo \scriptsize{(1.3B)}} & \multirow{4}{*}{$\mathcal{D}_\mathtt{TRAIN}$} & 
			1 & 
                72.7 \scriptsize{(5.2)} & 91.0 \scriptsize{(0.1)} & 91.0 \scriptsize{(0.1)} & 
			65.4 \scriptsize{(4.4)} & 72.5 \scriptsize{(0.2)} & 72.5 \scriptsize{(0.2)} & 
                61.8 \scriptsize{(5.2)} & 68.5 \scriptsize{(0.2)} & 68.5 \scriptsize{(0.2)} \\ 
			{} & {} & 
			2 & 
                74.1 \scriptsize{(4.3)} & 93.7 \scriptsize{(0.2)} & 92.1 \scriptsize{(0.3)} & 
                74.5 \scriptsize{(3.2)} & 75.8 \scriptsize{(0.4)} & 76.4 \scriptsize{(0.5)} & 
                70.8 \scriptsize{(5.7)} & 63.9 \scriptsize{(0.2)} & 64.3 \scriptsize{(0.4)} \\
                {} & {} & 
			5 & 
			71.8 \scriptsize{(5.5)} & 74.8 \scriptsize{(0.3)} & 75.8 \scriptsize{(0.4)} & 
                73.6 \scriptsize{(4.1)} & 77.8 \scriptsize{(0.3)} & 76.3 \scriptsize{(0.2)} & 
                68.7 \scriptsize{(4.7)} & 75.4 \scriptsize{(0.8)} & 74.9 \scriptsize{(0.4)} \\
                {} & {} & 
			10 & 
                90.2 \scriptsize{(4.0)} & 93.6 \scriptsize{(0.4)} & 92.5 \scriptsize{(0.4)}  & 
                72.8 \scriptsize{(2.9)} & 81.5 \scriptsize{(0.2)} & 78.8 \scriptsize{(0.2)} & 
                72.7 \scriptsize{(3.4)} & 76.7 \scriptsize{(0.4)} & 77.5 \scriptsize{(0.7)} \\
                \cmidrule{2-12}
                {} & \multirow{4}{*}{$\mathcal{D}_\mathtt{FEEDER}$} & 
			1 & 
                \textbf{93.5} \scriptsize{(4.3)} & \textbf{94.1} \scriptsize{(1.4)} & \textbf{94.1} \scriptsize{(1.4)} & 
			\textbf{91.2} \scriptsize{(3.8)} & \textbf{92.7} \scriptsize{(1.5)} & \textbf{92.7} \scriptsize{(1.5)} & 
                \textbf{86.8} \scriptsize{(3.3)} & \textbf{89.6} \scriptsize{(0.9)} & \textbf{89.6} \scriptsize{(0.9)} \\ 
			{} & {} & 
			2 & 
			\textbf{95.5} \scriptsize{(3.9)} & \textbf{95.1} \scriptsize{(1.3)} & \textbf{96.6} \scriptsize{(1.8)} & 
                \textbf{88.6} \scriptsize{(2.4)} & \textbf{93.4} \scriptsize{(0.6)} & \textbf{94.2} \scriptsize{(0.5)} & 
                \textbf{84.2} \scriptsize{(3.7)} & \textbf{87.3} \scriptsize{(0.7)} & \textbf{89.5} \scriptsize{(0.9)} \\
                {} & {} & 
			5 & 
			\textbf{91.5} \scriptsize{(3.8)} & \textbf{95.7} \scriptsize{(1.0)} & \textbf{95.3} \scriptsize{(1.4)} & 
                \textbf{89.4} \scriptsize{(2.7)} & \textbf{92.5} \scriptsize{(1.8)} & \textbf{93.7} \scriptsize{(1.9)} &   
                \textbf{89.7} \scriptsize{(3.2)} & \textbf{92.4} \scriptsize{(2.3)} & \textbf{90.8} \scriptsize{(1.8)} \\
                {} & {} & 
			10 & 
			\textbf{92.8} \scriptsize{(3.1)} & \textbf{96.0} \scriptsize{(1.4)} & \textbf{94.8} \scriptsize{(1.2)} & 
                \textbf{90.9} \scriptsize{(2.0)} & \textbf{93.6} \scriptsize{(1.6)} & \textbf{92.2} \scriptsize{(1.8)} &  
                \textbf{89.3} \scriptsize{(3.9)} & \textbf{93.5} \scriptsize{(1.7)} & \textbf{94.4} \scriptsize{(1.6)} \\
			\bottomrule
		\end{tabular}
	}
	\label{tab:ft}
	\vspace{-4mm}
\end{table*}

\subsection{Performance on In-context Learning}
\label{sec:incontext}
Since our $\mathcal{D}_\mathtt{FEEDER}$ works as a pre-selector, when applied in the in-context learning setting, we propose incorporating demonstration selectors into $\mathtt{FEEDER}$.
In other words, our evaluations follow an ablative approach, with the baseline involving the direct application of these demonstration selectors on $\mathcal{D}_\mathtt{TRAIN}$.
This baseline can be regarded as treating these methods both as pre-selectors and demonstration selectors.
As discussed in Section~\ref{sec:model}, our $\mathcal{D}_\mathtt{FEEDER}$ is applied using only a one-shot inference check (i.e., $K=1$) and a single-round run (i.e., $R=1$), unless otherwise stated. 

Concretely, we conducted an evaluation of $\mathtt{FEEDER}$ in conjunction with following six demonstration selectors:
(i) Random is the random selector, which selects input demonstration randomly from the selection pool;
(ii) Similarity \citep{sorensen2022information,gonen2022demystifying} is the similarity-based selector, which selects relevant demonstrations in terms of the cosine similarity metric over the embedding vectors generated by a sentence transformer \citep{reimers2019sentence};
(iii) Diversity \citep{ye2022complementary} is the diversity-based selector , which selects similar and diverse demonstrations in terms of maximal marginal relevance \citep{carbonell1998use};
(iv) Uncertainty \citep{koksal2022meal} is the uncertainty-based selector  that conducts selections according to their uncertainty metric;
(v) Clustering \citep{zhou2023survival} is the clustering-based selector that searches demonstrations by clustering.
(vi) Latent \citep{wang2024large} uses LLMs as latent variable models to learn latent variables for down-streaming in-context learning.
Please refer to Appendix~\ref{app:retriever} for detailed descriptions of the above demonstration selectors.

Experimental results regarding in-context learning performance are reported in Tables~\ref{tab:main}, \ref{tab:new}, and \ref{tab:qwen}.
We also present the reduction of our $\mathtt{FEEDER}$ in Figure~\ref{fig:round}.
Our findings are summarized as follows.
\begin{itemize}[noitemsep,topsep = 3pt,leftmargin =10pt,parsep=0pt,partopsep=0pt]
\item Our $\mathtt{FEEDER}$ working as a pre-selector can improve the performance of diverse demonstration selectors. 
Particularly, we can observe that in many cases, our $\mathtt{FEEDER}$ can incorporate with the similarity-based demonstration selector (i.e., $\mathcal{D}_\mathtt{FEEDER}$+Similarity) can achieve comparable or even better performance than diversity or clustering-based demonstration selectors (i.e., $\mathcal{D}_\mathtt{TRAIN}$+Diversity). 
\item By combining the results from Table~\ref{tab:main} and Figure~\ref{fig:round}, it is evident that $\mathtt{FEEDER}$ enables the retention of almost half of the training samples while consistently achieving superior or comparable performance, which provides evidence supporting the efficacy of $\mathtt{FEEDER}$ as a proficient data pre-selection method for in-context learning.
\item We also evaluate the few-shot performance on more complex tasks using LLMs Gemma-2, with the corresponding results reported in Table~\ref{tab:new}. 
The table demonstrates that, even though LLMs may not perform well on these tasks, our $\mathtt{FEEDER}$ can consistently enhance their performance.
\item $\mathtt{FEEDER}$ performs well with a large number of shots. In Table~\ref{tab:main}, we can observe many cases where the LLM performance drops when the number of shots increases from 5 to 10 (e.g., Llama-2 on the COLA dataset). 
This may be caused by the introduction of noisy and redundant shots. 
Our $\mathtt{FEEDER}$ addresses this issue by evaluating the sufficiency and necessity of each demonstration.
To further verify this claim, in Appendix~\ref{app:robust}, we duplicate the training dataset and evaluate GPT-neo's performance. 
Our results show that $\mathtt{FEEDER}$ minimizes the negative impact on the LLM, supporting its effectiveness in managing demonstration quality.
\end{itemize}


\begin{table*}
        \vspace{-3mm}
	\centering
	\caption{Performance comparisons on reasoning GSM8K dataset and scientific question-answering GPQA dataset are conducted in the in-context learning setting. 
    We report both the mean and variance of accuracy using 8 different seeds and 5 different permutations of n-shots.
    }
        \vspace{1mm}
	\resizebox{1.00\textwidth}{!}{
		\begin{tabular}{@{\extracolsep{4pt}}|c|c|c|c|c|c|c|c|c|c|c|c|c|c|c|c|}
			\toprule
			\multirow{2}{*}{$\Psi_\mathtt{LLM}(\cdot)$} & \multirow{2}{*}{$\mathcal{D}$} & \multirow{2}{*}{$n$} & \multicolumn{4}{c|}{GSM8K} & \multicolumn{4}{c|}{GPQA} \\
			\cmidrule{4-7}
			\cmidrule{8-11}
			{} & {} & {} & Random & Similarity & Diversity & Uncertainty & Random & Similarity & Diversity & Uncertainty \\
   \midrule
			\multirow{8}{*}{Qwen-2.5 \scriptsize{(32B)}} & \multirow{4}{*}{$\mathcal{D}_\mathtt{TRAIN}$} & 
			1 & 
                81.01 \scriptsize{(4.31)} & 82.14 \scriptsize{(0.18)} & 
                82.14 \scriptsize{(0.18)} & 
                81.02 \scriptsize{(5.24)} &
                40.10 \scriptsize{(1.64)} & 42.04 \scriptsize{(0.19)} &  
                42.04 \scriptsize{(0.19)} & 42.00 \scriptsize{(5.04} \\
			{} & {} & 
			2 & 
			83.25 \scriptsize{(5.63)} &         84.19 \scriptsize{(0.56)} &
                84.68 \scriptsize{(0.56)} & 84.01 \scriptsize{(4.27)} &
                42.06 \scriptsize{(3.65)} & 44.52 \scriptsize{(0.36)} &
                45.23 \scriptsize{(0.41)} & 42.85 \scriptsize{(7.77)} \\
                {} & {} & 
			5 & 
			85.20 \scriptsize{(3.52)} & 
                89.52 \scriptsize{(0.58)} &
                89.61 \scriptsize{(0.84)} & 86.02 \scriptsize{(3.90)} &
                43.33 \scriptsize{(5.88)} & 46.20 \scriptsize{(1.05)} &
                46.81 \scriptsize{(1.95)} & 46.70 \scriptsize{(5.66)} \\
                {} & {} & 
			10 & 
			86.21 \scriptsize{(4.28)} &         90.41 \scriptsize{(0.63)} &
                89.92 \scriptsize{(0.58)} & 90.20 \scriptsize{(3.84)} &
                43.23 \scriptsize{(5.21)} & 46.85 \scriptsize{(1.04)} &
                46.71 \scriptsize{(0.98)} & 45.28 \scriptsize{(0.89)} \\
                \cmidrule{2-11}
                {} & \multirow{4}{*}{$\mathcal{D}_\mathtt{FEEDER}$} & 
			1 & 
                \textbf{81.85} \scriptsize{(2.61)} & \textbf{83.52} \scriptsize{(0.11)} &
                \textbf{83.52} \scriptsize{(0.11)} & \textbf{81.91} \scriptsize{(3.78)} &
                \textbf{40.51} \scriptsize{(5.95)} & \textbf{42.71} \scriptsize{(0.66)} & 
                \textbf{42.71} \scriptsize{(0.66)} & \textbf{41.22} \scriptsize{(6.87)} \\ 
			{} & {} & 
			2 & 
			\textbf{84.52} \scriptsize{(6.48)} & 
                \textbf{85.71} \scriptsize{(0.33)} &
                \textbf{86.03} \scriptsize{(0.56)} & 
                \textbf{84.80} \scriptsize{(5.65)} &
                \textbf{43.50} \scriptsize{(0.52)} & \textbf{45.81} \scriptsize{(0.38)} &
                \textbf{45.66} \scriptsize{(3.65)} & \textbf{43.20} \scriptsize{(6.58)} \\
                {} & {} & 
			5 & 
			\textbf{86.70} \scriptsize{(7.11)} &               \textbf{90.20} \scriptsize{(0.96)} &
                \textbf{90.12} \scriptsize{(0.88)} & \textbf{88.82} \scriptsize{(4.38)} &
                \textbf{44.94} \scriptsize{(4.12)} & \textbf{48.03} \scriptsize{(5.58)} &
                \textbf{48.08} \scriptsize{(6.21)} & \textbf{48.95} \scriptsize{(4.66)} \\
                {} & {} & 
			10 & 
			\textbf{87.58} \scriptsize{(7.22)} &               \textbf{91.23} \scriptsize{(3.45)} &
                \textbf{90.71} \scriptsize{(4.24)} & \textbf{91.96} \scriptsize{(7.71)} &
                \textbf{44.55} \scriptsize{(2.11)} & \textbf{47.80} \scriptsize{(4.52)} &
                \textbf{47.93} \scriptsize{(3.54)} & 
                \textbf{47.91} \scriptsize{(6.26)} 
                \\
			\bottomrule
		\end{tabular}
	}
	\label{tab:qwen}
\end{table*}

\subsection{Performance on Bi-level Optimization}
\label{sec:ft}
Here, we would like to verify whether our $\mathtt{FEEDER}$ can be beneficial to the fine-tuning setting.
As formulated in Section~\ref{sec:bilevel}, our pre-selection and the LLM fine-tuning can be integrated into a bi-level optimization framework.
Specifically, in our evaluation, we assess the performance of $\mathtt{FEEDER}$ by initially fine-tuning the LLM on the pre-selected $\mathcal{D}_\mathtt{FEEDER}$.
Subsequently, we use the tuned LLM to generate a new $\mathcal{D}_\mathtt{FEEDER}$, and evaluate the LLM within the in-context learning setting, using the new $\mathcal{D}_\mathtt{FEEDER}$ as the selection pool.
For comparison, our baseline is to initially fine-tune the LLM with $\mathcal{D}_\mathtt{TRAIN}$ and then evaluate the LLM within ICL, using $\mathcal{D}_\mathtt{TRAIN}$ as the selection pool.

\begin{figure*}[t]
    \centering
    \includegraphics[width=1\textwidth]{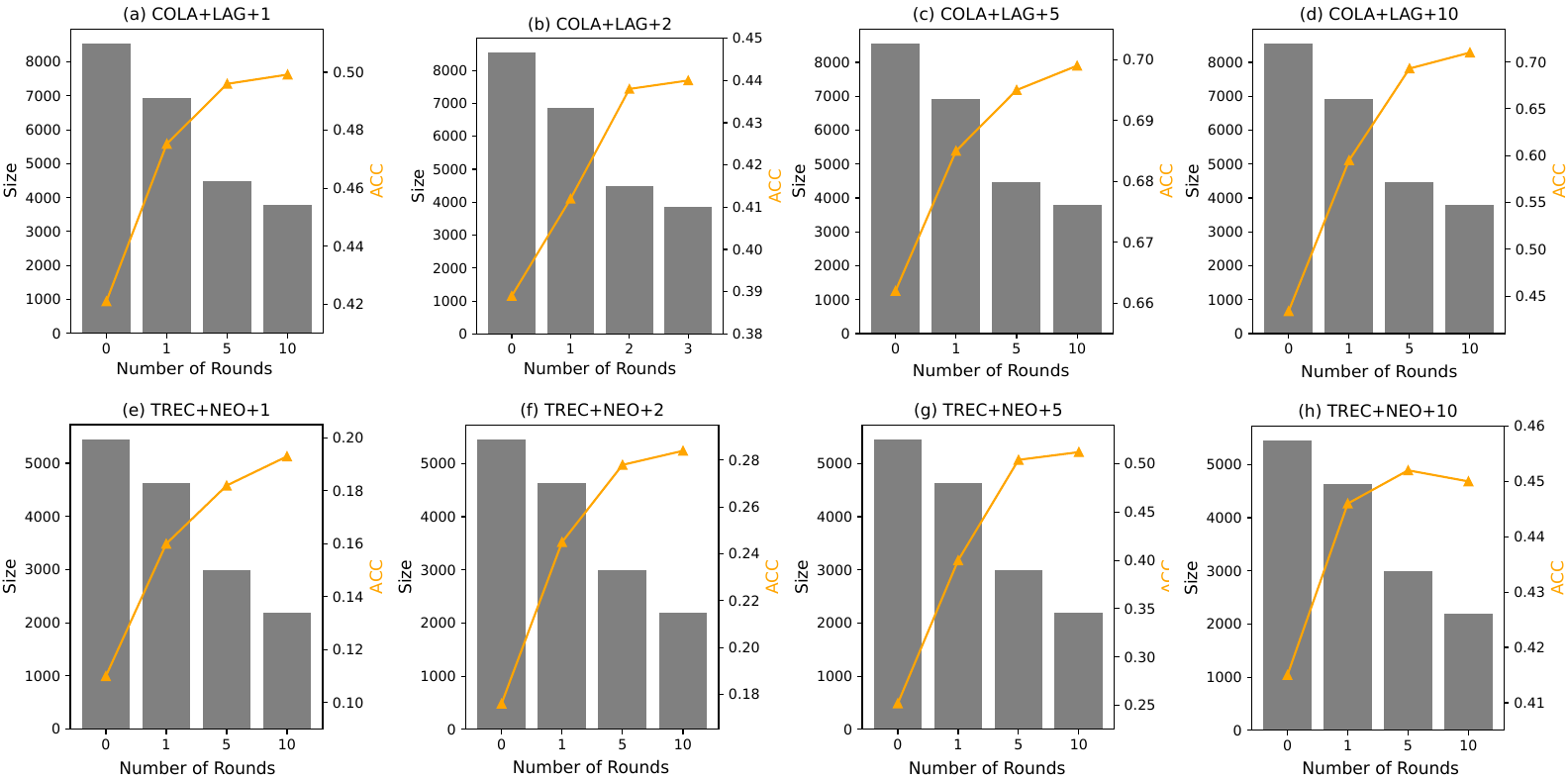}
    \vspace{-8mm}
    \caption{Performance comparisons for running our approximation algorithm to pre-select $\mathtt{FEEDER}$ with different runs $R$ are evaluated in terms of accuracy (denoted as ACC) with Random as the demonstration selector and the size of the resulting $\mathtt{FEEDER}$ subset (denoted as Size).
    Each sub-figure is entitled with Dataset+LLM base+n shots. 
    }
    \label{fig:round}
    \vspace{-3mm}
\end{figure*}

The corresponding experimental results are reported in Table~\ref{tab:ft}.
$\mathtt{FEEDER}$ can improve the LLM fine-tuning performance within our bi-level framework.
This emphasizes the potential for achieving enhanced performance by utilizing a small yet high-quality dataset for fine-tuning, while simultaneously reducing computational expenses, which aligns with the core-set selection literature \citep{feldman2020introduction,guo2022deepcore}.
By combining the results from Table~\ref{tab:main} and Table~\ref{tab:ft}, we can see that fine-tuning LLMs provides greater performance improvements compared to augmenting LLMs with contexts. 
Furthermore, our $\mathtt{FEEDER}$ achieves even better performance gains in the fine-tuning setting.
One potential explanation is that in this scenario, fine-tuning can leverage input demonstrations more effectively than prompting can, and our high-quality $\mathtt{FEEDER}$ can therefore provide greater benefits.

We also investigate the performance of $\mathtt{FEEDER}$ with varying the number of runs (i.e., $R$) and different tree depths (i.e., the number of rounds $K$) in Appendix~\ref{app:depth}.
$\mathtt{FEEDER}$'s performance first rises and then drops with increasing $R$ or $K$.
These findings further verify that identifying a representative subset from the training dataset - either by increasing the number of runs or the number of rounds -  can significantly enhance the performance of the LLM. 
However, overly narrow subsets may limit the potential performance gains.
This trend may be summarized as a trade-off between data quantity and data quality, and similar observations are reported in \citet{chen2023alpagasus}.



Furthermore, we analyze scaling up $\mathtt{FEEDER}$ into larger LLMs and real-world datasets in Appendix~\ref{app:scale}.


\subsection{Case Study}
Subsequently, we conduct a case study to substantiate the central proposition of this paper: whether the assessment of the quality of demonstrations should depend on the specific LLM in use.
We consider the factual error made by Google Bard in the first demo\footnote{\url{https://www.theverge.com/2023/2/8/23590864/google-ai-chatbot-bard-mistake}\\ \url{-error-exoplanet-demo}}.
We further prompt $\mathtt{gpt}$-$\mathtt{3.5}$-$\mathtt{turbo}$ to generate 5 \emph{sufficient and necessary statements} for the fact.
We evaluate separately using these statements as a prompt to $\mathtt{gpt}$-$\mathtt{3.5}$-$\mathtt{turbo}$, and find that either one of the generated statements is sufficient and necessary to answer the question ``What took the very first pictures of a planet outside of our own solar system?''
We then evaluate the performance of $\mathtt{gpt}$-$\mathtt{j}$-$\mathtt{6b}$ with the above 5 statements, and find that only the 1-st or the 5-th statement is sufficient and necessary instance to answer the above question. 
Combining the results of $\mathtt{gpt}$-$\mathtt{j}$-$\mathtt{6b}$ and $\mathtt{gpt}$-$\mathtt{3.5}$-$\mathtt{turbo}$ verifies one of the core insights of our paper: the evaluation of prompting a demonstration should consider the specific LLM in use.
Refer to the detailed description in Appendix~\ref{app:case}.

\subsection{Time Complexity}
\label{app:time}
Here, we provide some empirical evidence of the time complexity of the proposed method.
As summarized in Algorithm~\ref{algo:framework} in Appendix~\ref{app:sufficiency} and discussed in Section~\ref{sec:model}, there are two key hyperparameter settings for reducing the time cost of Algorithm~\ref{algo:framework}: the number of iterations (i.e., $K$) and the number of rounds (i.e., $R$).
In our main experiment, we set $K=1$ and $R=1$, meaning that we perform only one-shot inference for sufficiency checks in each round of Algorithm~\ref{algo:framework} and execute the algorithm for a single round.
We investigate the performance differences arising from varying $K$ and $R$ in Appendix~\ref{app:depth} and Section~\ref{sec:ft} respectively.
Additionally, we report the time complexity associated with different values of $K$ and $R$ on COLA and TREC datasets in Figure~\ref{fig:time}.
From the figure, we observe that as the number of samples decreases, the time consumption of Algorithm~\ref{algo:framework} also decreases.
Furthermore, we note that increasing the number of rounds has a great impact on reducing the time complexity.
This may be attributed to the fact that two-shot inference for sufficiency-satisfying Eq.~(\ref{eqn:sufficient})-is significantly more challenging than a one-shot inference check.
By further combining Figure~\ref{fig:time} and Figure~\ref{fig:round} in Section~\ref{sec:incontext}, we observe that the time consumption is nearly linear with respect to the size of the data samples.

\begin{figure}[h]
    \centering
    \includegraphics[width=1\linewidth]{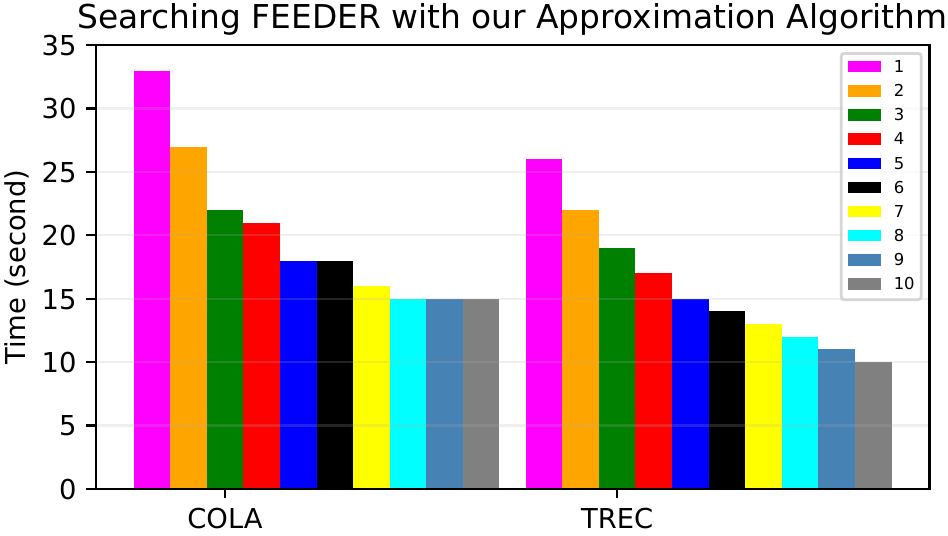}
    \vspace{-5mm}
    \caption{Time complexity of searching $\mathtt{FEEDER}$ using our approximation algorithm for different runs on COLA and TREC datasets using varying the number of rounds $R$ and varying the number of rounds $K$.
    }
    \label{fig:time}
    \vspace{-1mm}
\end{figure}

Consider two hyper-parameter settings in our approximation algorithm: the number of rounds $R$ and the number of iterations $K$, both designed to balance performance and computational efficiency.
As detailed in Figures~\ref{fig:round} and \ref{fig:time}, the time complexity of our method scales almost linearly with the number of samples, making these parameters critical for practical applications.
While Figure~\ref{fig:round} illustrates the performance changes across different values of $R$, Figure~\ref{fig:iteration} in Appendix~\ref{app:iteration} explores the impact of varying $K$.
Interestingly, Figure~\ref{fig:iteration} reveals a similar but more robust trend compared to Figure~\ref{fig:round}. 
This robustness could be attributed to the inherent strength of the two-shot inference process for sufficiency, as defined in Eq.~(\ref{eqn:sufficient}). The two-shot inference introduces a more rigorous evaluation mechanism than the one-shot inference check, enabling a stronger filtering of unnecessary samples. 

In practice, we deploy our approximation algorithm with $K=1$ and $R=1$, which provides an optimal trade-off between computational efficiency and model performance. 
This configuration ensures that the pre-selection process remains practical while maintaining competitive accuracy.

\section{Conclusion and Future Work}
In this paper, we present a novel demonstration \emph{pre-selector} $\mathtt{FEEDER}$, designed to leverage LLM's capabilities and domain knowledge to identify high-quality demonstration and provide an approximate approach for their discovery.
Our experimental results showcase the significant advantages of $\mathtt{FEEDER}$ across diverse LLM bases in both ICL and bi-level optimization for fine-tuning LLMs. 
In the future, it would be valuable to explore the use of larger LLMs and extend the applications of $\mathtt{FEEDER}$ to areas such as data safety and data management.


\section*{Acknowledgment}
The Shanghai Jiao Tong University team is partially supported by National Key R\&D Program of China (No. 2022ZD0114804), Shanghai Municipal Science and Technology Major Project (No. 2021SHZDZX0102) and National Natural Science Foundation of China (No. 62322603, No. 62177033).
This work is also supported in part by National Natural Science Foundation of China (No. 623B2002).

\section*{Impact Statement}
The objective of this paper is to develop a pre-selection method over the training data as an intermediary process to enhance the accuracy of factual knowledge in the model's outputs.
It is essential to note that our $\mathtt{FEEDER}$, pre-selected from the training dataset without external trustworthy corpora, relies on the capability of the given LLM itself.
This characteristic may potentially amplify existing biases in the model weights of LLMs.

\clearpage
\bibliography{main}
\bibliographystyle{icml2025}
\clearpage

%
%

\renewcommand{\thesection}{A\arabic{section}}
\renewcommand{\thesubsection}{\thesection.\arabic{subsection}}

\newcommand{\beginsupplementary}{%
	\setcounter{table}{0}
	\renewcommand{\thetable}{A\arabic{table}}%
	\setcounter{figure}{0}
	\renewcommand{\thefigure}{A\arabic{figure}}%
	\setcounter{section}{0}
	\renewcommand{\thedefinition}{A\arabic{definition}}%
	\setcounter{definition}{0}
}
\newcommand{\suptitl}{Supplementary Materials for:\\ \titl}
\newcommand{\suptitlrunning}{Supplementary Materials: \titlshort}
\beginsupplementary
\icmltitlerunning{\suptitlrunning}


\setcounter{proposition}{0}
\setcounter{corollary}{0}
\setcounter{theorem}{0}
\setcounter{lemma}{0}

\section{Connections to Existing Approaches}
\label{app:related}
\subsection{Connections to Causality}
The concepts of sufficiency and necessity have a broad application scope, especially in causality \citep{pearl1980causality,pearl2009causality}, where sufficiency and necessity are proposed to define the causal relationship between two binary variables.
Let $X$ and $Y$ denote a pair of variables.
Then, the probability of sufficiency measures the capacity of setting $X=\mathtt{true}$ to produce $Y=\mathtt{true}$, while the probability of necessity measures the changing the value of $X$ from $X=\mathtt{true}$ to $X=\mathtt{false}$ would cause the value of $Y$ changing from $Y=\mathtt{true}$ to $Y=\mathtt{false}$

In this paper, we adopt the concepts of sufficiency and necessity in the context of demonstration selection, where we investigate whether prompting certain data points is sufficient or necessary for the given LLM to generate correct answers for input questions. 
For this purpose, we introduce the plugging-in operation, denoted as $\mathtt{plug}(\cdot)$, to examine sufficiency, and the unplugging operation, denoted as $\mathtt{unplug}(\cdot)$, to examine necessity. 
Both of these operations are analogous to the do operation in causality, denoted as $\mathtt{do}(\cdot)$, which indicates that the system operates under the condition that certain variables are controlled by external forces.
To be more specific, in our setting, the external force can be explained as follows.
We have the choice to either plug in or unplug certain data points, thereby altering what is already plugged into the LLM. 
Our approach shares similarities with the counterfactual idea in causality, which explores hypothetical scenarios by considering what might happen if certain variables are set with different values. 
In our case, we investigate the impact of plugged-in data that includes data points differing from the historical (i.e., factual) setting. 
Notably, a significant distinction between our approach and the counterfactual setting in causality lies in the fact that we do not need to estimate ``counterfactual'' situations; instead, we can directly conduct evaluations.

\subsection{Connections to Demonstration Selection}
In the context of few-shot inference, a central challenge lies in selecting the appropriate training samples as extra input during inference.
These samples are often referred to as demonstrations or prompts \citep{levy2022diverse,liu2021makes,dong2022survey}.
The underlying assumption is that the training dataset serves as a support set \citep{yosida2012functional} for test samples. 
Previous studies \citep{wang2022training,rubin2021learning} have demonstrated that introducing similar training samples can enhance the performance of LLMs on test instances. 
\citep{gao2023ambiguity} enhances these approaches by retrieving candidates whose ground label lies in top-2 zero-shot predictions.
However, as pointed out in \citep{levy2022diverse}, existing methods often treat each data point in isolation, neglecting the collective impact of multiple data points. 
For instance, retrievers based on similarity metrics may select redundant data points together.
To address this limitation, \citep{levy2022diverse} proposes to consider the diversity among the data points, to avoid the case where too ``similar'' data points are selected together.
Further, \citep{rubin2021learning} trains an LLM as a contrastive scorer as well as a demonstration referrer, and \citep{li2023unified} advances this framework through unified training across various datasets.

In this paper, we present a novel perspective, asserting that the quality of demonstrations is contingent on the specific LLM in use. 
Namely, a high-quality demonstration for one LLM might be deemed low-quality for another. 
Leveraging this insight, we introduce sufficiency and necessity as new set-level metrics. 
Our approach offers several advantages: 
Firstly, sufficiency and necessity measure the quality of data points based on the specific LLM, in contrast to generic similarity and diversity metrics. 
Secondly, our proposed sufficiency and necessity extend to the set level, enabling the consideration of data points as a cohesive whole. 
In our framework, ``similarity'' is akin to ``sufficiency'' signifying that plugging in data points can enhance LLM performance, while ``diversity'' is akin to ``necessity'' suggesting that each data point should play an indispensable role.

Recent studies \citep{xia2024less,marion2023less} focus on mining training examples for fine-tuning on specific tasks, while \citep{wang2024large} extends this idea to in-context learning.
Unlike these approaches, which use LLMs to select demonstrations tailored to specific test datasets, our work leverages LLMs as demonstration pre-selectors, identifying a core subset of the training data that remains independent of the test datasets, thus eliminating the need for re-computation across different test datasets.

\subsection{Connections to Core-set Selection}
Core-set selection \citep{feldman2020introduction,guo2022deepcore}, a longstanding problem in machine learning, focuses on identifying a subset of the most informative training samples. 
Previous research \citep{dor2020active} has surveyed and evaluated state-of-the-art approaches for models like BERT \citep{devlin2018bert}, encompassing strategies such as random sampling, uncertainty-sampling (using entropy metric) \citep{lewis1995sequential,gal2016dropout} and diversity sampling (using diversity metric) \citep{gissin2019discriminative}.

$\mathtt{FEEDER}$, in contrast to these prior papers mainly using active learning, is designed to select core sets, which can serve as additional input contexts (i.e., in-context learning setting) or be used for fine-tuning LLMs (i.e., fine-tuning setting).
$\mathtt{FEEDER}$ defines ``informative training samples'' as those samples that specifically enhance the LLM's performance on a given task.

\subsection{Connections to Prompt Optimization}
Prompting provides a natural way for humans to interact with; and due to its flexibility, prompting has been widely used as a genre method for various natural language processing tasks \citep{schick2020exploiting,brown2020language,sanh2021multitask}.
However, using prompting effectively with LLMs requires careful design, either done manually \citep{reynolds2021prompt} or automatically \citep{gao2021prompting,shin2020autoprompt}, as LLMs do not interpret prompts in the same way humans do \citep{webson2021prompt,lu2021fantastically}.
While numerous successful methods \citep{liu2021makes,lester2021power,qin2021learning} for prompt tuning rely on optimizing a continuous space through gradient-based techniques, this approach becomes impractical as many powerful LLMs are only accessible through APIs that may not offer gradient access.

Our $\mathtt{FEEDER}$ approach can be seen as a discrete pre-search method for prompts, distinct from existing methods for prompt generation \citep{gao2021prompting,ben2021pada}, prompt scoring \citep{davison2019commonsense}, and prompt paraphrasing \citep{jiang2020can,yuan2021bartscore}, which aim to optimize instructions by directly searching the natural language hypothesis space. 
Instead, our approach leverages the causal dependencies among candidate demonstrations, focusing on searching for the most informative demonstrations as prompts, in terms of sufficiency and necessity.

\section{A Family of Analysis on Sufficiency and Necessity Metrics}
\textbf{Notations.}
Let $X,C$ denote variables for the input and the context (i.e., previously plugged-in demonstrations).
We use $Y$, a boolean variable, to denote whether the output to the input is correct.
Concretely, we use $Y_{\bm{x}}=1$ to denote $Y=1|X=\bm{x}$, meaning that the LLM generates the correct output to the input $\bm{x}$.
Similarly, $Y_{\bm{x}}=0$, equivalent to $Y=0|X=\bm{x}$, indicates that the LLM produces the incorrect output to $\bm{x}$.
For clarity, we introduce $S$, a variable to record the original status of the LLM before \emph{new} plug-in and unplug operations (denoted as $\mathtt{plug}(\cdot)$ and $\mathtt{unplug}(\cdot)$ respectively), e.g., $C=((\bm{x},\bm{y})),S=(Y_{\bm{x}}=1)$ means that without plugging-in any new data or unplugging any plugged-in data, the plugged-in data is $(\bm{x},\bm{y})$ and the LLM's performance is $Y_{\bm{x}}=1$.

\subsection{Instance Level Metrics}
\label{app:instance}
Here, two instances are considered, represented as $(\bm{x}_n,\bm{y}_n)$ and $(\bm{x}_m,\bm{y}_m)$.

The sufficiency metric is introduced to assess whether plugging in one data point is sufficient to enable the LLM to generate the correct output for the other one.
Formally, the sufficiency relationship is defined as follows: 

\begin{definition}
[Instance-level Sufficiency Metric]
\label{def:appsufficient}
Given tuple $(X,Y,C,S)$, data point $(\bm{x}_n,\bm{y}_n)$ is sufficient for $(\bm{x}_m,\bm{y}_m)$, if the following equation holds:
\begin{equation}
\begin{aligned}
Y_{\bm{x}_m}=1|\mathtt{plug}((\bm{x}_n,\bm{y}_n)); C,S,
\end{aligned}
\end{equation}
where $(\bm{x}_n,\bm{y}_n)$ is not included in $C$ and $S$ can be any value.
It means that when plugging in $(\bm{x}_n,\bm{y}_n)$, it would correct the LLM's answer to $\bm{x}_m$.
\end{definition}

\textbf{Example A1.}
Let $\bm{x}_m,\bm{x}_n$ be \emph{Which country does Sherlock Holmes live?} and \emph{Which city does Sherlock Holmes live?}
Then, after informing the LLM of the correct answer of $\bm{x}_n$ (e.g., $\bm{y}_n$ is \emph{Sherlock Holmes lives in London}), the LLM can deduce the correct answer of $\bm{x}_m$ (e.g., $\bm{y}_m$ is \emph{Sherlock Holmes lives in the United Kingdom}).
In this case, the LLM is using the city where Sherlock Holmes lives to infer the country in which he lives. 

The necessity metric is introduced to assess whether the presence of one plugged-in data point is necessary for preserving the correct output in relation to another. 
Formally, this is expressed as:
\begin{definition}
[Instance-level Necessity Metric]
\label{def:appnecessary}
Given tuple $(X,Y,C,S)$, we say that data point $(\bm{x}_n,\bm{y}_n)$ is necessary for $(\bm{x}_m,\bm{y}_m)$, if the following equation holds:
\begin{equation}
Y_{\bm{x}_m}=0|\mathtt{unplug}((\bm{x}_n,\bm{y}_n)); C, S,
\end{equation}
where $(\bm{x}_n,\bm{y}_n)$ is included in $C$ and $S=(Y_{\bm{x}_m}=1)$.
It means that before unplugging $(\bm{x}_n,\bm{y}_n)$, the LLM's answer to $\bm{x}_m$ is correct.
However, when we do unplug $(\bm{x}_n,\bm{y}_n)$, it causes the LLM to offer an incorrect output to $\bm{x}_m$.
\end{definition}

\textbf{Example A2.}
Consider $\bm{x}_m$ as \emph{Which city does Sherlock Holmes live?} and $\bm{x}_n$ as \emph{What is the detailed address of Sherlock Holmes lives?}. Assume the LLM has no prior knowledge about Sherlock Holmes until the introduction of the plugged-in data $(\bm{x}_n,\bm{y}_n)$, where $\bm{y}_n$ is \emph{221B Baker Street, London}. After plugging in $(\bm{x}_n,\bm{y}_n)$, the LLM is capable of generating the correct output $\bm{y}_m$ (e.g., \emph{Sherlock Holmes lives in London}) in response to $\bm{x}_m$. 
If we were to unplug $(\bm{x}_n,\bm{y}_n)$, the LLM would provide an incorrect output for $\bm{x}_m$, such as \emph{Sherlock Holmes lives in New York}.

In an ideal scenario, ensuring optimal LLM performance entails the extraction of data points that are both sufficient and necessary.

\begin{definition}
[Instance-level Sufficiency and Necessity Metric]
Given tuple $(X,Y,C)$, we say that data point $(\bm{x}_n,\bm{y}_n)$ is both sufficient and necessary for $(\bm{x}_m,\bm{y}_m)$, if the following equation holds:
\begin{equation}
\begin{aligned}
&\Big(Y_{\bm{x}_m}=1|\mathtt{plug}((\bm{x}_n,\bm{y}_n));C=\emptyset\Big) \\
\wedge &\Big(Y_{\bm{x}_m}=0|\mathtt{unplug}((\bm{x}_n,\bm{y}_n)); C=((\bm{x}_n,\bm{y}_n))\Big), 
\end{aligned}
\end{equation}
which indicates that plugging in data point $(\bm{x}_n,\bm{y}_n)$ can respond to the LLM's answering $\bm{x}_m$ in both ways. 
We omit $S$ here, because we can derive the original status of the necessary instance based on the condition of the sufficiency instance.
\end{definition}

We further demonstrate that neither of the aforementioned quantities (i.e., sufficiency and necessity) is adequate for determining the other, indicating that they are not entirely independent.
This is illustrated in the following lemma.

\begin{lemma}
[Connection between Sufficiency and Necessity]
Supposing that we only consider using the data point $(\bm{x}_n,\bm{y}_n)$ as the plug in data, and only care about the LLM's performance regarding the input question $\bm{x}_m$, then overall there are only two situations here: 
(i) $(\bm{x}_n,\bm{y}_n)$ is plugged-in, and (ii) $(\bm{x}_n,\bm{y}_n)$ is not plugged-in.
Based on the above assumption, we re-write (i) as plugging-in $(\bm{x}_n,\bm{y}_n)$ when there is no plugged-in data (i.e., $\mathtt{plug}((\bm{x}_n,\bm{y}_n));C=\emptyset$, and re-write (ii) as unplugging $(\bm{x}_n,\bm{y}_n)$ when there is plugged-in data $(\bm{x}_n,\bm{y}_n)$ (i.e., $\mathtt{unplug}((\bm{x}_n,\bm{y}_n)); C=((\bm{x}_n,\bm{y}_n))$).
For convenience, we use $E^*$ and $E$ to denote (i) and (ii) respectively; and we use $Y^*$ and $Y$ to denote $Y_{\bm{x}_1}=1$ and $Y_{\bm{x}_1}=0$.
Then, we have: ${E}^*\vee {E} =\mathtt{true}$, ${E}^*\wedge {E} =\mathtt{false}$,  $Y^*\vee Y =\mathtt{true}$, $Y^*\wedge Y =\mathtt{false}$.

We define $\mathtt{PS}$ as the probability of being sufficient as:
\begin{equation}
\begin{aligned}
\mathtt{PS} \coloneqq & \mathtt{Pr}\Big(Y_{\bm{x}_m}=1|\mathtt{plug}((\bm{x}_n,\bm{y}_n));{C}=\emptyset\Big)\\
= &\mathtt{Pr}(Y^*|{E}^*).
\end{aligned}
\end{equation}
We define $\mathtt{PN}$ as the probability of being necessary as: 
\begin{equation}
\begin{aligned}
\mathtt{PN}\coloneqq & \mathtt{Pr}\Big(Y_{\bm{x}_m}=0|\mathtt{unplug}((\bm{x}_n,\bm{y}_n)); C=((\bm{x}_n,\bm{y}_n))\Big)\\
=&\mathtt{Pr}(Y|{E}).
\end{aligned}
\end{equation}
We further define $\mathtt{PNS}$ as the probability of being sufficient and necessary as:
\begin{equation}
\begin{aligned}
\mathtt{PNS} \coloneqq 
\mathtt{Pr}(Y^*|{E}^*,Y|{E}).
\end{aligned}
\end{equation}
Then, $\mathtt{PS}$, $\mathtt{PN}$, $\mathtt{PSN}$ satisfy the following relationship:
\begin{equation}
\mathtt{PSN}=\mathtt{Pr}(Y,E)\cdot\mathtt{PS}+\mathtt{Pr}(Y^*,E^*)\cdot\mathtt{PN}.
\end{equation}
\end{lemma}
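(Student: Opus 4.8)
The plan is to recognize this identity as a finite, assumption-light decomposition of the counterfactual quantity $\mathtt{PNS}$ over the two exhaustive and mutually exclusive factual scenarios $E^*$ and $E$, in the same spirit as Pearl's bridge between the probabilities of necessity and sufficiency. First I would make the underlying sample space explicit by attaching two potential outcomes to each unit: $U$, the value of $Y_{\bm{x}_m}$ that \emph{would} be realized under $\mathtt{plug}((\bm{x}_n,\bm{y}_n))$ with $C=\emptyset$, and $V$, the value that \emph{would} be realized under $\mathtt{unplug}((\bm{x}_n,\bm{y}_n))$ with $C=((\bm{x}_n,\bm{y}_n))$. With this reading, $Y^*|E^*$ is the event $\{U=1\}$, $Y|E$ is the event $\{V=0\}$, and $\mathtt{PNS}=\mathtt{Pr}(U=1,V=0)$ is exactly the probability that the demonstration is both sufficient (plugging it in yields a correct answer) and necessary (removing it breaks the correct answer).

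The core of the argument is the law of total probability applied to the indicator of which scenario is factual, using $E^*\vee E=\mathtt{true}$ and $E^*\wedge E=\mathtt{false}$. I would write
\[
\mathtt{PNS}=\mathtt{Pr}(U=1,V=0)=\mathtt{Pr}(U=1,V=0,E^*)+\mathtt{Pr}(U=1,V=0,E),
\]
and then factor each summand by the chain rule. For the $E^*$ term, $\mathtt{Pr}(U=1,V=0,E^*)=\mathtt{Pr}(U=1,E^*)\,\mathtt{Pr}(V=0\mid U=1,E^*)$; here consistency identifies $\mathtt{Pr}(U=1,E^*)$ with the factual joint $\mathtt{Pr}(Y^*,E^*)$, while the remaining factor is precisely $\mathtt{PN}$, read as the probability that unplugging would flip a correct answer to an incorrect one. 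Symmetrically, $\mathtt{Pr}(U=1,V=0,E)=\mathtt{Pr}(V=0,E)\,\mathtt{Pr}(U=1\mid V=0,E)$, where $\mathtt{Pr}(V=0,E)=\mathtt{Pr}(Y,E)$ by consistency and the conditional factor is $\mathtt{PS}$. Summing the two products yields the claimed $\mathtt{PNS}=\mathtt{Pr}(Y,E)\cdot\mathtt{PS}+\mathtt{Pr}(Y^*,E^*)\cdot\mathtt{PN}$.

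The delicate point — and the step I would spend the most care on — is pinning down the conditioning semantics so that the two factors really are $\mathtt{PS}$ and $\mathtt{PN}$ as defined. The shorthand $\mathtt{Pr}(Y^*\mid E^*)$ and $\mathtt{Pr}(Y\mid E)$ must be understood as conditioned on the \emph{complementary} factual outcome (the sufficiency probability is evaluated where the answer was originally wrong, the necessity probability where it was originally right), which is exactly what Definitions~\ref{def:sufficient} and~\ref{def:necessary} encode through the status variable $S$. Once this is made explicit, consistency (the factual outcome coincides with the corresponding potential outcome) does all the remaining work, and I would emphasize that the identity is \emph{exact}: unlike monotonicity-based bounds, it requires no independence or exogeneity assumption between $U$ and $V$, since the decomposition is purely a partition of the joint event $\{U=1,V=0\}$ according to which of $E^*,E$ is realized.
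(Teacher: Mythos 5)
Your proof is correct and follows essentially the same route as the paper's: both partition the joint sufficiency-and-necessity event over the two disjoint, exhaustive factual scenarios $E^*$ and $E$ (the paper by conjoining with the tautology $E^*\vee E$ and distributing; you by the law of total probability) and then factor each summand via the chain rule into $\mathtt{Pr}(Y,E)\cdot\mathtt{PS}$ and $\mathtt{Pr}(Y^*,E^*)\cdot\mathtt{PN}$. Your potential-outcomes formalization merely makes explicit the consistency step and the $S$-conditioning semantics that the paper's event-logic proof leaves implicit, which is a welcome clarification rather than a different argument.
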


\begin{proof}
Based on the earlier delineation of $Y^*$, $Y$, $E^*$, and $E$, we can express:
\begin{equation}
\begin{aligned}
& Y^*|E^*\wedge Y|E = (Y^*|E^*\wedge Y|E) \wedge (E \vee C^*)\\
=& (Y^*|{E^*}\wedge Y \wedge E)\vee (Y|E\wedge Y^*\wedge E^* ).
\end{aligned}
\end{equation}
Taking probabilities on both sides and using the disjointedness of $E^*$ and $E$, we have:
\begin{equation}
\begin{aligned}
\mathtt{PSN}=& 
\mathtt{Pr}(Y^*|{E^*},Y|E)\\
=&\mathtt{Pr}(Y|E,Y^*,E^*)+\mathtt{Pr}(Y^*|{E^*},Y,E)\\
= & \mathtt{Pr}(Y,E)\cdot\mathtt{PS}+\mathtt{Pr}(Y^*,E^*)\cdot\mathtt{PN}.
\end{aligned}
\end{equation}
\end{proof}

\subsection{Set Level Metrics}
\label{app:set}
We extend Definitions~\ref{def:appsufficient} and \ref{def:appnecessary} to the set level as:

\begin{definition}
[Set-level Sufficiency Metric]
\label{def:appsufficientset}
Given tuple $(X,Y,$ $C,S)$,
the input set $\mathcal{D}_\mathtt{IN}$ is sufficient for output set $\mathcal{D}_\mathtt{OUT}$, if the following equation holds:
\begin{equation}
\begin{aligned}
Y_{(\{\bm{x}_n|\bm{x}_n\in\mathcal{D}_\mathtt{OUT}\})}=\mathbf{1}_{|\mathcal{D}_\mathtt{OUT}|}|\mathtt{plug}(\mathcal{D}_\mathtt{IN}); C,S.
\end{aligned}
\end{equation}
where $\mathcal{D}_\mathtt{IN}$ is not included in $C$ and $S$ can be any value.
$\mathbf{1}_{|\mathcal{D}_\mathtt{OUT}|}$ denotes $\mathbf{1}_{|\mathcal{D}_\mathtt{OUT}|}$-dimensional vectors whose elements are all 1s.
It indicates that 
when plugging in $\mathcal{D}_\mathtt{IN}$, it guarantees that the LLM's output to any input question in $\mathcal{D}_\mathtt{OUT}$ is correct.
\end{definition}

\begin{definition}
[Set-level Necessity Metric]
\label{def:appnecessaryset}
Given tuple $(X,Y,$ $C,S)$,
the input set $\mathcal{D}_\mathtt{IN}$ is necessary for output set $\mathcal{D}_\mathtt{OUT}$, if the following equation holds:
\begin{equation}
\begin{aligned}
Y_{(\{\bm{x}_n|\bm{x}_n\in\mathcal{D}_\mathtt{OUT}\})}\neq \mathbf{1}_{|\mathcal{D}_\mathtt{OUT}|}|\mathtt{unplug}(\mathcal{D}'_\mathtt{IN}); C,S,
\end{aligned}
\end{equation}
where $\mathcal{D}_\mathtt{IN}$ is included in $C$, $S=(Y_{(\{\bm{x}_n|\bm{x}_n\in\mathcal{D}_\mathtt{OUT}\})}=\mathbf{1}_{|\mathcal{D}_\mathtt{OUT}|})$, and
$\mathcal{D}'_\mathtt{IN}$ can be any subset of $\mathcal{D}_\mathtt{IN}$.
$\mathbf{1}_{|\mathcal{D}_\mathtt{OUT}|}$ denotes $\mathbf{1}_{|\mathcal{D}_\mathtt{OUT}|}$-dimensional vectors whose elements are all 1s.
It means that before unplugging any subset of $\mathcal{D}_\mathtt{IN}$, there is plugged-in data $\mathcal{D}_\mathtt{IN}$ and the LLM's output to any input in $\mathcal{D}_\mathtt{OUT}$ is correct.
When we unplug any subset of $\mathcal{D}_\mathtt{IN}$, then it would cause the LLM's output to at least one input in $\mathcal{D}_\mathtt{OUT}$ to be incorrect.
\end{definition}
From the above description, when we refer to a set as a sufficient set, we are stating that the collective set of data points is sufficient.
On the other hand, when we characterize a set as a necessary set, we mean that each individual data point within the set is necessary.

\textbf{Example A3.}
Let $\mathcal{D}_\mathtt{OUT}=\{(\bm{x}_m,\bm{y}_m)\}$ and $\mathcal{D}_\mathtt{IN}=\{(\bm{x}_i,$ $\bm{y}_i),$ $(\bm{x}_j,\bm{y}_j)\}$.
We assign $\bm{x}_m$ and $\bm{y}_m$ as \emph{Which country does Sherlock Holmes live?} and \emph{Sherlock Holmes lives in the United Kingdom}.
Let $\bm{x}_i$ and $\bm{y}_i$ denote \emph{Which street does Sherlock Holmes live?} and  \emph{Baker street}.
We assign $\bm{x}_j$ and $\bm{y}_j$ as \emph{Where is Baker street?} and \emph{Bake street is located in London.}
Supposing that the LLM does not know that Bake Street is located in the United Kingdom, then solely plugging in either $(\bm{x}_i, \bm{y}_i)$ or $(\bm{x}_j, \bm{y}_j)$ is not sufficient for the LLM to get the right answer to the input question $\bm{x}_m$.
In this regard, it is easy to derive that $\mathcal{D}_\mathtt{IN}$ is both a sufficient and necessary set for $\mathcal{D}_\mathtt{OUT}$ when both (i) plugging in $\mathcal{D}_\mathtt{IN}$ is sufficient to maintain the right answer for $\mathcal{D}_\mathtt{OUT}$; and (ii) unplugging any subset of $\mathcal{D}_\mathtt{IN}$ can not maintain the right answer for $\mathcal{D}_\mathtt{OUT}$, are satisfied.

\subsection{$\mathtt{FEEDER}$ Set}
\label{app:feeder}
Next, we explore the problem of defining a subset within the given dataset $\mathcal{D}_\mathtt{TRAIN}$ that is both sufficient and necessary to represent $\mathcal{D}_\mathtt{TRAIN}$.
This subset is termed $\mathtt{FEEDER}$ ($\mathtt{FE}$w yet $\mathtt{E}$ssential $\mathtt{DE}$monst$\mathtt{R}$ations).

\begin{definition}
[\textbf{$\mathtt{FEEDER}$} Set] 
\label{def:feeder}
Given tuple $(X,Y,C,S)$ and $\mathcal{D}_\mathtt{TRAIN}$, a subset of $\mathcal{D}_\mathtt{TRAIN}$, is considered as a $\mathtt{FEEDER}$ set (denoted as $\mathcal{D}_\mathtt{FEEDER}$), if the following conditions are satisfied:
\begin{itemize}[topsep = 1pt,leftmargin =15pt]
\item[(i)] $Y_{(\bm{x}_1\ldots,\bm{x}_N)}=\mathbf{1}_N|\mathtt{plug}(\mathcal{D}_\mathtt{FEEDER});C=\emptyset,S=(Y_{(\bm{x}_1\ldots,\bm{x}_N)}\neq\mathbf{1}_N)$ holds.
\item[(ii)] $Y_{(\bm{x}_1\ldots,\bm{x}_N)}\neq \mathbf{1}_N|\mathtt{unplug}(\mathcal{D}'_\mathtt{FEEDER});C=\mathcal{D}_\mathtt{TRAIN},S=(Y_{(\bm{x}_1\ldots,\bm{x}_N)}=\mathbf{1}_N)$ holds for any subset of $\mathcal{D}_\mathtt{FEEDER}$ (denoted as $\mathcal{D}'_\mathtt{FEEDER}$).  
\end{itemize}
$\mathbf{1}_N$ denotes $N$-dimensional vectors whose elements are all 1s.
(i) and (ii) respectively imply that plugging in $\mathcal{D}_\mathtt{FEEDER}$ is sufficient and necessary to maintain the LLM generating correct output. 
\label{def:plugin}
\end{definition}
\textbf{Example A4.} If we merge $\mathcal{D}_\mathtt{IN}$ and $\mathcal{D}_\mathtt{OUT}$ exemplified in Example A3 into one set $\mathcal{D}$, namely let $\mathcal{D}=\mathcal{D}_\mathtt{IN}\cup\mathcal{D}_\mathtt{OUT}$, then in this case, it is easy to derive that $\mathcal{D}_\mathtt{IN}$ is a $\mathtt{FEEDER}$ set (denoted as $\mathcal{D}_\mathtt{FEEDER}$) for $\mathcal{D}$.

The above definition of the $\mathtt{FEEDER}$ set is overly strict, as identifying it would require enumerating all possible subsets.
To mitigate this complexity, we introduce the following approximation algorithm.

\begin{figure*}
    \centering
    \includegraphics[width=1\textwidth]{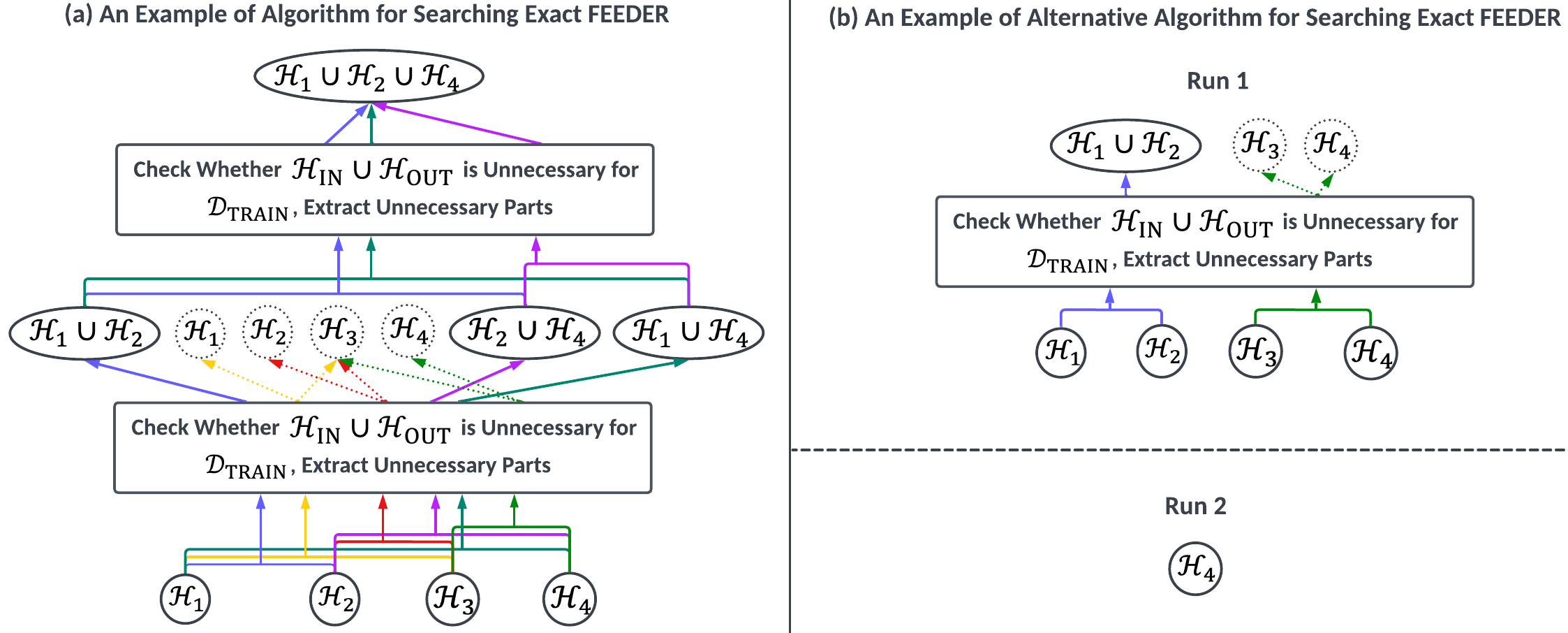}
    \vspace{-2mm}
    \caption{
        An illustrated example of our algorithm for deriving an exact $\mathtt{FEEDER}$ set.
        As shown in (a), we check the necessity of the conjunction of each pair of nodes, and we do not remove them from $\mathscr{H}_\cdot$; instead, we assign $\mathtt{MAINTAIN}$ signals to newly generated nodes and the node with the maximum size, and those nodes without $\mathtt{MAINTAIN}$ signals, circled with dashed lines, would be removed from $\mathcal{H}_\cdot$.
        In (b), we propose an alternative algorithm by removing nodes after checking the necessity, and we repeat the above process for multiple runs, at the beginning of each run, we unplug all the previously selected data points. 
        The repeat should stop until there is no or only one node in $\mathscr{H}_0$ (i.e., $\mathcal{H}_4$), and therefore, 
        the result in (b) is $\mathcal{H}_1\cup\mathcal{H}_2\cup\mathcal{H}_4$, same as the result in (a).
        }
    \label{fig:tree}
    \vspace{-2mm}
\end{figure*}

\begin{algorithm}[h]
	\caption{Approximation Algorithm for $\mathtt{FEEDER}$}
	\label{algo:framework}
	{\bfseries Input:}
	Training dataset $\mathcal{D}_\mathtt{TRAIN}$.\\
	{\bfseries Output:}
	An approximated $\mathtt{FEEDER}$ set $\widetilde{\mathcal{D}}_\mathtt{FEEDER}$.\\
    Initialize $k=1$.\\
    Initialize $\mathscr{W}_0 =
\{\mathcal{W}_n = \{(\bm{x}_n,\bm{y}_n)\}|(\bm{x}_n,\bm{y}_n)\in \mathcal{D}_\mathtt{TRAIN}\}$.\\
    \Repeat{$|\mathscr{W}_k| = 1 \text{, and we assume the current round is }K$}{
    \For{each pair $(\mathcal{W}_i,\mathcal{W}_j)$ where $\mathcal{W}_i,\mathcal{W}_j\in\mathscr{W}_{k-1}$}{
        Check $Y_{(\{\bm{x}_n|\bm{x}_n\in\mathcal{W}_j\})}=\mathbf{1}_{|\mathcal{W}_j|}|\mathtt{plug}(\mathcal{W}_i);C,S$ (a), where $C=\emptyset$ and $S$ can be any value.\\
        Check $Y_{(\{\bm{x}_n|\bm{x}_n\in\mathcal{W}_i\})}=\mathbf{1}_{|\mathcal{W}_i|}|\mathtt{plug}(\mathcal{W}_j);C,S$ (b), where $C=\emptyset$ and $S$ can be any value.\\
        \textbf{Case I} (Both (a) and (b) hold), if $|\mathcal{W}_i|\geq |\mathcal{W}_j|$, append $\mathcal{W}_j$ to $\mathscr{W}_{k}$; otherwise, append $\mathcal{W}_i$ to $\mathscr{W}_{k}$.
        \label{line:case1}\\
        \textbf{Case II} (Either one of (a) and (b) holds), if (a) holds, append $\mathcal{W}_i$ to $\mathscr{W}_{k}$;  otherwise, append $\mathcal{W}_j$ to $\mathscr{W}_{k}$.\\
        \textbf{Case III} (Neither (a) nor (b) holds), append $\mathcal{W}_i\cup\mathcal{W}_j$ to $\mathscr{W}_{k}$.
        \label{line:case3}\\
        Remove $\mathcal{W}_i,\mathcal{W}_j$ from $\mathscr{W}_{k-1}$, i.e., $\mathscr{W}_{k-1} = \mathscr{W}_{k-1} - \{\mathcal{W}_i,\mathcal{W}_j\}$.
    }
    \If{$|\mathscr{W}_{k-1}|=1$}{
    Append only element in $\mathscr{W}_{k-1}$ to $\mathscr{W}_k$.
    }
    Grow tree from bottom to top via $k= k+1$.
    }
    Let $\mathcal{W}_\mathtt{SUFFICIENT}$ denote only one element (i.e. the root node) in $\mathscr{W}_K$. \\
    Assign $\widetilde{\mathcal{D}}_\mathtt{FEEDER}$ as $\mathcal{W}_\mathtt{SUFFICIENT}$, i.e., $\mathcal{D}_\mathtt{OUT}=\mathcal{W}_\mathtt{SUFFICIENT}$.
\end{algorithm}

\begin{algorithm}[h]
	\caption{Exact Algorithm for $\mathtt{FEEDER}$}
	\label{algo:exact}
	{\bfseries Input:}
	Training dataset $\mathcal{D}_\mathtt{TRAIN}$.\\
	{\bfseries Output:}
	An exact $\mathtt{FEEDER}$ set $\widetilde{\mathcal{D}}_\mathtt{FEEDER}$.\\
    Initialize $k=1$.\\
    Initialize $\mathscr{H}_0=\emptyset$. \\
    \For{each instance $(\bm{x}_n, \bm{y}_n)\in\mathcal{D}_\mathtt{TRAIN}$}{
    Check $Y_{(\{\bm{x}_{n'}|\bm{x}_{n'}\in\mathcal{D}_\mathtt{TRAIN}\})}=\mathbf{1}_{|\mathcal{D}_\mathtt{TRAIN}|}|\mathtt{unplug}((\bm{x}_n,\bm{y}_n$ $));$ $C,S$ (a),  $C=\mathcal{D}_\mathtt{TRAIN}$, $S=(Y_{(\{\bm{x}_{n'}|\bm{x}_{n'}\in\mathcal{D}_\mathtt{TRAIN}\})}$ $=\mathbf{1}_{|\mathcal{D}_\mathtt{TRAIN}|}$ $)$.\\
    If (a) holds, let $\mathcal{H}_n=\{(\bm{x}_n,\bm{y}_n)\}$ and append $\mathcal{H}_n$ to $\mathscr{H}_0$.
    }
    \Repeat{$|\mathscr{H}_k| = 1 \text{ where we assume the current round is }K$}{
    \For{each pair $(\mathcal{H}_i,\mathcal{H}_j)$ where $\mathcal{H}_i,\mathcal{H}_j\in\mathscr{H}_{k-1}$}{
        Check $Y_{(\{\bm{x}_{n}|\bm{x}_{n}\in\mathcal{D}_\mathtt{TRAIN}\})}=\mathbf{1}_{|\mathcal{D}_\mathtt{TRAIN}|}|\mathtt{unplug}(\mathcal{H}_i\cup\mathcal{H}_j);C,S$ (b), where $C=\mathcal{D}_\mathtt{TRAIN}$ and $S=(Y_{(\{\bm{x}_{n'}|\bm{x}_{n'}\in\mathcal{D}_\mathtt{TRAIN}\})}$ $=\mathbf{1}_{|\mathcal{D}_\mathtt{TRAIN}|}$ $)$.
        \label{line:01}\\
        If (b) holds, generate a new node $\mathcal{H}_i\cup\mathcal{H}_j$, append it to $\mathscr{H}_{k}$, and assign $\mathcal{H}_i\cup\mathcal{H}_j$ with $\mathtt{MAINTAIN}$ signals; otherwise, append $\mathcal{H}_i$ and $\mathcal{H}_j$ to $\mathscr{H}_{k}$.
        \label{line:02}
    }
    Assign $\mathcal{H}_\mathtt{MAX}=\argmax_{\mathcal{H}_\cdot\in\mathscr{H}_k}|\mathcal{H}_\cdot|$ with $\mathtt{MAINTAIN}$ signal.
    \label{line:03}\\
    Remove the nodes without $\mathtt{MAINTAIN}$ signals in $\mathscr{H}_{k}$.
    \label{line:04}\\
    Grow tree from bottom to top via $k= k+1$.
    }
    Let $\mathcal{H}_\mathtt{UNNCESSARY}$ denote only one element (i.e. the root node) in $\mathscr{H}_K$.  \\
    Assign $\widetilde{\mathcal{D}}_\mathtt{FEEDER}$ as removing $\mathcal{H}_\mathtt{UNNCESSARY}$ from $\mathcal{D}_\mathtt{TRAIN}$, i.e., $\widetilde{\mathcal{D}}_\mathtt{FEEDER}=\mathcal{D}_\mathtt{TRAIN}-\mathcal{H}_\mathtt{UNNECESSARY}$.
\end{algorithm}

\section{Approximated Extraction of $\mathtt{FEEDER}$}
\label{app:sufficiency}
\begin{definition}
[Transitivity Inference over Sets]
We assume that sufficiency is transitive over sets. Formally, for any three sets, denoted as $\mathcal{D}_\mathtt{A}$, $\mathcal{D}_\mathtt{B}$, and $\mathcal{D}_\mathtt{C}$, if $\mathcal{D}_\mathtt{A}$ is a sufficient set of $\mathcal{D}_\mathtt{B}$ and $\mathcal{D}_\mathtt{B}$ is a sufficient set of $\mathcal{D}_\mathtt{C}$, then we can conclude that $\mathcal{D}_\mathtt{A}$ is a sufficient set of $\mathcal{D}_\mathtt{C}$.
\end{definition}
We also establish case studies in Appendix~\ref{app:transitivity} to verify the feasibility of the above assumption.

For convenience, we use $\mathcal{D}_\mathtt{IN}=\{(\bm{x}_n,\bm{y}_n)\}^{N_\mathtt{IN}}_{n=1}$ to denote the input set for our tree algorithm, and we use $\mathcal{D}_\mathtt{OUT}$ to denote the corresponding output.
The tree expands from the bottom to the top.
We use the variable $K$ to represent the depth of these trees, which corresponds to the number of iterations.
To be more specific, we use $k=1,2,\ldots,K$ to refer to each $k$-th iteration, and during each $k$-th iteration, we generate the $(k+1)$-th layer of the tree.

Concretely, we leverage the transitivity of sufficiency to build the tree, where each node is a set of samples.
Formally, we denote $\mathscr{W}_k$ as the set of nodes after the $k$-th iteration.
We initialize $\mathscr{W}_0$ by assigning all the candidate samples in $\mathcal{D}_\mathtt{IN}$ as the bottom nodes:
\begin{equation}
\mathscr{W}_0 \coloneqq
\{\mathcal{W}_n \coloneqq \{(\bm{x}_n,\bm{y}_n)\}|(\bm{x}_n,\bm{y}_n)\in \mathcal{D}_\mathtt{IN}\}.
\end{equation}
During each $k$-th round, we generate $\mathscr{W}_k$ by examining the sufficiency relationship between every pair of nodes, denoted as $\mathcal{W}_i, \mathcal{W}_j\in\mathscr{W}_{k-1}$.
In this evaluation, we assess whether the following equation holds true by assigning $\mathcal{W}_i$ and $\mathcal{W}_j$ as $\mathcal{W}_\mathtt{IN}$ and $\mathcal{W}_\mathtt{OUT}$, or vice versa.
\begin{equation}
\label{eqn:sufficienttree}
Y_{(\{\bm{x}_n|\bm{x}_n\in\mathcal{W}_\mathtt{OUT}\})}=\mathbf{1}_{|\mathcal{W}_\mathtt{OUT}|}|\mathtt{plug}(\mathcal{W}_\mathtt{IN});C,S,
\end{equation}
where $C=\emptyset$ and $S$ is loosened to allow for any value.
$\mathbf{1}_{|\mathcal{W}_\mathtt{OUT}|}$ is $\mathbf{1}_{|\mathcal{W}_\mathtt{OUT}|}$-dimensional vectors whose elements are all 1s.
It signifies that plugging in $\mathcal{W}_\mathtt{IN}$ is sufficient for the LLM to generate the correct output to any input in $\mathcal{W}_\mathtt{OUT}$.
In other words, once we have $\mathcal{W}_\mathtt{IN}$ included in the plugged-in context, it is unnecessary to further include $\mathcal{W}_\mathtt{OUT}$.
Formally, we can derive the following equation from Eq.~(\ref{eqn:sufficienttree}):
\begin{equation}
\label{eqn:unnecessary}
Y_{(\{\bm{x}_n|\bm{x}_n\in\mathcal{W}_\mathtt{OUT}\})}=\mathbf{1}_{|\mathcal{W}_\mathtt{OUT}|}|\mathtt{unplug}(\mathcal{W}_\mathtt{OUT});C,S,
\end{equation}
where $C=(\mathcal{W}_\mathtt{IN}\cup\mathcal{W}_\mathtt{OUT})$ and $S$ is loosened to be any value.
Concretely, there are three possible scenarios by examining each pair of nodes in $\mathscr{W}_{k-1}$:
(i) If both $\mathcal{W}_i$ and $\mathcal{W}_j$ are sufficient sets for each other, then we select the one with fewer elements to append to $\mathscr{W}_k$.
(ii) If only one of $\mathcal{W}_i$ and $\mathcal{W}_j$ is a sufficient set for the other, then we append the sufficient set to $\mathscr{W}_k$.
(iii) If neither $\mathcal{W}_i$ nor $\mathcal{W}_j$ is a sufficient set, we append $\mathcal{W}_i\cup\mathcal{W}_j$ to $\mathscr{W}_k$.
After performing the above calculations for each pair of nodes, we remove them from $\mathscr{W}_{k-1}$.
When there is only one element left in $\mathscr{W}_{k-1}$, it is directly appended to $\mathscr{W}_k$.
This process continues until $\mathscr{W}_\cdot$ contains only one element, which is denoted as $\mathcal{W}_\mathtt{SUFFICIENT}\in\mathscr{W}_K$.
We then assign $\mathcal{D}_\mathtt{OUT}$ as $\mathcal{D}_\mathtt{OUT}=\mathcal{W}_\mathtt{SUFFICIENT}$.

The time complexity of running the above tree algorithm for one round is $O(\log_2^{|\mathcal{D}_\mathtt{IN}|})$ to generate one layer of the tree.
To effectively remove the unnecessary part, we can repeat the above process for multiple rounds by using the output of the previous round (i.e., $\mathcal{D}_\mathtt{OUT}$) as the input for the subsequent round (i.e., $\mathcal{D}_\mathtt{IN}$) to build the tree from the bottom to the top.
Therefore, the overall complexity of building a tree (i.e., executing Algorithm~\ref{algo:framework} for a single run) is $O(K\log_2^{|\mathcal{D}_\mathtt{IN}|})$, where $K$ is the number of rounds.
It follows that the time complexity for multiple runs is $O(RK\log_2^{|\mathcal{D}_\mathtt{IN}|})$, where $R$ denotes the number of runs.

Our tree algorithm can also maintain the remaining set to be sufficient to represent the entire $\mathcal{D}_\mathtt{TRAIN}$, as verified in the following proposition. 

\begin{proposition}
[$\widetilde{\mathcal{D}}_\mathtt{FEEDER}$ obtained by Algorithm~\ref{algo:framework} is an Approximation of $\mathcal{D}_\mathtt{FEEDER}$]
If we successively apply Algorithm~\ref{algo:framework} on $\mathcal{D}_\mathtt{TRAIN}$ for multiple rounds to obtain a subset (denoted as
$\widetilde{\mathcal{D}}_\mathtt{FEEDER}$), then $\widetilde{\mathcal{D}}_\mathtt{FEEDER}$ is sufficient to represent $\mathcal{D}_\mathtt{TRAIN}$.
\end{proposition}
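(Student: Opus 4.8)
The plan is to prove the statement by induction on the tree layers, maintaining the invariant that \emph{every} surviving node is sufficient for the block of original training samples sitting beneath it. Concretely, for a node $\mathcal{W}\in\mathscr{W}_k$ let $\mathrm{desc}(\mathcal{W})\subseteq\mathcal{D}_\mathtt{TRAIN}$ denote the union of all singleton samples in the subtree rooted at $\mathcal{W}$. I would show by induction on $k$ that each $\mathcal{W}\in\mathscr{W}_k$ is a sufficient set for $\mathrm{desc}(\mathcal{W})$ in the sense of Definition~\ref{def:appsufficientset}. The base case $k=0$ is immediate once we adopt the (trivial) convention that any set is sufficient for itself and, more generally, for any of its subsets: plugging in a block of demonstrations that literally contains the target question--answer pairs guarantees their correct reproduction. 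I would state this trivial-sufficiency convention explicitly, since it is what anchors the whole induction.

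For the inductive step I would examine the three merge cases of Algorithm~\ref{algo:framework} when a pair $(\mathcal{W}_i,\mathcal{W}_j)\subseteq\mathscr{W}_{k-1}$ is processed, using two facts: the transitivity of set-level sufficiency assumed in the Transitivity Inference over Sets definition, and the elementary \emph{output-union} property that if a fixed context is sufficient for $B$ and for $C$ then it is sufficient for $B\cup C$ (immediate from the definition, since correctness is checked per input under the same plugged-in context). In Case~I and Case~II the survivor is whichever of $\mathcal{W}_i,\mathcal{W}_j$ was verified sufficient for the other; chaining this checked sufficiency with the inductive hypothesis ($\mathcal{W}_i$ sufficient for $\mathrm{desc}(\mathcal{W}_i)$, resp.\ $\mathcal{W}_j$) through transitivity, and then closing under output-union, yields that the survivor is sufficient for $\mathrm{desc}(\mathcal{W}_i)\cup\mathrm{desc}(\mathcal{W}_j)=\mathrm{desc}(\mathcal{W})$. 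The carried-over node in the odd-cardinality case is unchanged, so the invariant is trivially preserved there.

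The delicate case, and the step I expect to be the main obstacle, is Case~III, where neither node is sufficient and the survivor is the \emph{union} $\mathcal{W}_i\cup\mathcal{W}_j$. Here the inductive hypothesis only provides sufficiency of $\mathcal{W}_i$ plugged in alone, whereas I now need sufficiency of the larger context $\mathcal{W}_i\cup\mathcal{W}_j$. A naive argument would invoke a general monotonicity claim ``enlarging the plugged-in context preserves sufficiency,'' which is false for real LLMs and would weaken the proof. Instead I would route around it: by the trivial-sufficiency convention, $\mathcal{W}_i\cup\mathcal{W}_j$ is sufficient for its subset $\mathcal{W}_i$, and transitivity with $\mathcal{W}_i$ sufficient for $\mathrm{desc}(\mathcal{W}_i)$ gives $\mathcal{W}_i\cup\mathcal{W}_j$ sufficient for $\mathrm{desc}(\mathcal{W}_i)$; symmetrically for $\mathcal{W}_j$; and output-union then delivers $\mathcal{W}_i\cup\mathcal{W}_j$ sufficient for $\mathrm{desc}(\mathcal{W})$. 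It is worth flagging that this reduction still quietly presupposes determinism of the LLM output given a fixed context, so I would make that modeling assumption explicit rather than conceal it inside the word ``sufficiency.''

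Finally I would collect the conclusion. When the algorithm is run to completion, $\mathscr{W}_K$ contains a single root $\mathcal{W}_\mathtt{SUFFICIENT}$ with $\mathrm{desc}(\mathcal{W}_\mathtt{SUFFICIENT})=\mathcal{D}_\mathtt{TRAIN}$, so the invariant at $k=K$ states exactly that $\widetilde{\mathcal{D}}_\mathtt{FEEDER}$ is sufficient for $\mathcal{D}_\mathtt{TRAIN}$; if the tree is stopped early, the same invariant applied to the surviving layer together with output-union gives sufficiency of the union of that layer. For the multi-run variant I would close by chaining across runs: the output of run $r$ is sufficient for its input, which is the output of run $r-1$, and transitivity telescopes these relations back to $\mathcal{D}_\mathtt{TRAIN}$.
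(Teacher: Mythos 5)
Your proof is correct and follows essentially the same route as the paper's: an induction up the tree establishing that every surviving node is sufficient for the original samples beneath it (Cases I/II via the checked sufficiency relation, Case III via the union trivially representing each of its parts), closed off by the assumed transitivity of set-level sufficiency, with the root then sufficient for all of $\mathcal{D}_\mathtt{TRAIN}$. The only difference is one of rigor, not of approach: you make explicit the steps the paper leaves tacit --- the convention that a set is sufficient for its own subsets, the per-input output-union property needed to pass from ``sufficient for each leaf'' to ``sufficient for the union of leaves,'' the determinism assumption, and the telescoping of sufficiency across multiple runs --- which tightens rather than alters the paper's argument.
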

\begin{proof}
In the tree generation process, each parent node is established as a sufficient set for every leaf node within the tree. 
More precisely, as shown in \textbf{Case I}, \textbf{Case II} and \textbf{Case III} of Algorithm~\ref{algo:framework}, three scenarios exist for creating a parent node for each pair of leaf nodes. 
In cases (i) and (ii), the parent node corresponds to the leaf node which serves as a sufficient set for the other node. 
In case (iii), the parent node results from the conjunction of two leaf nodes, inherently forming a sufficient set capable of representing either of the two leaf nodes.

According to our assumption of the sufficiency transitivity, for each data point in $\mathcal{D}_\mathtt{TRAIN}$, the root node of the tree is a sufficient set for each leaf node in the tree.
Formally, we have:
\begin{equation}
Y_{\{\bm{x}_n|\bm{x}_n\in\mathcal{D}_\mathtt{TRAIN}\}}=\mathbf{1}_{|\mathcal{D}_\mathtt{TRAIN}|}|\mathtt{plug}(\widetilde{\mathcal{D}}_\mathtt{FEEDER}); C,S,
\end{equation}
where $C=\emptyset$ and $S$ can be any value.
This means that the resulting set $\widetilde{\mathcal{D}}_\mathtt{FEEDER}$ is a sufficient set of $\mathcal{D}_\mathtt{TRAIN}$.
\end{proof}

\section{Exact Extraction of $\mathtt{FEEDER}$}
\label{app:necessary}
To extract an exact $\mathtt{FEEDER}$ set $\mathcal{D}_\mathtt{FEEDER}$ from $\mathcal{D}_\mathtt{TRAIN}$, we need to explicitly check the necessity among all the candidate samples, and remove those unnecessary parts.
We do not directly apply this algorithm in practice, due to its high computation costs.
We provide a solution for integrating the algorithm into our $\mathtt{FEEDER}$ and report the corresponding results in Appendix~\ref{app:integrate}.

\subsection{Exact Extraction of $\mathtt{FEEDER}$ via Necessity Checks}
\label{app:exact1}
Our intuition behind constructing a tree for checking necessity is based on the inherent transitivity property of necessity.
Formally, it can be expressed as:
If unplugging $\mathcal{D}_\mathtt{A}$ could cause the outputs to at least one input in $\mathcal{D}_\mathtt{C}$ from correct to incorrect, then unplugging $\mathcal{D}_\mathtt{A}\cup\mathcal{D}_\mathtt{B}$ also can not maintain the outputs to all the input in $\mathcal{D}_\mathtt{C}$ correct.
Namely, if unplugging a subset would degrade the performance, then unplugging the whole set would also degrade the performance.

Similar to the tree for explicitly checking sufficiency introduced in Appendix~\ref{app:sufficiency}, each node in the tree for checking necessity also represents a set of samples. 
For convenience, we also use $\mathcal{D}_\mathtt{IN}=\{(\bm{x}_n,\bm{y}_n)\}^{N_\mathtt{IN}}_{n=1}$ to denote the input set and $\mathcal{D}_\mathtt{OUT}$ for the corresponding output.
We use $\mathscr{H}_k$ to denote a set of nodes after the $k$-th round.

We initialize $\mathscr{H}_0$ by identifying all samples in $\mathcal{D}_\mathtt{IN}$ for which unplugging them individually does not affect the LLM's performance.
Formally, we construct $\mathscr{H}_0$ as
$\mathscr{H}_0\coloneqq \{\mathcal{H}_n\coloneqq\{(\bm{x}_n,\bm{y}_n)\}\}$ where $(\bm{x}_n,\bm{y}_n)\in\mathcal{D}_\mathtt{IN}$ satisfies: 
\begin{equation}
Y_{(\{\bm{x}_{n'}|\bm{x}_{n'}\in\mathcal{D}_\mathtt{IN}\})}=\mathbf{1}_{|\mathcal{D}_\mathtt{IN}|}|\mathtt{unplug}((\bm{x}_n,\bm{y}_n));C,S,
\end{equation}
where $C=\mathcal{D}_\mathtt{IN}$ and 
$S$ is loosened to allow for any value.
During each $k$-th round, we generate $\mathscr{H}_k$ by examining the necessity relationship between each pair of nodes (denoted as $\mathcal{H}_i,\mathcal{H}_j\in\mathscr{H}_{k-1}$).
Here, we further verify whether solely unplugging $\mathcal{H}_i\cup\mathcal{H}_j$ does not impact the LLM's performance.
Formally, we check whether the following equation holds:
\begin{equation}
\label{eqn:necessarytree}
Y_{(\{\bm{x}_{n'}|\bm{x}_{n'}\in\mathcal{D}_\mathtt{IN}\})}=\mathbf{1}_{|\mathcal{D}_\mathtt{IN}|}|\mathtt{unplug}(\mathcal{H}_i\cup\mathcal{H}_j);C,S,
\end{equation}
where $C=\mathcal{D}_\mathtt{IN}$ and $S$ is loosened to allow for any value.
This determines whether plugging $\mathcal{H}_i\cup\mathcal{H}_j$ is unnecessary for maintaining the correct outputs to all inputs in $\mathcal{D}_\mathtt{IN}$.
If the above equation holds, we create a new node $\mathcal{H}_i\cup\mathcal{H}_j$ and add it to $\mathscr{H}_{k}$, labeling it with a $\mathtt{MAINTAIN}$ signal.
Otherwise, we add both $\mathcal{H}_i$ and $\mathcal{H}_j$ to $\mathscr{H}_{k}$.
After this computation, we identify $\mathcal{H}_\mathtt{MAX}=\argmax_{\mathcal{H}_\cdot \in \mathscr{H}_k}|\mathcal{H}_\cdot|$ and label it with a $\mathtt{MAINTAIN}$ signal.
Subsequently, we remove the nodes in $\mathscr{H}_{k}$ that lack $\mathtt{MAINTAIN}$ signals.
This process continues until $\mathscr{H}_\cdot$ contains only one element, denoted as $\mathcal{H}_\mathtt{UNNECESSARY}\in \mathscr{H}_K$.
Finally, we calculate $\mathcal{D}_\mathtt{OUT}$ as $\mathcal{D}_\mathtt{OUT}=\mathcal{D}_\mathtt{IN}-\mathcal{H}_\mathtt{UNNECESSARY}$.

\subsection{Exact Extraction of $\mathtt{FEEDER}$ via Iterative Sufficiency Checks}
\label{app:exact2}
Consider that at each round, we need to check the necessity for $O(\mathtt{C}^2_{N_\mathtt{IN}})$ times (where $\mathtt{C}^\cdot_\cdot$ denotes a combination operator), this becomes impractical.
To this end, we develop an alternative algorithm.
Specifically, at each $k$-th round, we remove all the checked nodes (i.e., $\mathcal{H}_i$ and $\mathcal{H}_j$ from $\mathscr{H}_k$, similar to our approximation algorithm in Appendix~\ref{app:sufficiency}).
Then, it requires $O(\log^{|\mathcal{D}_\mathtt{IN}|}_2)$ computations to finish one round.
To obtain an exact $\mathtt{FEEDER}$, we need to keep repeating the above process until there is no or only one left in $\mathcal{H}_0$.
While practical, we also can set a maximum number of rounds to approximate.

\begin{table*}
        \vspace{-3mm}
	\centering
	\caption{Performance comparisons on text classification datasets are conducted in the in-context learning setting.
    We report both the mean and variance of accuracy using 8 different seeds and 5 different permutations of n-shots.
    This table is extended from Table~\ref{tab:main}.}
        \vspace{1mm}
	\resizebox{1\textwidth}{!}{
		\begin{tabular}{@{\extracolsep{4pt}}|c|c|c|c|c|c|c|c|c|c|c|c|}
			\toprule
			\multirow{2}{*}{$\Psi_\mathtt{LLM}(\cdot)$} & \multirow{2}{*}{$\mathcal{D}$} & \multirow{2}{*}{$n$} & \multicolumn{3}{c|}{SUBJ} & \multicolumn{3}{c|}{SST-2} & \multicolumn{3}{c|}{COLA} \\
			\cmidrule{4-6}
			\cmidrule{7-9}
			\cmidrule{10-12}
			{} & {} & {} & Random & Similarity & Diversity & Random & Similarity & Diversity & Random & Similarity & Diversity \\
			\midrule
			\multirow{8}{*}{GPT-2 \scriptsize{(0.3B)}} & \multirow{4}{*}{$\mathcal{D}_\mathtt{TRAIN}$} & 
			1 & 
                41.3 \scriptsize{(7.2)} & 41.1 \scriptsize{(0.1)} & 41.1 \scriptsize{(0.1)} & 
			48.9 \scriptsize{(4.6)} & 24.5 \scriptsize{(0.2)} & 24.5 \scriptsize{(0.2)} & 
                29.0 \scriptsize{(5.4)} & 38.8 \scriptsize{(0.1)} & 38.8 \scriptsize{(0.1)} \\ 
			{} & {} & 
			2 & 
			47.3 \scriptsize{(7.2)} & 62.8 \scriptsize{(0.1)} & 71.9 \scriptsize{(0.2)} & 
                51.2 \scriptsize{(5.8)} & 65.7 \scriptsize{(0.1)} & 62.5 \scriptsize{(0.2)} & 
                30.9 \scriptsize{(4.6)} & 38.5 \scriptsize{(0.2)} & 36.2 \scriptsize{(0.1)} \\
                {} & {} & 
			5 & 
			51.8 \scriptsize{(5.5)} & 85.8 \scriptsize{(0.3)} & 70.1 \scriptsize{(0.2)} & 
                62.6 \scriptsize{(5.6)} & 79.4 \scriptsize{(0.2)} & 61.7 \scriptsize{(0.1)} & 
                39.4 \scriptsize{(5.8)} & 49.3 \scriptsize{(0.1)} & 47.0 \scriptsize{(0.2)} \\
                {} & {} & 
			10 & 
			62.4 \scriptsize{(5.0)} & 88.0 \scriptsize{(0.2)} & 78.2 \scriptsize{(0.1)} & 
                50.9 \scriptsize{(4.9)} & 83.8 \scriptsize{(0.3)} & 76.9 \scriptsize{(0.2)} & 
                31.6 \scriptsize{(4.6)} & 52.5 \scriptsize{(0.2)} & 58.8 \scriptsize{(0.2)} \\
                \cmidrule{2-12}
                {} & \multirow{4}{*}{$\mathcal{D}_\mathtt{FEEDER}$} & 
			1 & 
                \textbf{42.8} \scriptsize{(2.4)} & \textbf{44.9} \scriptsize{(1.1)} & \textbf{44.9} \scriptsize{(1.1)} & 
			\textbf{49.8} \scriptsize{(4.2)} & \textbf{48.1} \scriptsize{(1.9)} & \textbf{48.1} \scriptsize{(1.9)} & 
                \textbf{29.6} \scriptsize{(4.1)}  &  35.1 \scriptsize{(1.5)} &  35.1 \scriptsize{(1.5)} \\ 
			{} & {} & 
			2 & 
			\textbf{55.9} \scriptsize{(3.3)} & \textbf{63.4} \scriptsize{(1.6)} & \textbf{74.7} \scriptsize{(0.9)} & 
                \textbf{67.3} \scriptsize{(4.4)} & \textbf{67.7} \scriptsize{(1.4)} & \textbf{64.7} \scriptsize{(1.5)} & 
                \textbf{31.3} \scriptsize{(2.2)} & \textbf{41.7} \scriptsize{(1.2)} & 34.9 \scriptsize{(1.9)} \\
                {} & {} & 
			5 & 
			\textbf{57.5} \scriptsize{(4.0)} & \textbf{86.9} \scriptsize{(0.7)} & 69.8 \scriptsize{(1.0)} &  
                \textbf{70.3} \scriptsize{(4.4)} & 77.9 \scriptsize{(1.2)} & \textbf{68.5} \scriptsize{(1.9)} & 
                35.2 \scriptsize{(2.0)} & \textbf{57.3} \scriptsize{(1.2)} & \textbf{54.6} \scriptsize{(1.7)} \\
                {} & {} & 
			10 & 
			\textbf{63.5} \scriptsize{(4.4)} & \textbf{88.7} \scriptsize{(1.5)} & \textbf{79.7} \scriptsize{(2.0)} & 
                \textbf{75.2} \scriptsize{(6.2)} & 83.0 \scriptsize{(1.7)} & \textbf{77.2} \scriptsize{(1.5)} & 
                \textbf{59.3} \scriptsize{(3.8)} & \textbf{68.7} \scriptsize{(2.4)} & \textbf{68.5} \scriptsize{(2.9)} \\
                \midrule
			\multirow{8}{*}{$\mathtt{MED}$ \scriptsize{(0.8B)}} & \multirow{4}{*}{$\mathcal{D}_\mathtt{TRAIN}$} & 
			1 & 
                42.5 \scriptsize{(5.2)} & 43.6 \scriptsize{(0.1)} & 43.6 \scriptsize{(0.1)} &
			49.0 \scriptsize{(4.3)} & 42.3 \scriptsize{(0.2)} & 42.3 \scriptsize{(0.2)} & 
                42.1 \scriptsize{(5.7)} & 48.3 \scriptsize{(0.1)} & 48.3 \scriptsize{(0.1)} \\ 
			{} & {} & 
			2 & 
                58.1 \scriptsize{(6.3)} & 88.3 \scriptsize{(0.2)} & 87.0 \scriptsize{(0.3)} & 
			68.0 \scriptsize{(5.2)} & 70.7 \scriptsize{(0.1)} & 59.6 \scriptsize{(0.2)} & 
                41.1 \scriptsize{(4.2)} & 36.8 \scriptsize{(0.2)} & 37.7 \scriptsize{(0.1)} \\
                {} & {} & 
			5 & 
                66.7 \scriptsize{(4.5)} & 86.2 \scriptsize{(0.2)} & 86.7 \scriptsize{(0.1)} & 
			49.1 \scriptsize{(4.3)} & 80.6 \scriptsize{(0.1)} & 67.5 \scriptsize{(0.2)} & 
                46.2 \scriptsize{(4.7)} & 53.8 \scriptsize{(0.2)} & 48.5 \scriptsize{(0.3)} \\
                {} & {} & 
			10 & 
                48.6 \scriptsize{(6.0)} & 85.9 \scriptsize{(0.1)} & 73.9 \scriptsize{(0.2)} & 
			71.1 \scriptsize{(4.5)} & 84.6 \scriptsize{(0.1)} & 73.1 \scriptsize{(0.2)} & 
                43.4 \scriptsize{(4.5)} & 55.5 \scriptsize{(0.2)} & 56.1 \scriptsize{(0.4)} \\
                \cmidrule{2-12}
                {} & \multirow{4}{*}{$\mathcal{D}_\mathtt{FEEDER}$} & 
			1 & 
                \textbf{45.8} \scriptsize{(5.1)} & \textbf{46.4} \scriptsize{(0.4)} & \textbf{46.4} \scriptsize{(0.4)} & 
			\textbf{49.1} \scriptsize{(3.0)} & \textbf{47.7} \scriptsize{(1.3)} & \textbf{47.7} \scriptsize{(1.3)} & 
                \textbf{46.6} \scriptsize{(3.8)} & 45.1 \scriptsize{(1.1)} & 45.1 \scriptsize{(1.1)} \\ 
			{} & {} & 
			2 & 
                \textbf{63.1} \scriptsize{(4.5)} & \textbf{89.7} \scriptsize{(1.5)} & 86.8 \scriptsize{(1.3)} &
			\textbf{69.8} \scriptsize{(3.8)} & \textbf{73.0} \scriptsize{(2.9)} & 61.2 \scriptsize{(2.1)} & 
                 36.6 \scriptsize{(3.5)} & \textbf{37.0} \scriptsize{(2.8)} & \textbf{34.6} \scriptsize{(2.0)} \\
                {} & {} & 
			5 & 
                \textbf{73.4} \scriptsize{(4.3)} & \textbf{88.2} \scriptsize{(1.9)} & \textbf{88.8} \scriptsize{(1.7)} & 
			\textbf{59.3} \scriptsize{(2.4)} & \textbf{80.9} \scriptsize{(1.3)} & \textbf{69.6} \scriptsize{(1.7)} & 
                \textbf{59.2} \scriptsize{(3.3)} & \textbf{68.6} \scriptsize{(1.6)} & \textbf{66.6} \scriptsize{(1.7)} \\
                {} & {} & 
			10 & 
                 \textbf{52.0} \scriptsize{(3.8)} & \textbf{87.4} \scriptsize{(1.3)} & \textbf{75.6} \scriptsize{(1.2)} & 
			\textbf{76.0} \scriptsize{(3.0)} & \textbf{86.7} \scriptsize{(1.4)} & \textbf{75.6} \scriptsize{(1.8)} & 
                \textbf{59.3} \scriptsize{(4.8)} & \textbf{68.8} \scriptsize{(2.0)} & \textbf{ 68.9} \scriptsize{(1.8)} \\
                \midrule
			\multirow{8}{*}{$\mathtt{NEO}$ \scriptsize{(1.3B)}} & \multirow{4}{*}{$\mathcal{D}_\mathtt{TRAIN}$} & 
			1 & 
                42.8 \scriptsize{(3.9)} & 42.1 \scriptsize{(0.1)} & 42.1 \scriptsize{(0.1)} & 
			49.2 \scriptsize{(3.7)} & 33.8 \scriptsize{(0.1)} & 33.8 \scriptsize{(0.1)} & 
                25.5 \scriptsize{(3.4)} & 36.5 \scriptsize{(0.2)} & 36.5 \scriptsize{(0.2)} \\ 
			{} & {} & 
			2 & 
                48.5 \scriptsize{(4.2)} & 88.3 \scriptsize{(0.2)} & 72.6 \scriptsize{(0.3)} & 
			76.8 \scriptsize{(3.5)} & 81.5 \scriptsize{(0.1)} & 76.3 \scriptsize{(0.4)} & 
                30.7 \scriptsize{(3.1)} & 55.5 \scriptsize{(0.2)} & 56.5 \scriptsize{(0.4)} \\
                {} & {} & 
			5 & 
                51.6 \scriptsize{(5.0)} & 90.5 \scriptsize{(0.2)} & 81.7 \scriptsize{(0.2)} & 
			65.1 \scriptsize{(3.5)} & 80.8 \scriptsize{(0.2)} & 66.1 \scriptsize{(0.3)} & 
                40.0 \scriptsize{(3.6)} & 55.9 \scriptsize{(0.1)} & 52.5 \scriptsize{(0.2)} \\
                {} & {} & 
			10 & 
                48.5 \scriptsize{(5.8)} & 85.9 \scriptsize{(0.3)} & 81.9 \scriptsize{(0.1)} & 
			69.8 \scriptsize{(4.8)} & 84.1 \scriptsize{(0.1)} & 69.7 \scriptsize{(0.1)} & 
                39.6 \scriptsize{(4.5)} & 59.3 \scriptsize{(0.3)} & 63.4 \scriptsize{(0.1)} \\
                \cmidrule{2-12}
                {} & \multirow{4}{*}{$\mathcal{D}_\mathtt{FEEDER}$} & 
			1 & 
                \textbf{43.2} \scriptsize{(4.0)} & \textbf{46.3} \scriptsize{(1.0)} & \textbf{46.3} \scriptsize{(1.0)} & 
			49.3 \scriptsize{(5.1)} & \textbf{48.3} \scriptsize{(1.9)} & \textbf{48.3} \scriptsize{(1.9)} & 
                \textbf{28.3} \scriptsize{(5.4)} & 34.8 \scriptsize{(1.3)} & 34.8 \scriptsize{(1.3)} \\ 
			{} & {} & 
			2 & 
                \textbf{62.6} \scriptsize{(3.5)} & \textbf{89.4} \scriptsize{(1.5)} & \textbf{73.8} \scriptsize{(2.1)} & 
			75.1 \scriptsize{(2.8)} & \textbf{82.6} \scriptsize{(2.1)} & \textbf{78.5} \scriptsize{(1.9)} & 
                \textbf{59.3} \scriptsize{(3.7)} & \textbf{64.7} \scriptsize{(1.4)} & \textbf{64.7} \scriptsize{(1.6)} \\
                {} & {} & 
			5 & 
                \textbf{69.4} \scriptsize{(5.6)} & \textbf{91.2} \scriptsize{(1.8)} & \textbf{82.9} \scriptsize{(1.3)} & 
			\textbf{73.2} \scriptsize{(4.2)} & \textbf{82.9} \scriptsize{(2.7)} & \textbf{71.6} \scriptsize{(2.4)} & 
                \textbf{58.7} \scriptsize{(3.2)} & \textbf{67.2} \scriptsize{(2.4)} & \textbf{65.8} \scriptsize{(1.8)} \\
                {} & {} & 
			10 & 
                \textbf{58.7} \scriptsize{(3.3)} & \textbf{87.2} \scriptsize{(1.7)} & \textbf{84.3} \scriptsize{(2.8)} & 
			\textbf{72.4} \scriptsize{(3.4)} & \textbf{85.8} \scriptsize{(2.5)} & \textbf{71.8} \scriptsize{(2.9)} & 
                \textbf{59.8} \scriptsize{(2.8)} & \textbf{68.8} \scriptsize{(1.4)} & \textbf{68.9} \scriptsize{(1.3)} \\
                \midrule
			\multirow{8}{*}{Gemma-2 \scriptsize{(2B)}} & \multirow{4}{*}{$\mathcal{D}_\mathtt{TRAIN}$} & 
			1 & 
                45.0 \scriptsize{(5.9)} & 48.1 \scriptsize{(0.6)} & 48.1 \scriptsize{(0.6)} & 
			51.2 \scriptsize{(6.8)} & 52.2 \scriptsize{(0.8)} & 52.2 \scriptsize{(0.8)} & 
                37.5 \scriptsize{(7.0)} & 40.5 \scriptsize{(1.3)} & 40.5 \scriptsize{(1.3)} \\ 
			{} & {} & 
			2 & 
                62.3 \scriptsize{(6.9)} & 82.5 \scriptsize{(1.8)} & 74.2 \scriptsize{(1.3)} & 
			71.5 \scriptsize{(5.6)} & 78.5 \scriptsize{(1.5)} & 75.9 \scriptsize{(0.9)} & 
                40.6 \scriptsize{(5.9)} & 62.5 \scriptsize{(1.0)} & 61.6 \scriptsize{(0.5)} \\ 
                {} & {} & 
			5 & 
                68.0 \scriptsize{(7.1)} & 91.5 \scriptsize{(1.2)} & 84.2 \scriptsize{(1.6)} & 
			70.2 \scriptsize{(5.6)} & 80.5 \scriptsize{(1.6)} & 80.6 \scriptsize{(0.7)} & 
                46.5 \scriptsize{(5.9)} & 67.2 \scriptsize{(1.8)} & 65.6 \scriptsize{(0.6)} \\ 
                {} & {} & 
			10 & 
                50.3 \scriptsize{(8.2)} & 86.2 \scriptsize{(1.9)} & 85.6 \scriptsize{(0.8)} & 
			68.2 \scriptsize{(4.8)} & 85.5 \scriptsize{(1.5)} & 76.3 \scriptsize{(1.3)} & 
                50.2 \scriptsize{(7.4)} & 69.8 \scriptsize{(1.5)} & 71.5 \scriptsize{(1.2)} \\ 
                \cmidrule{2-12}
                {} & \multirow{4}{*}{$\mathcal{D}_\mathtt{FEEDER}$} & 
			1 & 
                \textbf{48.2} \scriptsize{(4.2)} & \textbf{49.5} \scriptsize{(1.0)} & \textbf{49.5} \scriptsize{(1.0)} & 
			\textbf{52.6} \scriptsize{(4.6)} & \textbf{53.1} \scriptsize{(0.8)} & \textbf{53.1} \scriptsize{(0.8)} & 
                \textbf{38.9} \scriptsize{(5.2)} & 39.6 \scriptsize{(0.8)} & 39.6 \scriptsize{(0.8)} \\ 
			{} & {} & 
			2 & 
                \textbf{65.2} \scriptsize{(2.9)} & \textbf{85.2} \scriptsize{(1.0)} & \textbf{80.3} \scriptsize{(0.8)} & 
			\textbf{74.2} \scriptsize{(4.9)} & \textbf{82.1} \scriptsize{(1.2)} & \textbf{83.0} \scriptsize{(0.7)} & 
                \textbf{52.5} \scriptsize{(2.5)} & \textbf{68.9} \scriptsize{(2.1)} & \textbf{67.8} \scriptsize{(1.5)} \\ 
                {} & {} & 
			5 & 
                \textbf{72.2} \scriptsize{(6.2)} & \textbf{94.5} \scriptsize{(5.3)} & \textbf{85.5} \scriptsize{(0.7)} & 
			\textbf{72.0} \scriptsize{(4.2)} & \textbf{83.6} \scriptsize{(2.1)} & \textbf{84.5} \scriptsize{(1.7)} & 
                \textbf{55.2} \scriptsize{(4.8)} & \textbf{77.6} \scriptsize{(2.5)} & \textbf{73.9} \scriptsize{(2.3)} \\ 
                {} & {} & 
			10 & 
                \textbf{60.5} \scriptsize{(4.0)} & \textbf{86.5} \scriptsize{(2.5)} & \textbf{88.4} \scriptsize{(2.4)} & 
			\textbf{70.5} \scriptsize{(5.6)} & \textbf{92.6} \scriptsize{(2.6)} & \textbf{78.5} \scriptsize{(5.3)} & 
                \textbf{58.6} \scriptsize{(4.6)} & \textbf{75.6} \scriptsize{(2.9)} & \textbf{76.6} \scriptsize{(2.5)} \\ 
                \midrule
			\multirow{8}{*}{GPT-3 \scriptsize{(6B)}} & \multirow{4}{*}{$\mathcal{D}_\mathtt{TRAIN}$} & 
			1 & 
                44.9 \scriptsize{(6.6)} & 49.5 \scriptsize{(0.1)} & 49.5 \scriptsize{(0.1)} & 
			48.2 \scriptsize{(2.9)} & 47.0 \scriptsize{(0.1)} & 47.0 \scriptsize{(0.1)} & 
                38.9 \scriptsize{(6.7)} & 41.2 \scriptsize{(0.2)} & 41.2 \scriptsize{(0.2)} \\ 
			{} & {} & 
			2 & 
                55.4 \scriptsize{(3.5)} & 85.5 \scriptsize{(0.1)} & 86.5 \scriptsize{(0.2)} & 
			68.1 \scriptsize{(4.2)} & 78.7 \scriptsize{(0.2)} & 77.5 \scriptsize{(0.1)} & 
                42.8 \scriptsize{(4.0)} & 45.5 \scriptsize{(0.3)} & 45.6 \scriptsize{(0.2)} \\
                {} & {} & 
			5 & 
                51.2 \scriptsize{(4.4)} & 90.8 \scriptsize{(0.2)} & 82.7 \scriptsize{(0.1)} & 
			75.2 \scriptsize{(3.3)} & 80.7 \scriptsize{(0.1)} & 77.8 \scriptsize{(0.2)} & 
                48.5 \scriptsize{(3.3)} & 51.8 \scriptsize{(0.3)} & 52.1 \scriptsize{(0.2)} \\
                {} & {} & 
			10 & 
                57.7 \scriptsize{(4.8)} & 87.3 \scriptsize{(0.1)} & 85.3 \scriptsize{(0.1)} & 
			72.1 \scriptsize{(3.8)} & 77.6 \scriptsize{(0.1)} & 76.5 \scriptsize{(0.2)} & 
                59.1 \scriptsize{(4.2)} & 60.3 \scriptsize{(0.1)} & 61.0 \scriptsize{(0.2)} \\
                \cmidrule{2-12}
                {} & \multirow{4}{*}{$\mathcal{D}_\mathtt{FEEDER}$} & 
			1 & 
                \textbf{43.9} \scriptsize{(4.2)} & \textbf{51.2} \scriptsize{(1.0)} & \textbf{51.2} \scriptsize{(1.0)} & 
			\textbf{49.6} \scriptsize{(2.4)} & \textbf{51.3} \scriptsize{(1.6)} & \textbf{51.3} \scriptsize{(1.6)} & 
                \textbf{41.2} \scriptsize{(2.1)} & \textbf{43.8} \scriptsize{(1.8)} & \textbf{43.8} \scriptsize{(1.8)} \\ 
			{} & {} & 
			2 & 
                \textbf{65.7} \scriptsize{(3.0)} & \textbf{91.5} \scriptsize{(1.1)} & \textbf{88.8} \scriptsize{(1.6)} & 
			\textbf{73.5} \scriptsize{(2.5)} & \textbf{85.7} \scriptsize{(4.2)} & 76.1 \scriptsize{(2.1)} & 
                \textbf{61.8} \scriptsize{(2.1)} & \textbf{63.1} \scriptsize{(1.5)} & \textbf{60.1} \scriptsize{(1.4)} \\
                {} & {} & 
			5 & 
                \textbf{53.7} \scriptsize{(3.8)} & \textbf{92.9} \scriptsize{(0.8)} & \textbf{91.5} \scriptsize{(1.4)} & 
			\textbf{77.6} \scriptsize{(4.0)} & \textbf{81.0} \scriptsize{(1.3)} & \textbf{79.4} \scriptsize{(1.0)} & 
                \textbf{50.6} \scriptsize{(2.7)} & \textbf{63.3} \scriptsize{(1.4)} & \textbf{65.8} \scriptsize{(1.4)} \\
                {} & {} & 
			10 & 
                \textbf{58.0} \scriptsize{(3.4)} & \textbf{88.8} \scriptsize{(0.9)} & \textbf{87.8} \scriptsize{(1.2)} & 
			\textbf{83.8} \scriptsize{(2.8)} & \textbf{86.4} \scriptsize{(2.0)} & \textbf{87.2} \scriptsize{(1.3)} & 
                \textbf{59.7} \scriptsize{(3.0)} & \textbf{67.5} \scriptsize{(1.9)} & \textbf{68.4} \scriptsize{(2.2)} \\
                \midrule
			\multirow{8}{*}{Llama-2 \scriptsize{(7B)}} & \multirow{4}{*}{$\mathcal{D}_\mathtt{TRAIN}$} & 
			1 & 
                42.9 \scriptsize{(6.6)} & 48.5 \scriptsize{(0.1)} & 48.5 \scriptsize{(0.1)} & 
			46.2 \scriptsize{(2.7)} & 49.1 \scriptsize{(0.1)} & 49.1 \scriptsize{(0.1)} & 
                40.1 \scriptsize{(6.1)} & 42.0 \scriptsize{(0.2)} & 42.0 \scriptsize{(0.2)} \\ 
			{} & {} & 
			2 & 
                51.9 \scriptsize{(4.4)} & 90.7 \scriptsize{(0.1)} & 85.2 \scriptsize{(0.2)} & 
			67.8 \scriptsize{(3.2)} & 73.5 \scriptsize{(0.2)} & 74.5 \scriptsize{(0.2)} & 
                43.5 \scriptsize{(4.5)} & 47.4 \scriptsize{(0.2)} & 49.6 \scriptsize{(0.1)} \\
                {} & {} & 
			5 & 
                51.6 \scriptsize{(3.2)} & 86.8 \scriptsize{(0.2)} & 82.9 \scriptsize{(0.1)} & 
			74.8 \scriptsize{(3.8)} & 81.2 \scriptsize{(0.2)} & 78.7 \scriptsize{(0.2)} & 
                50.2 \scriptsize{(3.7)} & 52.6 \scriptsize{(0.2)} & 48.2 \scriptsize{(0.3)} \\
                {} & {} & 
			10 & 
                56.1 \scriptsize{(4.6)} & 81.3 \scriptsize{(0.1)} & 85.7 \scriptsize{(0.1)} & 
			73.2 \scriptsize{(3.1)} & 76.3 \scriptsize{(0.1)} & 77.1 \scriptsize{(0.1)} & 
                59.6 \scriptsize{(4.3)} & 55.3 \scriptsize{(0.2)} & 60.0 \scriptsize{(0.4)} \\
                \cmidrule{2-12}
                {} & \multirow{4}{*}{$\mathcal{D}_\mathtt{FEEDER}$} & 
			1 & 
                \textbf{43.8} \scriptsize{(4.3)} & \textbf{49.7} \scriptsize{(1.0)} & \textbf{49.7} \scriptsize{(1.0)} & 
			\textbf{47.2} \scriptsize{(2.4)} & \textbf{50.8} \scriptsize{(1.7)} & \textbf{50.8} \scriptsize{(1.7)} & 
                \textbf{41.2} \scriptsize{(2.1)} & \textbf{43.8} \scriptsize{(1.8)} & \textbf{43.8} \scriptsize{(1.8)} \\ 
			{} & {} & 
			2 & 
                \textbf{54.8} \scriptsize{(3.0)} & \textbf{92.5} \scriptsize{(1.1)} & 84.8 \scriptsize{(0.7)} & 
			\textbf{72.2} \scriptsize{(3.1)} & \textbf{82.5} \scriptsize{(4.0)} & \textbf{80.1} \scriptsize{(2.6)} & 
                \textbf{50.8} \scriptsize{(2.3)} & \textbf{58.6} \scriptsize{(1.7)} & \textbf{53.5} \scriptsize{(1.3)} \\
                {} & {} & 
			5 & 
                \textbf{53.7} \scriptsize{(3.8)} & \textbf{87.9} \scriptsize{(1.8)} & \textbf{91.5} \scriptsize{(1.4)} & 
			\textbf{78.3} \scriptsize{(4.6)} & \textbf{83.2} \scriptsize{(1.1)} & \textbf{80.1} \scriptsize{(1.4)} & 
                \textbf{53.8} \scriptsize{(2.8)} & \textbf{65.3} \scriptsize{(1.6)} & \textbf{61.8} \scriptsize{(1.4)} \\
                {} & {} & 
			10 & 
                \textbf{58.0} \scriptsize{(3.4)} & \textbf{85.8} \scriptsize{(0.9)} & \textbf{87.8} \scriptsize{(1.2)} & 
			\textbf{85.0} \scriptsize{(2.2)} & \textbf{87.1} \scriptsize{(2.2)} & \textbf{86.9} \scriptsize{(1.0)} & 
                \textbf{60.5} \scriptsize{(3.1)} & \textbf{68.0} \scriptsize{(1.7)} & \textbf{68.4} \scriptsize{(2.0)} \\
			\bottomrule
	\end{tabular}
	}
        \label{tab:appmainmain}
	\vspace{-3mm}
\end{table*}

\begin{table*}
	\centering
	\caption{A complementary table to Table~\ref{tab:appmainmain} presents the corresponding results for the demonstration selectors Uncertainty, Clustering, Latent.}
        \vspace{2mm}
	\resizebox{1\textwidth}{!}{
		\begin{tabular}{@{\extracolsep{4pt}}|c|c|c|c|c|c|c|c|c|c|c|c|}
			\toprule
			\multirow{2}{*}{$\Psi_\mathtt{LLM}(\cdot)$} & \multirow{2}{*}{$\mathcal{D}$} & \multirow{2}{*}{$n$} & \multicolumn{3}{c|}{SUBJ} & \multicolumn{3}{c|}{SST-2} & \multicolumn{3}{c|}{COLA} \\
			\cmidrule{4-6}
			\cmidrule{7-9}
			\cmidrule{10-12}
			{} & {} & {} & Uncertainty & Clustering & Latent & Uncertainty & Clustering & Latent & Uncertainty & Clustering & Latent \\
                \midrule
			\multirow{8}{*}{GPT-3 \scriptsize{(6B)}} & \multirow{4}{*}{$\mathcal{D}_\mathtt{TRAIN}$} & 
			1 & 
                53.5 \scriptsize{(6.3)} & 49.3 \scriptsize{(4.4)} & 51.5 \scriptsize{(2.1)} & 
			49.0 \scriptsize{(2.9)} & 47.5 \scriptsize{(1.5)} & 47.8 \scriptsize{(1.1)} & 
                42.0 \scriptsize{(6.5)} & 39.8 \scriptsize{(1.5)} & 40.2 \scriptsize{(1.2)} \\ 
			{} & {} & 
			2 & 
                87.8 \scriptsize{(3.7)} & 86.5 \scriptsize{(4.1)} & 86.3 \scriptsize{(3.5)} & 
			75.6 \scriptsize{(4.2)} & 80.1 \scriptsize{(2.2)} & 79.0 \scriptsize{(2.4)} & 
                49.6 \scriptsize{(4.0)} & 46.8 \scriptsize{(5.0)} & 47.5 \scriptsize{(3.3)} \\
                {} & {} & 
			5 & 
                90.7 \scriptsize{(4.5)} & 88.2 \scriptsize{(4.4)} & 89.4 \scriptsize{(4.2)} & 
			81.8 \scriptsize{(3.3)} & 82.2 \scriptsize{(3.3)} & 80.7 \scriptsize{(4.4)} & 
                55.4 \scriptsize{(3.5)} & 56.4 \scriptsize{(4.3)} & 58.8 \scriptsize{(3.3)} \\
                {} & {} & 
			10 & 
                88.3 \scriptsize{(4.8)} & 90.7 \scriptsize{(3.8)} & 91.3 \scriptsize{(4.1)} & 
			80.5 \scriptsize{(3.8)} & 78.8 \scriptsize{(3.9)} & 76.8 \scriptsize{(4.1)} & 
                58.4 \scriptsize{(4.2)} & 62.1 \scriptsize{(3.6)} & 61.5 \scriptsize{(4.5)} \\
                \cmidrule{2-12}
                {} & \multirow{4}{*}{$\mathcal{D}_\mathtt{FEEDER}$} & 
			1 & 
                \textbf{55.3} \scriptsize{(4.2)} & \textbf{50.9} \scriptsize{(4.4)} & 50.2 \scriptsize{(3.2)} & 
			\textbf{50.3} \scriptsize{(2.4)} & \textbf{48.4} \scriptsize{(3.4)} & \textbf{48.3} \scriptsize{(2.6)} & 
                \textbf{43.8} \scriptsize{(2.1)} & \textbf{40.8} \scriptsize{(3.5)} & \textbf{42.5} \scriptsize{(5.1)} \\ 
			{} & {} & 
			2 & 
                \textbf{89.8} \scriptsize{(3.0)} & \textbf{89.7} \scriptsize{(3.5)} & \textbf{89.5} \scriptsize{(2.5)} & 
			\textbf{77.1} \scriptsize{(2.5)} & \textbf{82.5} \scriptsize{(3.5)} & 83.0 \scriptsize{(3.2)} & 
                \textbf{60.0} \scriptsize{(2.1)} & \textbf{57.8} \scriptsize{(4.4)} & \textbf{58.1} \scriptsize{(3.5)} \\
                {} & {} & 
			5 & 
                \textbf{92.3} \scriptsize{(3.8)} & \textbf{92.0} \scriptsize{(2.4)} & \textbf{91.8} \scriptsize{(2.9)} & 
			\textbf{81.2} \scriptsize{(4.0)} & \textbf{80.8} \scriptsize{(3.8)} & \textbf{80.4} \scriptsize{(2.9)} & 
                \textbf{62.4} \scriptsize{(2.7)} & \textbf{61.6} \scriptsize{(3.7)} & \textbf{62.3} \scriptsize{(2.4)} \\
                {} & {} & 
			10 & 
                \textbf{90.8} \scriptsize{(3.4)} & \textbf{92.0} \scriptsize{(2.4)} & \textbf{91.8} \scriptsize{(2.9)} & 
			\textbf{81.2} \scriptsize{(2.8)} & \textbf{80.8} \scriptsize{(3.8)} & \textbf{80.4} \scriptsize{(2.9)} & 
                \textbf{62.4} \scriptsize{(3.0)} & \textbf{62.7} \scriptsize{(3.1)} & \textbf{62.5} \scriptsize{(2.5)} \\
                \midrule
			\multirow{8}{*}{Llama-2 \scriptsize{(7B)}} & \multirow{4}{*}{$\mathcal{D}_\mathtt{TRAIN}$} & 
			1 & 
                49.0 \scriptsize{(6.6)} & 48.5 \scriptsize{(5.6)} & 47.5 \scriptsize{(5.1)} & 
			49.2 \scriptsize{(2.7)} & 48.2 \scriptsize{(3.7)} & 48.7 \scriptsize{(3.1)} & 
                40.1 \scriptsize{(6.1)} & 41.1 \scriptsize{(4.1)} & 41.0 \scriptsize{(3.2)} \\ 
			{} & {} & 
			2 & 
                89.2 \scriptsize{(4.4)} & 87.8 \scriptsize{(3.5)} & 88.7 \scriptsize{(4.1)} & 
			75.1 \scriptsize{(3.2)} & 72.5 \scriptsize{(2.2)} & 74.7 \scriptsize{(4.2)} & 
                48.5 \scriptsize{(4.5)} & 45.2 \scriptsize{(4.0)} & 46.4 \scriptsize{(1.2)} \\
                {} & {} & 
			5 & 
                82.9 \scriptsize{(3.2)} & 80.1 \scriptsize{(2.2)} & 83.8 \scriptsize{(1.2)} & 
			83.7 \scriptsize{(3.8)} & 81.5 \scriptsize{(3.0)} & 82.2 \scriptsize{(1.2)} & 
                53.2 \scriptsize{(3.7)} & 51.2 \scriptsize{(2.5)} & 52.6 \scriptsize{(2.2)} \\
                {} & {} & 
			10 & 
                86.2 \scriptsize{(4.6)} & 82.1 \scriptsize{(4.4)} & 83.3 \scriptsize{(2.1)} & 
			76.4 \scriptsize{(3.1)} & 75.2 \scriptsize{(3.7)} & 74.8 \scriptsize{(4.1)} & 
                63.5 \scriptsize{(4.3)} & 62.6 \scriptsize{(4.0)} & 60.3 \scriptsize{(2.2)} \\
                \cmidrule{2-12}
                {} & \multirow{4}{*}{$\mathcal{D}_\mathtt{FEEDER}$} & 
			1 & 
                \textbf{49.7} \scriptsize{(4.3)} & 45.8 \scriptsize{(4.3)} & \textbf{48.7} \scriptsize{(5.1)} & 
			\textbf{51.8} \scriptsize{(2.4)} & \textbf{48.4} \scriptsize{(3.5)} & \textbf{50.3} \scriptsize{(2.7)} & 
                \textbf{43.0} \scriptsize{(2.1)} & \textbf{42.2} \scriptsize{(2.5)} & \textbf{42.8} \scriptsize{(1.8)} \\ 
			{} & {} & 
			2 & 
                \textbf{91.8} \scriptsize{(3.0)} & \textbf{90.8} \scriptsize{(3.4)} & \textbf{91.5} \scriptsize{(2.4)} & 
			\textbf{78.1} \scriptsize{(3.1)} & \textbf{73.5} \scriptsize{(3.1)} & \textbf{76.5} \scriptsize{(4.0)} & 
                \textbf{49.5} \scriptsize{(2.3)} & \textbf{48.8} \scriptsize{(2.3)} & \textbf{50.6} \scriptsize{(2.7)} \\
                {} & {} & 
			5 & 
                \textbf{89.5} \scriptsize{(3.8)} & \textbf{88.7} \scriptsize{(4.8)} & \textbf{86.9} \scriptsize{(2.8)} & 
			\textbf{84.1} \scriptsize{(4.6)} & \textbf{82.3} \scriptsize{(4.5)} & \textbf{83.8} \scriptsize{(4.1)} & 
                \textbf{60.8} \scriptsize{(2.8)} & \textbf{58.8} \scriptsize{(3.8)} & \textbf{59.3} \scriptsize{(2.6)} \\
                {} & {} & 
			10 & 
                \textbf{88.8} \scriptsize{(3.4)} & \textbf{88.0} \scriptsize{(4.4)} & \textbf{86.8} \scriptsize{(2.9)} & 
			\textbf{80.9} \scriptsize{(2.2)} & \textbf{85.1} \scriptsize{(2.0)} & \textbf{83.4} \scriptsize{(2.2)} & 
                \textbf{67.4} \scriptsize{(3.1)} & \textbf{64.5} \scriptsize{(3.4)} & \textbf{66.0} \scriptsize{(2.7)} \\
			\bottomrule
	\end{tabular}
	}
        \label{tab:mainadd}
\end{table*}

\begin{table*}
	\centering
	\caption{Performance comparisons on text classification datasets are conducted in the in-context learning setting.
    We report both the mean and variance of accuracy using 8 different seeds and 5 different permutations of n-shots.
    This table is extended from Table~\ref{tab:main}.
    }
        \vspace{1mm}
	\resizebox{1\textwidth}{!}{
		\begin{tabular}{@{\extracolsep{4pt}}|c|c|c|c|c|c|c|c|c|c|c|c|}
			\toprule
			\multirow{2}{*}{$\Psi_\mathtt{LLM}(\cdot)$} & \multirow{2}{*}{$\mathcal{D}$} & \multirow{2}{*}{$n$} & \multicolumn{3}{c|}{FPB} & \multicolumn{3}{c|}{SST-5} & \multicolumn{3}{c|}{TREC} \\
			\cmidrule{4-6}
			\cmidrule{7-9}
			\cmidrule{10-12}
			{} & {} & {} & Random & Similarity & Diversity & Random & Similarity & Diversity & Random & Similarity & Diversity \\
			\midrule
			\multirow{8}{*}{GPT-2 \scriptsize{(0.3B)}} & \multirow{4}{*}{$\mathcal{D}_\mathtt{TRAIN}$} & 
			1 & 
                27.2 \scriptsize{(6.1)} & 25.3 \scriptsize{(0.1)} & 25.3 \scriptsize{(0.1)} & 
			14.5 \scriptsize{(6.1)} & 22.7 \scriptsize{(0.2)} & 22.7 \scriptsize{(0.2)} & 
                19.4 \scriptsize{(6.4)} & 42.8 \scriptsize{(0.1)} & 42.8 \scriptsize{(0.1)} \\ 
			{} & {} & 
			2 & 
			27.4 \scriptsize{(6.2)} & 45.8 \scriptsize{(0.2)} & 40.4 \scriptsize{(0.1)} & 
                18.0 \scriptsize{(5.8)} & 25.6 \scriptsize{(0.1)} & 23.7 \scriptsize{(0.2)} & 
                21.4 \scriptsize{(4.7)} & 57.2 \scriptsize{(0.2)} & 51.4 \scriptsize{(0.1)} \\
                {} & {} & 
			5 & 
			26.3 \scriptsize{(4.5)} & 55.9 \scriptsize{(0.1)} & 44.7 \scriptsize{(0.2)} & 
                26.5 \scriptsize{(5.3)} & 32.3 \scriptsize{(0.2)} & 27.8 \scriptsize{(0.1)} & 
                37.6 \scriptsize{(5.1)} & 66.0 \scriptsize{(0.3)} & 61.4 \scriptsize{(0.3)} \\
                {} & {} & 
			10 & 
			27.8 \scriptsize{(5.1)} & 63.1 \scriptsize{(0.1)} & 50.7 \scriptsize{(0.1)} & 
                14.9 \scriptsize{(3.9)} & 35.3 \scriptsize{(0.1)} & 30.4 \scriptsize{(0.2)} & 
                53.0 \scriptsize{(5.2)} & 71.4 \scriptsize{(0.2)} & 65.8 \scriptsize{(0.3)} \\
                \cmidrule{2-12}
                {} & \multirow{4}{*}{$\mathcal{D}_\mathtt{FEEDER}$} & 
			1 & 
                \textbf{28.4} \scriptsize{(3.4)} & \textbf{28.8} \scriptsize{(2.1)} & \textbf{28.8} \scriptsize{(2.1)} & 
			\textbf{15.4} \scriptsize{(5.2)} & \textbf{23.7} \scriptsize{(1.7)} & \textbf{23.7} \scriptsize{(1.7)} & 
                \textbf{37.4} \scriptsize{(3.6)}  & \textbf{48.4} \scriptsize{(1.6)} & \textbf{48.4} \scriptsize{(1.6)} \\ 
			{} & {} & 
			2 & 
			\textbf{35.5} \scriptsize{(4.3)} & \textbf{47.4} \scriptsize{(2.6)} & 37.9 \scriptsize{(1.9)} & 
                \textbf{20.9} \scriptsize{(4.7)} & \textbf{27.9} \scriptsize{(1.1)} & \textbf{25.8} \scriptsize{(1.3)} & 
                \textbf{27.6} \scriptsize{(3.2)} & \textbf{58.8} \scriptsize{(2.2)} & \textbf{52.1} \scriptsize{(1.9)} \\
                {} & {} & 
			5 & 
			\textbf{28.3} \scriptsize{(3.0)} & 54.6 \scriptsize{(1.7)} & \textbf{47.9} \scriptsize{(1.0)} &  
                \textbf{28.6} \scriptsize{(3.4)} & \textbf{33.2} \scriptsize{(1.8)} & 27.4 \scriptsize{(1.7)} & 
                \textbf{40.8} \scriptsize{(3.0)} & \textbf{67.4} \scriptsize{(1.2)} & \textbf{61.8} \scriptsize{(1.3)} \\
                {} & {} & 
			10 & 
			\textbf{39.6} \scriptsize{(3.4)} & \textbf{66.5} \scriptsize{(2.3)} & \textbf{51.8} \scriptsize{(1.2)} & 
                \textbf{17.6} \scriptsize{(2.2)} & \textbf{36.9} \scriptsize{(1.9)} & 29.8 \scriptsize{(1.7)} & 
                44.6 \scriptsize{(2.8)} & \textbf{74.6} \scriptsize{(1.4)} & \textbf{67.6} \scriptsize{(1.9)} \\
                \midrule
			\multirow{8}{*}{GPT-2 \scriptsize{(0.8B)}} & \multirow{4}{*}{$\mathcal{D}_\mathtt{TRAIN}$} & 
			1 & 
                33.8 \scriptsize{(5.2)} & 29.9 \scriptsize{(0.1)} & 29.9 \scriptsize{(0.1)} &
			14.2 \scriptsize{(4.9)} & 25.2 \scriptsize{(0.1)} & 25.2 \scriptsize{(0.1)} & 
                21.0 \scriptsize{(4.6)} & 53.2 \scriptsize{(0.2)} & 53.2 \scriptsize{(0.2)} \\ 
			{} & {} & 
			2 & 
                27.0 \scriptsize{(6.1)} & 55.4 \scriptsize{(0.2)} & 49.9 \scriptsize{(0.3)} & 
			18.1 \scriptsize{(5.1)} & 29.7 \scriptsize{(0.1)} & 24.4 \scriptsize{(0.2)} & 
                28.2 \scriptsize{(4.4)} & 62.6 \scriptsize{(0.2)} & 60.6 \scriptsize{(0.2)} \\
                {} & {} & 
			5 & 
                27.2 \scriptsize{(4.8)} & 64.3 \scriptsize{(0.1)} & 45.1 \scriptsize{(0.3)} & 
			25.6 \scriptsize{(4.8)} & 34.1 \scriptsize{(0.1)} & 30.8 \scriptsize{(0.1)} & 
                35.4 \scriptsize{(5.7)} & 63.4 \scriptsize{(0.1)} & 64.6 \scriptsize{(0.1)} \\
                {} & {} & 
			10 & 
                47.0 \scriptsize{(5.5)} & 65.5 \scriptsize{(0.2)} & 52.9 \scriptsize{(0.1)} & 
			28.7 \scriptsize{(4.2)} & 38.7 \scriptsize{(0.1)} & 36.6 \scriptsize{(0.1)} & 
                43.2 \scriptsize{(4.8)} & 66.0 \scriptsize{(0.1)} & 68.8 \scriptsize{(0.1)} \\
                \cmidrule{2-12}
                {} & \multirow{4}{*}{$\mathcal{D}_\mathtt{FEEDER}$} & 
			1 & 
                33.8 \scriptsize{(4.4)} & \textbf{32.6} \scriptsize{(0.7)} & \textbf{32.6} \scriptsize{(0.7)} & 
			\textbf{18.7} \scriptsize{(3.0)} & \textbf{25.5} \scriptsize{(2.2)} & \textbf{25.5} \scriptsize{(2.2)} & 
                \textbf{22.4} \scriptsize{(3.8)} & 52.6 \scriptsize{(2.1)} & 52.6 \scriptsize{(2.1)} \\ 
			{} & {} & 
			2 & 
                \textbf{37.5} \scriptsize{(4.7)} & 54.8 \scriptsize{(1.1)} & 47.6 \scriptsize{(1.3)} &
			\textbf{25.2} \scriptsize{(3.8)} & 29.7 \scriptsize{(1.9)} & 24.1 \scriptsize{(2.1)} & 
                \textbf{34.6} \scriptsize{(3.5)} & \textbf{64.2} \scriptsize{(1.8)} & 59.4 \scriptsize{(2.0)} \\
                {} & {} & 
			5 & 
                \textbf{38.9} \scriptsize{(3.3)} & \textbf{64.5} \scriptsize{(1.3)} & \textbf{48.0} \scriptsize{(2.7)} & 
			\textbf{39.3} \scriptsize{(2.9)} & \textbf{35.2} \scriptsize{(1.1)} & \textbf{31.0} \scriptsize{(1.2)} & 
                \textbf{45.4} \scriptsize{(3.3)} & \textbf{65.5} \scriptsize{(1.5)} & \textbf{64.9} \scriptsize{(1.7)} \\
                {} & {} & 
			10 & 
                \textbf{63.5} \scriptsize{(2.8)} & \textbf{66.7} \scriptsize{(1.6)} & \textbf{53.1} \scriptsize{(1.5)} & 
			\textbf{39.6} \scriptsize{(3.0)} & \textbf{39.8} \scriptsize{(1.8)} & \textbf{37.8} \scriptsize{(1.6)} & 
                \textbf{55.8} \scriptsize{(3.8)} & \textbf{70.4} \scriptsize{(2.0)} & 68.6 \scriptsize{(1.7)} \\
                \midrule
			\multirow{8}{*}{GPT-neo \scriptsize{(1.3B)}} & \multirow{4}{*}{$\mathcal{D}_\mathtt{TRAIN}$} & 
			1 & 
                54.9 \scriptsize{(3.9)} & 61.6 \scriptsize{(0.1)} & 61.6 \scriptsize{(0.1)} & 
			12.8 \scriptsize{(2.7)} & 20.2 \scriptsize{(0.1)} & 20.2 \scriptsize{(0.1)} & 
                11.0 \scriptsize{(3.2)} & 57.2 \scriptsize{(0.2)} & 57.2 \scriptsize{(0.2)} \\ 
			{} & {} & 
			2 & 
                53.6 \scriptsize{(4.0)} & 66.8 \scriptsize{(0.2)} & 60.0 \scriptsize{(0.1)} & 
			17.9 \scriptsize{(3.6)} & 26.9 \scriptsize{(0.1)} & 22.7 \scriptsize{(0.1)} & 
                17.6 \scriptsize{(3.1)} & 52.6 \scriptsize{(0.2)} & 42.2 \scriptsize{(0.2)} \\
                {} & {} & 
			5 & 
                28.2 \scriptsize{(4.0)} & 68.2 \scriptsize{(0.1)} & 60.4 \scriptsize{(0.1)} & 
			19.0 \scriptsize{(3.9)} & 29.2 \scriptsize{(0.1)} & 25.1 \scriptsize{(0.1)} & 
                25.2 \scriptsize{(3.8)} & 66.4 \scriptsize{(0.1)} & 61.8 \scriptsize{(0.1)} \\
                {} & {} & 
			10 & 
                49.0 \scriptsize{(4.8)} & 75.8 \scriptsize{(0.1)} & 71.1 \scriptsize{(0.2)} & 
			12.7 \scriptsize{(2.8)} & 33.7 \scriptsize{(0.2)} & 31.9 \scriptsize{(0.1)} & 
                41.6 \scriptsize{(4.4)} & 70.6 \scriptsize{(0.1)} & 69.0 \scriptsize{(0.1)} \\
                \cmidrule{2-12}
                {} & \multirow{4}{*}{$\mathcal{D}_\mathtt{FEEDER}$} & 
			1 & 
                \textbf{58.1} \scriptsize{(4.7)} & \textbf{61.8} \scriptsize{(1.4)} & \textbf{61.8} \scriptsize{(1.4)} & 
			\textbf{18.5} \scriptsize{(2.1)} & \textbf{20.6} \scriptsize{(1.8)} & \textbf{20.6} \scriptsize{(1.4)} & 
                \textbf{18.2} \scriptsize{(2.4)} & 56.4 \scriptsize{(1.3)} & 56.4 \scriptsize{(1.3)} \\ 
			{} & {} & 
			2 & 
                \textbf{61.4} \scriptsize{(3.3)} & 64.1 \scriptsize{(1.5)} & 58.8 \scriptsize{(1.1)} & 
			\textbf{19.7} \scriptsize{(2.7)} & \textbf{27.4} \scriptsize{(2.1)} & \textbf{22.8} \scriptsize{(1.8)} & 
                \textbf{27.8} \scriptsize{(2.7)} & \textbf{54.0} \scriptsize{(1.4)} & \textbf{44.5} \scriptsize{(1.6)} \\
                {} & {} & 
			5 & 
                \textbf{43.2} \scriptsize{(2.6)} & \textbf{68.8} \scriptsize{(1.8)} & \textbf{62.7} \scriptsize{(1.3)} & 
			\textbf{19.2} \scriptsize{(3.2)} & \textbf{30.2} \scriptsize{(2.7)} & \textbf{26.4} \scriptsize{(2.4)} & 
                \textbf{50.4} \scriptsize{(3.2)} & \textbf{68.0} \scriptsize{(1.4)} & \textbf{62.6} \scriptsize{(1.9)} \\
                {} & {} & 
			10 & 
                \textbf{61.4} \scriptsize{(2.3)} & 74.8 \scriptsize{(1.9)} & \textbf{71.9} \scriptsize{(1.8)} & 
			\textbf{15.4} \scriptsize{(2.4)} & \textbf{37.0} \scriptsize{(1.5)} & \textbf{34.5} \scriptsize{(1.9)} & 
                \textbf{45.2} \scriptsize{(2.9)} & \textbf{72.8} \scriptsize{(1.4)} & \textbf{69.8} \scriptsize{(1.5)} \\
                \midrule
			\multirow{8}{*}{Gemma-2 \scriptsize{(2B)}} & \multirow{4}{*}{$\mathcal{D}_\mathtt{TRAIN}$} & 
			1 & 
                58.2 \scriptsize{(5.7)} & 62.5 \scriptsize{(0.1)} & 62.5 \scriptsize{(0.1)} & 
			21.5 \scriptsize{(3.9)} & 22.5 \scriptsize{(0.1)} & 22.5 \scriptsize{(0.1)} & 
                21.9 \scriptsize{(3.4)} & 52.3 \scriptsize{(0.1)} & 52.3 \scriptsize{(0.1)} \\ 
			{} & {} & 
			2 & 
                59.2 \scriptsize{(5.9)} & 66.2 \scriptsize{(0.4)} & 65.8 \scriptsize{(0.3)} & 
			26.5 \scriptsize{(3.6)} & 42.5 \scriptsize{(0.6)} & 42.2 \scriptsize{(0.6)} & 
                35.6 \scriptsize{(4.4)} & 60.0 \scriptsize{(0.2)} & 59.1 \scriptsize{(0.1)} \\
                {} & {} & 
			5 & 
                48.6 \scriptsize{(3.6)} & 76.6 \scriptsize{(0.4)} & 78.8 \scriptsize{(0.6)} & 
			26.6 \scriptsize{(2.5)} & 48.8 \scriptsize{(0.3)} & 41.2 \scriptsize{(0.4)} & 
                55.8 \scriptsize{(2.9)} & 82.2 \scriptsize{(0.2)} & 71.1 \scriptsize{(0.6)} \\
                {} & {} & 
			10 & 
                35.2 \scriptsize{(6.5)} & 79.5 \scriptsize{(0.4)} & 78.8 \scriptsize{(0.2)} & 
			36.6 \scriptsize{(4.4)} & 50.2 \scriptsize{(0.8)} & 43.3 \scriptsize{(0.4)} & 
                51.1 \scriptsize{(3.3)} & 84.3 \scriptsize{(0.5)} & 75.0 \scriptsize{(0.4)} \\
                \cmidrule{2-12}
                {} & \multirow{4}{*}{$\mathcal{D}_\mathtt{FEEDER}$} & 
			1 & 
                \textbf{59.9} \scriptsize{(4.4)} & \textbf{64.6} \scriptsize{(0.6)} & \textbf{64.6} \scriptsize{(0.6)} & 
			\textbf{22.6} \scriptsize{(4.3)} & \textbf{25.8} \scriptsize{(1.3)} & \textbf{25.8} \scriptsize{(1.3)} & 
                \textbf{26.2} \scriptsize{(1.8)} & \textbf{55.1} \scriptsize{(1.8)} & \textbf{55.1} \scriptsize{(1.8)} \\ 
			{} & {} & 
			2 & 
                55.4 \scriptsize{(2.4)} & \textbf{67.8} \scriptsize{(1.8)} & \textbf{67.0} \scriptsize{(1.1)} & 
			\textbf{28.7} \scriptsize{(2.3)} & \textbf{45.4} \scriptsize{(1.0)} & \textbf{46.8} \scriptsize{(1.1)} & 
                \textbf{40.8} \scriptsize{(1.5)} & \textbf{63.6} \scriptsize{(1.3)} & \textbf{62.8} \scriptsize{(1.6)} \\
                {} & {} & 
			5 & 
                \textbf{52.2} \scriptsize{(3.4)} & \textbf{88.0} \scriptsize{(4.6)} & \textbf{80.1} \scriptsize{(3.2)} & 
			\textbf{30.5} \scriptsize{(2.0)} & \textbf{52.6} \scriptsize{(1.9)} & \textbf{54.4} \scriptsize{(1.4)} & 
                \textbf{60.4} \scriptsize{(2.5)} & \textbf{87.8} \scriptsize{(1.6)} & \textbf{73.0} \scriptsize{(1.2)} \\
                {} & {} & 
			10 & 
                \textbf{39.1} \scriptsize{(5.1)} & \textbf{81.3} \scriptsize{(3.3)} & \textbf{83.8} \scriptsize{(2.4)} & 
			\textbf{36.8} \scriptsize{(2.2)} & \textbf{62.5} \scriptsize{(1.5)} & \textbf{54.9} \scriptsize{(1.3)} & 
                \textbf{58.1} \scriptsize{(5.2)} & \textbf{88.9} \scriptsize{(1.8)} & \textbf{83.4} \scriptsize{(1.4)} \\
                \midrule
			\multirow{8}{*}{GPT-3 \scriptsize{(6B)}} & \multirow{4}{*}{$\mathcal{D}_\mathtt{TRAIN}$} & 
			1 & 
                30.7 \scriptsize{(5.5)} & 55.3 \scriptsize{(0.1)} & 55.3 \scriptsize{(0.1)} & 
			19.6 \scriptsize{(3.6)} & 20.5 \scriptsize{(0.1)} & 20.5 \scriptsize{(0.1)} & 
                21.4 \scriptsize{(4.4)} & 50.7 \scriptsize{(0.1)} & 50.7 \scriptsize{(0.1)} \\ 
			{} & {} & 
			2 & 
                33.4 \scriptsize{(4.9)} & 64.9 \scriptsize{(0.4)} & 65.5 \scriptsize{(0.3)} & 
			24.1 \scriptsize{(3.0)} & 30.5 \scriptsize{(0.4)} & 31.6 \scriptsize{(0.3)} & 
                34.4 \scriptsize{(4.0)} & 58.8 \scriptsize{(0.2)} & 60.7 \scriptsize{(0.1)} \\
                {} & {} & 
			5 & 
                40.6 \scriptsize{(3.0)} & 75.0 \scriptsize{(0.4)} & 74.9 \scriptsize{(0.1)} & 
			24.1 \scriptsize{(2.5)} & 32.5 \scriptsize{(0.3)} & 35.6 \scriptsize{(0.2)} & 
                51.8 \scriptsize{(2.9)} & 71.2 \scriptsize{(0.2)} & 70.6 \scriptsize{(0.4)} \\
                {} & {} & 
			10 & 
                25.9 \scriptsize{(6.5)} & 78.5 \scriptsize{(0.4)} & 79.5 \scriptsize{(0.2)} & 
			35.5 \scriptsize{(4.2)} & 38.9 \scriptsize{(0.1)} & 40.5 \scriptsize{(0.3)} & 
                49.5 \scriptsize{(3.6)} & 72.5 \scriptsize{(0.1)} & 73.0 \scriptsize{(0.2)} \\
                \cmidrule{2-12}
                {} & \multirow{4}{*}{$\mathcal{D}_\mathtt{FEEDER}$} & 
			1 & 
                \textbf{31.2} \scriptsize{(4.8)} & 54.8 \scriptsize{(0.8)} & 54.8 \scriptsize{(0.8)} & 
			\textbf{20.6} \scriptsize{(3.1)} & \textbf{27.8} \scriptsize{(1.3)} & \textbf{27.8} \scriptsize{(1.3)} & 
                \textbf{32.2} \scriptsize{(1.8)} & \textbf{52.1} \scriptsize{(1.8)} & \textbf{52.1} \scriptsize{(1.8)} \\ 
			{} & {} & 
			2 & 
                \textbf{35.4} \scriptsize{(2.4)} & \textbf{65.8} \scriptsize{(1.8)} & \textbf{67.1} \scriptsize{(0.9)} & 
			\textbf{28.7} \scriptsize{(2.3)} & \textbf{33.4} \scriptsize{(1.4)} & \textbf{33.0} \scriptsize{(1.1)} & 
                \textbf{44.8} \scriptsize{(2.5)} & \textbf{60.1} \scriptsize{(1.5)} & \textbf{61.8} \scriptsize{(1.4)} \\
                {} & {} & 
			5 & 
                \textbf{42.2} \scriptsize{(3.4)} & \textbf{77.9} \scriptsize{(3.6)} & \textbf{78.4} \scriptsize{(3.2)} & 
			\textbf{28.5} \scriptsize{(2.0)} & \textbf{35.6} \scriptsize{(1.3)} & \textbf{37.4} \scriptsize{(1.4)} & 
                \textbf{53.4} \scriptsize{(2.7)} & \textbf{75.8} \scriptsize{(1.6)} & \textbf{72.2} \scriptsize{(1.2)} \\
                {} & {} & 
			10 & 
                \textbf{39.1} \scriptsize{(5.1)} & \textbf{80.3} \scriptsize{(3.3)} & \textbf{82.8} \scriptsize{(2.4)} & 
			\textbf{36.8} \scriptsize{(2.2)} & \textbf{41.5} \scriptsize{(1.5)} & \textbf{40.9} \scriptsize{(1.3)} & 
                \textbf{54.1} \scriptsize{(5.2)} & \textbf{76.9} \scriptsize{(1.8)} & \textbf{80.4} \scriptsize{(1.4)} \\
                \midrule
                \multirow{8}{*}{Llama-2 \scriptsize{(7B)}} & \multirow{4}{*}{$\mathcal{D}_\mathtt{TRAIN}$} & 
			1 & 
                29.0 \scriptsize{(4.7)} & 47.1 \scriptsize{(0.1)} & 47.1 \scriptsize{(0.1)} & 
			28.6 \scriptsize{(2.9)} & 29.7 \scriptsize{(0.1)} & 29.7 \scriptsize{(0.1)} & 
                35.2 \scriptsize{(3.7)} & 54.2 \scriptsize{(0.1)} & 54.2 \scriptsize{(0.1)} \\ 
			{} & {} & 
			2 & 
                27.4 \scriptsize{(3.4)} & 68.4 \scriptsize{(0.2)} & 67.1 \scriptsize{(0.3)} & 
			35.9 \scriptsize{(3.1)} & 33.9 \scriptsize{(0.1)} & 33.5 \scriptsize{(0.3)} & 
                45.0 \scriptsize{(4.0)} & 69.4 \scriptsize{(0.1)} & 63.6 \scriptsize{(0.1)} \\
                {} & {} & 
			5 & 
                39.7 \scriptsize{(3.2)} & 80.3 \scriptsize{(0.2)} & 78.9 \scriptsize{(0.1)} & 
			37.9 \scriptsize{(2.3)} & 38.3 \scriptsize{(0.2)} & 37.0 \scriptsize{(0.1)} & 
                53.0 \scriptsize{(3.6)} & 79.0 \scriptsize{(0.2)} & 70.4 \scriptsize{(0.3)} \\
                {} & {} & 
			10 & 
                37.9 \scriptsize{(2.6)} & 87.4 \scriptsize{(0.3)} & 86.5 \scriptsize{(0.2)} & 
			38.4 \scriptsize{(3.8)} & 37.5 \scriptsize{(0.1)} & 40.0 \scriptsize{(0.2)} & 
                58.0 \scriptsize{(2.3)} & 83.4 \scriptsize{(0.1)} & 79.2 \scriptsize{(0.1)} \\
                \cmidrule{2-12}
                {} & \multirow{4}{*}{$\mathcal{D}_\mathtt{FEEDER}$} & 
			1 & 
                \textbf{33.7} \scriptsize{(5.3)} & \textbf{51.7} \scriptsize{(0.8)} & \textbf{51.7} \scriptsize{(0.8)} & 
			27.6 \scriptsize{(2.4)} & \textbf{32.3} \scriptsize{(1.5)} & \textbf{32.3} \scriptsize{(1.3)} & 
                \textbf{41.2} \scriptsize{(2.1)} & \textbf{56.8} \scriptsize{(1.8)} & \textbf{56.8} \scriptsize{(1.8)} \\ 
			{} & {} & 
			2 & 
                \textbf{39.6} \scriptsize{(5.0)} & \textbf{68.7} \scriptsize{(1.5)} & \textbf{69.8} \scriptsize{(0.7)} & 
			\textbf{39.5} \scriptsize{(2.5)} & 32.6 \scriptsize{(1.2)} & 32.7 \scriptsize{(1.1)} & 
                \textbf{53.8} \scriptsize{(2.3)} & 68.6 \scriptsize{(1.7)} & 63.5 \scriptsize{(1.3)} \\
                {} & {} & 
			5 & 
                \textbf{45.6} \scriptsize{(4.8)} & \textbf{87.9} \scriptsize{(4.8)} & \textbf{79.5} \scriptsize{(3.5)} & 
			\textbf{39.2} \scriptsize{(2.0)} & \textbf{38.7} \scriptsize{(1.3)} & \textbf{39.4} \scriptsize{(1.0)} & 
                \textbf{58.2} \scriptsize{(2.8)} & \textbf{82.8} \scriptsize{(1.6)} & \textbf{71.8} \scriptsize{(1.4)} \\
                {} & {} & 
			10 & 
                37.8 \scriptsize{(6.4)} & 87.1 \scriptsize{(3.9)} & \textbf{87.8} \scriptsize{(2.2)} & 
			\textbf{39.7} \scriptsize{(2.8)} & \textbf{39.0} \scriptsize{(1.0)} & \textbf{41.6} \scriptsize{(1.3)} & 
                \textbf{59.8} \scriptsize{(3.1)} & \textbf{86.0} \scriptsize{(1.9)} & \textbf{83.4} \scriptsize{(2.0)} \\
			\bottomrule
		\end{tabular}
	}
        \label{tab:appmain}
	\vspace{-2mm}
\end{table*}

\begin{table*}
	\centering
	\caption{A complementary table to Table~\ref{tab:appmain} presents the corresponding results for the demonstration selectors Uncertainty, Clustering, Latent.}
        \vspace{2mm}
	\resizebox{1\textwidth}{!}{
		\begin{tabular}{@{\extracolsep{4pt}}|c|c|c|c|c|c|c|c|c|c|c|c|}
			\toprule
			\multirow{2}{*}{$\Psi_\mathtt{LLM}(\cdot)$} & \multirow{2}{*}{$\mathcal{D}$} & \multirow{2}{*}{$n$} & \multicolumn{3}{c|}{FPB} & \multicolumn{3}{c|}{SST-5} & \multicolumn{3}{c|}{TREC} \\
			\cmidrule{4-6}
			\cmidrule{7-9}
			\cmidrule{10-12}
			{} & {} & {} & Uncertainty & Clustering & Latent & Uncertainty & Clustering & Latent & Uncertainty & Clustering & Latent \\
                \midrule
			\multirow{8}{*}{GPT-3 \scriptsize{(6B)}} & \multirow{4}{*}{$\mathcal{D}_\mathtt{TRAIN}$} & 
			1 & 
                55.8 \scriptsize{(6.3)} & 56.3 \scriptsize{(4.0)} & 58.0 \scriptsize{(2.5)} & 
			29.0 \scriptsize{(2.9)} & 27.5 \scriptsize{(1.5)} & 25.8 \scriptsize{(1.1)} & 
                52.0 \scriptsize{(6.5)} & 49.8 \scriptsize{(1.5)} & 50.2 \scriptsize{(1.2)} \\ 
			{} & {} & 
			2 & 
                67.8 \scriptsize{(3.7)} & 66.5 \scriptsize{(4.1)} & 66.3 \scriptsize{(3.5)} & 
			35.6 \scriptsize{(4.2)} & 36.1 \scriptsize{(2.2)} & 34.0 \scriptsize{(2.4)} & 
                59.6 \scriptsize{(4.0)} & 60.8 \scriptsize{(5.0)} & 58.5 \scriptsize{(3.3)} \\
                {} & {} & 
			5 & 
                76.7 \scriptsize{(4.5)} & 78.2 \scriptsize{(4.4)} & 79.4 \scriptsize{(4.2)} & 
			41.8 \scriptsize{(3.3)} & 42.2 \scriptsize{(3.3)} & 40.7 \scriptsize{(4.4)} & 
                65.4 \scriptsize{(3.5)} & 66.4 \scriptsize{(4.3)} & 65.8 \scriptsize{(3.3)} \\
                {} & {} & 
			10 & 
                78.3 \scriptsize{(4.8)} & 80.7 \scriptsize{(3.8)} & 81.3 \scriptsize{(4.1)} & 
			40.5 \scriptsize{(3.8)} & 38.8 \scriptsize{(3.9)} & 36.8 \scriptsize{(4.1)} & 
                78.4 \scriptsize{(4.2)} & 72.1 \scriptsize{(3.6)} & 71.5 \scriptsize{(4.5)} \\
                \cmidrule{2-12}
                {} & \multirow{4}{*}{$\mathcal{D}_\mathtt{FEEDER}$} & 
			1 & 
                \textbf{56.3} \scriptsize{(4.2)} & \textbf{57.9} \scriptsize{(4.4)} & \textbf{58.2} \scriptsize{(3.2)} & 
			\textbf{32.3} \scriptsize{(2.4)} & \textbf{29.4} \scriptsize{(3.4)} & \textbf{28.3} \scriptsize{(2.6)} & 
                \textbf{53.8} \scriptsize{(2.1)} & \textbf{50.8} \scriptsize{(3.5)} & \textbf{52.5} \scriptsize{(5.1)} \\ 
			{} & {} & 
			2 & 
                \textbf{69.8} \scriptsize{(3.0)} & \textbf{69.7} \scriptsize{(3.5)} & \textbf{69.5} \scriptsize{(2.5)} & 
			\textbf{37.1} \scriptsize{(2.5)} & \textbf{42.5} \scriptsize{(3.5)} & \textbf{38.2} \scriptsize{(3.2)} & 
                \textbf{60.1} \scriptsize{(2.1)} & 57.8 \scriptsize{(4.8)} & \textbf{59.1} \scriptsize{(3.5)} \\
                {} & {} & 
			5 & 
                \textbf{82.3} \scriptsize{(3.8)} & \textbf{82.0} \scriptsize{(2.4)} & \textbf{81.8} \scriptsize{(2.9)} & 
			\textbf{44.2} \scriptsize{(4.0)} & \textbf{45.8} \scriptsize{(3.8)} & \textbf{44.4} \scriptsize{(2.9)} & 
                \textbf{68.4} \scriptsize{(2.7)} & \textbf{66.6} \scriptsize{(3.7)} & \textbf{67.3} \scriptsize{(2.4)} \\
                {} & {} & 
			10 & 
                \textbf{80.8} \scriptsize{(3.4)} & \textbf{83.0} \scriptsize{(2.4)} & \textbf{83.8} \scriptsize{(2.9)} & 
			\textbf{42.2} \scriptsize{(2.8)} & \textbf{40.8} \scriptsize{(3.8)} & \textbf{40.4} \scriptsize{(2.9)} & 
                \textbf{82.4} \scriptsize{(3.0)} & \textbf{74.7} \scriptsize{(3.1)} & \textbf{73.5} \scriptsize{(2.5)} \\
                \midrule
			\multirow{8}{*}{Llama-2 \scriptsize{(7B)}} & \multirow{4}{*}{$\mathcal{D}_\mathtt{TRAIN}$} & 
			1 & 
                49.0 \scriptsize{(6.6)} & 47.5 \scriptsize{(5.6)} & 47.5 \scriptsize{(5.1)} & 
			36.2 \scriptsize{(2.4)} & 37.2 \scriptsize{(3.7)} & 38.7 \scriptsize{(4.1)} & 
                55.1 \scriptsize{(6.1)} & 54.1 \scriptsize{(4.0)} & 54.0 \scriptsize{(3.3)} \\ 
			{} & {} & 
			2 & 
                68.2 \scriptsize{(4.8)} & 67.8 \scriptsize{(3.5)} & 68.7 \scriptsize{(4.1)} & 
			35.1 \scriptsize{(4.2)} & 32.5 \scriptsize{(2.0)} & 34.7 \scriptsize{(4.2)} & 
                67.5 \scriptsize{(4.5)} & 68.2 \scriptsize{(4.0)} & 66.4 \scriptsize{(1.3)} \\
                {} & {} & 
			5 & 
                80.9 \scriptsize{(3.2)} & 81.6 \scriptsize{(2.2)} & 83.8 \scriptsize{(1.2)} & 
			36.7 \scriptsize{(3.8)} & 38.5 \scriptsize{(3.0)} & 39.2 \scriptsize{(1.2)} & 
                68.2 \scriptsize{(3.7)} & 69.2 \scriptsize{(2.5)} & 67.3 \scriptsize{(2.2)} \\
                {} & {} & 
			10 & 
                86.2 \scriptsize{(4.6)} & 85.1 \scriptsize{(4.4)} & 87.3 \scriptsize{(2.1)} & 
			36.4 \scriptsize{(3.1)} & 35.2 \scriptsize{(3.7)} & 39.8 \scriptsize{(4.1)} & 
                86.5 \scriptsize{(4.3)} & 85.6 \scriptsize{(4.0)} & 87.3 \scriptsize{(2.2)} \\
                \cmidrule{2-12}
                {} & \multirow{4}{*}{$\mathcal{D}_\mathtt{FEEDER}$} & 
			1 & 
                \textbf{51.2} \scriptsize{(4.8)} & \textbf{48.9} \scriptsize{(4.3)} & \textbf{48.7} \scriptsize{(5.1)} & 
			\textbf{41.8} \scriptsize{(2.4)} & \textbf{44.4} \scriptsize{(3.5)} & \textbf{43.3} \scriptsize{(2.7)} & 
                \textbf{58.0} \scriptsize{(2.1)} & \textbf{62.2} \scriptsize{(2.5)} & \textbf{62.8} \scriptsize{(1.8)} \\ 
			{} & {} & 
			2 & 
                \textbf{71.8} \scriptsize{(3.0)} & \textbf{72.8} \scriptsize{(3.4)} & \textbf{73.5} \scriptsize{(2.4)} & 
			\textbf{45.1} \scriptsize{(3.1)} & \textbf{45.3} \scriptsize{(3.1)} & \textbf{46.5} \scriptsize{(4.0)} & 
                \textbf{69.5} \scriptsize{(2.3)} & \textbf{70.8} \scriptsize{(2.3)} & \textbf{70.6} \scriptsize{(2.7)} \\
                {} & {} & 
			5 & 
                \textbf{88.5} \scriptsize{(3.8)} & \textbf{85.7} \scriptsize{(4.8)} & \textbf{86.9} \scriptsize{(2.8)} & 
			\textbf{42.1} \scriptsize{(4.6)} & \textbf{42.3} \scriptsize{(4.5)} & \textbf{40.8} \scriptsize{(4.1)} & 
                \textbf{72.8} \scriptsize{(2.8)} & \textbf{75.8} \scriptsize{(3.8)} & \textbf{69.3} \scriptsize{(2.6)} \\
                {} & {} & 
			10 & 
                \textbf{88.8} \scriptsize{(3.4)} & \textbf{91.1} \scriptsize{(4.4)} & \textbf{89.8} \scriptsize{(2.9)} & 
			\textbf{46.9} \scriptsize{(2.2)} & \textbf{50.1} \scriptsize{(2.0)} & \textbf{53.0} \scriptsize{(2.2)} & 
                \textbf{87.4} \scriptsize{(3.1)} & \textbf{88.5} \scriptsize{(3.4)} & \textbf{89.0} \scriptsize{(2.7)} \\
			\bottomrule
	\end{tabular}
	}
        \label{tab:appadd}
\end{table*}

\begin{proposition}
[$\widetilde{\mathcal{D}}_\mathtt{FEEDER}$ obtained by either Algorithm~\ref{algo:exact} or Algorithm~\ref{algo:alterexact} is an Exact $\mathcal{D}_\mathtt{FEEDER}$]
If we successively apply either Algorithm~\ref{algo:exact} or Algorithm~\ref{algo:alterexact} on $\mathcal{D}_\mathtt{TRAIN}$ for multiple rounds to obtain a subset (denoted as $\widetilde{\mathcal{D}}_\mathtt{FEEDER}$), then $\widetilde{\mathcal{D}}_\mathtt{FEEDER}$ is sufficient and necessary to represent $\mathcal{D}_\mathtt{TRAIN}$. 
\end{proposition}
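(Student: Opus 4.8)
The plan is to verify directly the two defining conditions of a $\mathtt{FEEDER}$ set in Definition~\ref{def:feeder} for the output $\widetilde{\mathcal{D}}_\mathtt{FEEDER}=\mathcal{D}_\mathtt{TRAIN}-\mathcal{H}_\mathtt{UNNECESSARY}$, namely sufficiency (i) and necessity (ii), and to do so uniformly for both algorithms by showing they compute the same maximal ``unnecessary'' root node $\mathcal{H}_\mathtt{UNNECESSARY}$. First I would record the structural fact supplied by the transitivity assumption on necessity in Appendix~\ref{app:exact1}: the family of sets whose removal preserves all correct outputs is downward closed, equivalently necessity is upward closed under supersets. I would then fix the loop invariant maintained by Algorithm~\ref{algo:exact}, that a node receives a $\mathtt{MAINTAIN}$ signal only when unplugging the set it represents, from context $C=\mathcal{D}_\mathtt{TRAIN}$, leaves every output equal to $\mathbf{1}_{|\mathcal{D}_\mathtt{TRAIN}|}$; hence every surviving node is an unnecessary set, and in particular the root $\mathcal{H}_\mathtt{UNNECESSARY}$ is unnecessary.

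For sufficiency I would argue that, since unplugging $\mathcal{H}_\mathtt{UNNECESSARY}$ from $\mathcal{D}_\mathtt{TRAIN}$ keeps all outputs correct (the invariant applied to the surviving root), plugging in the complement $\widetilde{\mathcal{D}}_\mathtt{FEEDER}$ alone already yields $Y=\mathbf{1}_N$, which is exactly condition (i). The care needed here is to propagate the pairwise ``union is unnecessary'' checks of line~\ref{line:01} up the tree using transitivity, just as the companion proposition for Algorithm~\ref{algo:framework} propagates sufficiency; so this half is a direct analogue of that earlier argument and should require no new idea.

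For necessity I would first reduce condition (ii) to a statement about singletons: because necessity is upward closed, any nonempty $\mathcal{D}'\subseteq\widetilde{\mathcal{D}}_\mathtt{FEEDER}$ is necessary as soon as some singleton it contains is necessary, so it suffices to show every $\{(\bm{x}_n,\bm{y}_n)\}\subseteq\widetilde{\mathcal{D}}_\mathtt{FEEDER}$ is necessary. I would split the elements of $\widetilde{\mathcal{D}}_\mathtt{FEEDER}$ into those never placed in $\mathscr{H}_0$ (individually necessary by the initialization test, hence immediately done) and those placed in $\mathscr{H}_0$ but not absorbed into $\mathcal{H}_\mathtt{UNNECESSARY}$. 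For the latter I would invoke maximality: the selection of $\mathcal{H}_\mathtt{MAX}$ in line~\ref{line:03} together with the removal of non-$\mathtt{MAINTAIN}$ nodes forces the root to be maximal among unnecessary sets, so no leftover element can be adjoined while preserving unnecessariness, which is what certifies the remaining singletons as necessary.

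The hard part will be closing the gap between ``$\mathcal{H}_\mathtt{UNNECESSARY}$ is maximal among unnecessary sets'' and ``every singleton of the complement is necessary,'' since unions of unnecessary sets need not be unnecessary and a single greedily grown node is not obviously the unique maximal one; I would therefore need either the transitivity assumption in its full union form or an explicit argument that the $\mathtt{MAINTAIN}$/$\mathcal{H}_\mathtt{MAX}$ bookkeeping visits enough pairs to exclude any removable singleton in $\widetilde{\mathcal{D}}_\mathtt{FEEDER}$. A secondary obstacle is showing that Algorithm~\ref{algo:exact} and Algorithm~\ref{algo:alterexact} terminate at the same root; I would prove this, as Figure~\ref{fig:tree} illustrates, by checking that the iterated ``unplug all previously selected points and repeat'' scheme of the alternative and the one-pass $\mathtt{MAINTAIN}$ scheme both realize a maximal chain of unnecessary merges, so their fixed points coincide and the necessity argument transfers verbatim.
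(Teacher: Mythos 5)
Your overall route is the paper's: sufficiency is obtained as an invariant of the algorithm---every set that acquires a $\mathtt{MAINTAIN}$ signal (or enters $\mathscr{H}_0$) has passed a direct unplug check against the full current dataset, so unplugging $\mathcal{H}_\mathtt{UNNECESSARY}$ from $C=\mathcal{D}_\mathtt{TRAIN}$ preserves all correct outputs, which the paper then rewrites as $\mathtt{plug}(\widetilde{\mathcal{D}}_\mathtt{FEEDER})$ with $C=\emptyset$---and necessity is proved for singletons by exactly your case split: a point that fails the initialization test is immediately necessary, while an individually-unnecessary point left outside the root contradicts the $\mathcal{H}_\mathtt{MAX}$/$\mathtt{MAINTAIN}$ bookkeeping of lines~\ref{line:03}--\ref{line:04}, since $\mathcal{H}_\mathtt{UNNECESSARY}\cup\{(\bm{x}_n,\bm{y}_n)\}$ would then persist as $\mathcal{H}_\mathtt{MAX}$ until becoming the root. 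Your explicit reduction of condition (ii) to singletons via upward-closure of necessity is left implicit in the paper but is a faithful completion of it. One point where you overcomplicate: no transitivity propagation up the tree is needed for the sufficiency half, because unlike the pairwise checks of Algorithm~\ref{algo:framework}, every union node here is \emph{directly} re-verified by check (b) against the whole current dataset before receiving $\mathtt{MAINTAIN}$; monotonicity is only needed to pass from singletons to subsets and to compose the successive rounds of Algorithm~\ref{algo:alterexact}. Relatedly, the step you flag as ``the hard part'' is resolved in the paper by nothing stronger than the $\mathcal{H}_\mathtt{MAX}$ contradiction you yourself sketch, stated at the same level of informality, so no full union-form assumption is invoked there.

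The genuine gap is your treatment of Algorithm~\ref{algo:alterexact}: you plan to prove that the two algorithms ``terminate at the same root'' so that the necessity argument ``transfers verbatim.'' That equivalence is neither needed nor, in general, true. Which unnecessary set survives depends on the order in which pairs are drawn and on tie-breaking in $\mathcal{H}_\mathtt{MAX}$, and because removal-safety is not monotone (two disjoint sets can each be safely removable while their union is not), the one-pass $\mathtt{MAINTAIN}$ scheme and the iterated unplug-and-repeat scheme may legitimately settle on \emph{different} maximal removals---both still yielding valid $\mathtt{FEEDER}$ sets, since exactness requires no uniqueness. Figure~\ref{fig:tree} exhibits coincidence on one worked example, not a general fact. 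The paper instead argues Algorithm~\ref{algo:alterexact} directly: its outer loop re-runs the singleton check of lines~\ref{line:11}--\ref{line:12} against the updated $\mathcal{D}_\mathtt{IN}=\mathcal{D}_\mathtt{TRAIN}-\mathcal{D}_r$ and terminates only when at most one individually-unnecessary node remains, so any surviving removable point would force a further round, a contradiction; hence every survivor is necessary. Replace your equivalence claim with this per-algorithm argument and your proof matches the paper's.
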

\begin{proof}
According to Definition~\ref{def:feeder}, it is straightforward to see that to prove the above proposition is equivalent to proving that $\widetilde{\mathcal{D}}_\mathtt{FEEDER}$ is a sufficient set of $\mathcal{D}_\mathtt{TRAIN}$ and a necessary set of $\mathcal{D}_\mathtt{TRAIN}$.

We begin by proving sufficiency.
Either Algorithm~\ref{algo:exact} or \ref{algo:alterexact} preserves the sufficiency during checking the necessity, as we are always guaranteeing $Y_{(\{\bm{x}_{n}|\bm{x}_{n}\in\mathcal{D}_\mathtt{TRAIN}\})}=\mathbf{1}_{|\mathcal{D}_\mathtt{TRAIN}|}$, when removing the unnecessary parts.

In other words, we have:
$Y_{(\{\bm{x}_n|\bm{x}_n\in\mathcal{D}_\mathtt{TRAIN}\})}=\mathbf{1}_{|\mathcal{D}_\mathtt{TRAIN}|} | \mathtt{unplug}(\mathcal{D}_\mathtt{TRAIN}-\mathcal{H}_\mathtt{UNNECESSARY});C,S$,
where $C=\mathcal{D}_\mathtt{TRAIN}$ and $S$ can be any value.
It can be rewritten as:
\begin{equation}
Y_{(\{\bm{x}_n|\bm{x}_n\in\mathcal{D}_\mathtt{TRAIN}\})} =\mathbf{1}_{|\mathcal{D}_\mathtt{TRAIN}|} | \mathtt{plug}(\widetilde{\mathcal{D}}_\mathtt{FEEDER});C,S,
\end{equation}
where $C=\emptyset$ and $S$ can be any value.
It shows that plugging in $\widetilde{\mathcal{D}}_\mathtt{FEEDER}$ is sufficient for representing $\mathcal{D}_\mathtt{TRAIN}$.

Next, we investigate necessity.
Our goal is to prove unplugging any data point in $\widetilde{\mathcal{D}}_\mathtt{FEEDER}$ would lead to a degradation of the LLM's performance.
For convenience, we use $(\bm{x}_n,\bm{y}_n)\in\mathcal{D}_\mathtt{TRAIN}$ to denote an arbitrary data point.
If we applying Algorithm~\ref{algo:exact} to execute the search for an exact $\mathcal{D}_\mathtt{FEEDER}$, then $(\bm{x}_n,\bm{y}_n)$ must be in $\mathscr{H}_0$, or out of $\mathscr{H}_0$.

If $(\bm{x}_n,\bm{y}_n)$ is not an element in $\mathscr{H}_0$, then according to the computing process of $\mathscr{H}_0$ (i.e., lines~\ref{line:01} to \ref{line:02} in Algorithm~\ref{algo:exact}), unplugging $(\bm{x}_n,\bm{y}_n)$ it would definitively cause the LLM's performance on $\mathcal{D}_\mathtt{TRAIN}$ from $Y_{(\{\bm{x}_n|\bm{x}_n\in\mathcal{D}_\mathtt{TRAIN}\})}=\mathbf{1}_{|\mathcal{D}_\mathtt{TRAIN}|}$ to $Y_{(\{\bm{x}_n|\bm{x}_n\in\mathcal{D}_\mathtt{TRAIN}\})}\neq\mathbf{1}_{|\mathcal{D}_\mathtt{TRAIN}|}$.

If $(\bm{x}_n,\bm{y}_n)$ is an element in $\mathscr{H}_0$, then $(\bm{x}_n,\bm{y}_n)$ must be in $\mathcal{H}_\mathtt{UNNECESSARY}$; otherwise, according to lines~\ref{line:03} to \ref{line:04} in Algorithm~\ref{algo:exact}, $\mathcal{H}_\mathtt{UNNECESSARY}\cup\{(\bm{x}_n,\bm{y}_n)\}$ should be $\mathcal{H}_\mathtt{MAX}$ and always stay in $\mathcal{H}_\cdot$ until becoming the root node (i.e., $\mathcal{H}_\mathtt{UNNECESSARY}$ should be updated to be $\mathcal{H}_\mathtt{UNNECESSARY}\cup\{(\bm{x}_n,\bm{y}_n)\}$).
Thus, $(\bm{x}_n,\bm{y}_n)$ must be in $\mathcal{H}_\mathtt{UNNECESSARY}$.
However, all the data points in $\mathcal{H}_\mathtt{UNNECESSARY}$ are removed from $\mathcal{D}_\mathtt{TRAIN}$, causing a contradiction.
Hence, unplugging $(\bm{x}_n,\bm{y}_n)$ would change the LLM's performance, namely necessity holds.

Then, we consider applying Algorithm~\ref{algo:alterexact} for searching an exact $\mathcal{D}_\mathtt{FEEDER}$.
Similarly, if $(\bm{x}_n,\bm{y}_n)$ is not selected when checking the necessity, then unplugging $(\bm{x}_n,\bm{y}_n)$ would definitively cause a degradation of the LLM's performance.

If $(\bm{x}_n,\bm{y}_n)$ is selected during checking the necessity, then $(\bm{x}_n,\bm{y}_n)$ must be included in $\mathcal{D}_r$; otherwise, $\mathcal{D}_r$ would continue to update, since the condition of stopping iteration is that there is no or only one unnecessary node.
However, all the data points are removed from $\mathcal{D}_\mathtt{TRAIN}$, causing a contradiction.
Hence, unplugging $(\bm{x}_n,\bm{y}_n)$ would change the LLM's performance, namely necessity holds.

\begin{table*}
        \vspace{-8mm}
	\centering
	\caption{Performance comparisons on text classification datasets are conducted in the fine-tuning setting, where we tune the LLMs and evaluate their few-shot inference performance. 
    We report both the mean and variance of accuracy using 8 different seeds and 5 different permutations of n-shots.
    This table is extended from Table~\ref{tab:ft}.}
        \vspace{2mm}
	\resizebox{1\textwidth}{!}{
		\begin{tabular}{@{\extracolsep{4pt}}|c|c|c|c|c|c|c|c|c|c|c|c|}
			\toprule
			\multirow{2}{*}{$\Psi_\mathtt{LLM}(\cdot)$} & \multirow{2}{*}{$\mathcal{D}$} & \multirow{2}{*}{$n$} & \multicolumn{3}{c|}{FPB} & \multicolumn{3}{c|}{SST-5} & \multicolumn{3}{c|}{TREC} \\
			\cmidrule{4-6}
			\cmidrule{7-9}
			\cmidrule{10-12}
			{} & {} & {} & Random & Similarity & Diversity & Random & Similarity & Diversity & Random & Similarity & Diversity \\
                \midrule
			\multirow{8}{*}{GPT-2 \scriptsize{(0.3B)}} & \multirow{4}{*}{$\mathcal{D}_\mathtt{TRAIN}$} & 
			1 & 
                58.3 \scriptsize{(5.7)} & 68.4 \scriptsize{(0.1)} & 67.4 \scriptsize{(0.1)} & 
			55.5 \scriptsize{(4.8)} & 60.2 \scriptsize{(0.4)} & 58.4 \scriptsize{(0.2)} & 
                59.2 \scriptsize{(5.2)} & 70.0 \scriptsize{(0.1)} & 68.0 \scriptsize{(0.1)} \\ 
			{} & {} & 
			2 & 
			58.5 \scriptsize{(5.2)} & 72.3 \scriptsize{(0.4)} & 70.1 \scriptsize{(0.2)} & 
                58.5 \scriptsize{(4.2)} & 60.4 \scriptsize{(0.6)} & 61.2 \scriptsize{(0.4)} & 
                57.7 \scriptsize{(5.2)} & 70.1 \scriptsize{(0.2)} & 70.3 \scriptsize{(0.4)} \\
                {} & {} & 
			5 & 
			67.8 \scriptsize{(5.1)} & 66.2 \scriptsize{(0.4)} & 65.7 \scriptsize{(0.3)} & 
                58.6 \scriptsize{(5.2)} & 60.4 \scriptsize{(0.7)} & 61.8 \scriptsize{(0.5)} & 
                66.3 \scriptsize{(4.5)} & 72.8 \scriptsize{(0.4)} & 70.2 \scriptsize{(0.5)} \\
                {} & {} & 
			10 & 
			58.2 \scriptsize{(4.4)} & 63.3 \scriptsize{(0.6)} & 65.6 \scriptsize{(0.3)} & 
                61.4 \scriptsize{(4.3)} & 60.4 \scriptsize{(0.4)} & 61.8 \scriptsize{(0.2)} & 
                60.9 \scriptsize{(3.8)} & 71.3 \scriptsize{(0.5)} & 72.5 \scriptsize{(0.9)} \\
                \cmidrule{2-12}
                {} & \multirow{4}{*}{$\mathcal{D}_\mathtt{FEEDER}$} & 
			1 & 
                \textbf{65.0} \scriptsize{(5.5)} & \textbf{77.3} \scriptsize{(1.3)} & \textbf{73.3} \scriptsize{(1.3)} & 
			\textbf{61.7} \scriptsize{(4.2)} & \textbf{74.8} \scriptsize{(1.8)} & \textbf{74.4} \scriptsize{(0.8)} & 
                \textbf{63.9} \scriptsize{(4.0)} & \textbf{74.3} \scriptsize{(0.7)} & \textbf{75.3} \scriptsize{(0.7)} \\ 
			{} & {} & 
			2 & 
			\textbf{62.2} \scriptsize{(3.4)} & \textbf{75.0} \scriptsize{(1.1)} & \textbf{74.3} \scriptsize{(1.5)} & 
                \textbf{62.3} \scriptsize{(3.4)} & \textbf{63.4} \scriptsize{(1.8)} & \textbf{62.6} \scriptsize{(1.2)} & 
                \textbf{60.1} \scriptsize{(3.5)} & \textbf{76.1} \scriptsize{(1.7)} & \textbf{74.4} \scriptsize{(0.9)} \\
                {} & {} & 
			5 & 
			\textbf{70.4} \scriptsize{(3.2)} & \textbf{78.8} \scriptsize{(1.6)} & \textbf{76.4} \scriptsize{(1.0)} & 
                \textbf{62.4} \scriptsize{(4.2)} & \textbf{62.2} \scriptsize{(1.4)} & \textbf{66.4} \scriptsize{(1.3)} &   
                \textbf{68.8} \scriptsize{(3.2)} & \textbf{77.2} \scriptsize{(3.3)} & \textbf{76.6} \scriptsize{(2.9)} \\
                {} & {} & 
			10 & 
			\textbf{62.3} \scriptsize{(3.3)} & \textbf{80.6} \scriptsize{(1.3)} & \textbf{78.6} \scriptsize{(1.9)} & 
                \textbf{63.9} \scriptsize{(4.5)} & \textbf{78.6} \scriptsize{(1.9)} & \textbf{71.0} \scriptsize{(1.2)} &  
                \textbf{68.7} \scriptsize{(2.7)} & \textbf{72.2} \scriptsize{(1.7)} & \textbf{75.7} \scriptsize{(1.9)} \\
                \midrule
			\multirow{8}{*}{GPT-2 \scriptsize{(0.8B)}} & \multirow{4}{*}{$\mathcal{D}_\mathtt{TRAIN}$} & 
			1 & 
                60.3 \scriptsize{(4.7)} & 73.4 \scriptsize{(0.1)} & 73.4 \scriptsize{(0.1)} & 
			57.5 \scriptsize{(5.1)} & 64.3 \scriptsize{(0.2)} & 64.3 \scriptsize{(0.2)} & 
                61.1 \scriptsize{(5.2)} & 77.3 \scriptsize{(0.1)} & 77.3 \scriptsize{(0.1)} \\ 
			{} & {} & 
			2 & 
			62.5 \scriptsize{(5.2)} & 75.3 \scriptsize{(0.4)} & 75.1 \scriptsize{(0.3)} & 
                62.5 \scriptsize{(4.2)} & 65.4 \scriptsize{(0.6)} & 66.2 \scriptsize{(0.4)} & 
                62.7 \scriptsize{(5.2)} & 78.1 \scriptsize{(0.2)} & 79.3 \scriptsize{(0.4)} \\
                {} & {} & 
			5 & 
			71.8 \scriptsize{(5.1)} & 72.2 \scriptsize{(0.4)} & 70.1 \scriptsize{(0.3)} & 
                63.6 \scriptsize{(5.2)} & 67.4 \scriptsize{(0.7)} & 68.6 \scriptsize{(0.6)} & 
                64.3 \scriptsize{(4.5)} & 76.8 \scriptsize{(0.4)} & 74.2 \scriptsize{(0.5)} \\
                {} & {} & 
			10 & 
			63.2 \scriptsize{(4.4)} & 67.3 \scriptsize{(0.6)} & 68.6 \scriptsize{(0.3)} & 
                66.4 \scriptsize{(4.3)} & 68.4 \scriptsize{(0.4)} & 67.8 \scriptsize{(0.2)} & 
                66.9 \scriptsize{(3.8)} & 78.3 \scriptsize{(0.5)} & 75.5 \scriptsize{(0.9)} \\
                \cmidrule{2-12}
                {} & \multirow{4}{*}{$\mathcal{D}_\mathtt{FEEDER}$} & 
			1 & 
                \textbf{69.0} \scriptsize{(5.3)} & \textbf{81.3} \scriptsize{(1.3)} & \textbf{81.3} \scriptsize{(1.3)} & 
			\textbf{59.8} \scriptsize{(4.2)} & \textbf{72.8} \scriptsize{(0.8)} & \textbf{72.8} \scriptsize{(0.8)} & 
                \textbf{65.9} \scriptsize{(4.0)} & \textbf{83.3} \scriptsize{(0.7)} & \textbf{83.3} \scriptsize{(0.7)} \\ 
			{} & {} & 
			2 & 
			\textbf{73.2} \scriptsize{(3.4)} & \textbf{82.0} \scriptsize{(1.1)} & \textbf{83.3} \scriptsize{(1.5)} & 
                \textbf{65.3} \scriptsize{(3.4)} & \textbf{73.4} \scriptsize{(1.8)} & \textbf{72.6} \scriptsize{(1.2)} & 
                \textbf{62.1} \scriptsize{(3.5)} & \textbf{80.1} \scriptsize{(1.7)} & \textbf{82.2} \scriptsize{(0.9)} \\
                {} & {} & 
			5 & 
			\textbf{74.4} \scriptsize{(3.4)} & \textbf{84.8} \scriptsize{(1.6)} & \textbf{86.4} \scriptsize{(1.4)} & 
                \textbf{67.4} \scriptsize{(3.9)} & \textbf{77.5} \scriptsize{(1.0)} & \textbf{76.7} \scriptsize{(1.4)} &   
                \textbf{69.8} \scriptsize{(3.2)} & \textbf{83.2} \scriptsize{(3.3)} & \textbf{84.6} \scriptsize{(2.9)} \\
                {} & {} & 
			10 & 
			\textbf{75.3} \scriptsize{(3.3)} & \textbf{85.6} \scriptsize{(1.3)} & \textbf{87.6} \scriptsize{(1.9)} & 
                \textbf{58.9} \scriptsize{(3.5)} & \textbf{78.6} \scriptsize{(1.7)} & \textbf{79.0} \scriptsize{(1.2)} &  
                \textbf{69.7} \scriptsize{(2.7)} & \textbf{86.2} \scriptsize{(1.7)} & \textbf{85.7} \scriptsize{(1.9)} \\
			\midrule
			\multirow{8}{*}{GPT-neo \scriptsize{(1.3B)}} & \multirow{4}{*}{$\mathcal{D}_\mathtt{TRAIN}$} & 
			1 & 
                62.7 \scriptsize{(5.7)} & 78.4 \scriptsize{(0.1)} & 78.4 \scriptsize{(0.1)} & 
			60.3 \scriptsize{(4.1)} & 66.6 \scriptsize{(1.4)} & 66.6 \scriptsize{(1.4)} & 
                63.3 \scriptsize{(5.2)} & 79.5 \scriptsize{(0.4)} & 79.5 \scriptsize{(0.4)} \\ 
			{} & {} & 
			2 & 
			63.1 \scriptsize{(4.6)} & 74.2 \scriptsize{(0.3)} & 73.1 \scriptsize{(0.2)} & 
                64.5 \scriptsize{(3.2)} & 66.8 \scriptsize{(0.8)} & 68.4 \scriptsize{(0.7)} & 
                63.5 \scriptsize{(5.7)} & 81.2 \scriptsize{(0.4)} & 81.4 \scriptsize{(0.6)} \\
                {} & {} & 
			5 & 
			70.8 \scriptsize{(5.1)} & 73.3 \scriptsize{(0.1)} & 72.7 \scriptsize{(0.2)} & 
                63.6 \scriptsize{(4.1)} & 70.8 \scriptsize{(0.4)} & 70.8 \scriptsize{(0.4)} & 
                67.8 \scriptsize{(4.7)} & 80.6 \scriptsize{(0.5)} & 82.0 \scriptsize{(0.4)} \\
                {} & {} & 
			10 & 
			62.2 \scriptsize{(4.4)} & 63.0 \scriptsize{(0.6)} & 69.6 \scriptsize{(0.5)} & 
                65.8 \scriptsize{(2.9)} & 69.5 \scriptsize{(0.3)} & 68.8 \scriptsize{(0.6)} & 
                68.1 \scriptsize{(3.8)} & 78.8 \scriptsize{(0.4)} & 82.4 \scriptsize{(0.5)} \\
                \cmidrule{2-12}
                {} & \multirow{4}{*}{$\mathcal{D}_\mathtt{FEEDER}$} & 
			1 & 
                \textbf{73.0} \scriptsize{(4.4)} & \textbf{83.5} \scriptsize{(1.5)} & \textbf{83.5} \scriptsize{(1.5)} & 
			\textbf{63.3} \scriptsize{(3.1)} & \textbf{72.7} \scriptsize{(1.3)} & \textbf{72.7} \scriptsize{(1.3)} & 
                \textbf{64.6} \scriptsize{(3.2)} & \textbf{84.6} \scriptsize{(0.8)} & \textbf{84.6} \scriptsize{(0.8)} \\ 
			{} & {} & 
			2 & 
			\textbf{76.1} \scriptsize{(3.8)} & \textbf{84.1} \scriptsize{(1.4)} & \textbf{82.5} \scriptsize{(1.7)} & 
                \textbf{65.6} \scriptsize{(2.7)} & \textbf{76.4} \scriptsize{(0.7)} & \textbf{78.6} \scriptsize{(0.8)} & 
                \textbf{64.2} \scriptsize{(3.7)} & \textbf{85.5} \scriptsize{(0.7)} & \textbf{86.3} \scriptsize{(0.9)} \\
                {} & {} & 
			5 & 
			\textbf{75.7} \scriptsize{(3.5)} & \textbf{90.7} \scriptsize{(1.5)} & \textbf{88.1} \scriptsize{(1.9)} & 
                \textbf{67.4} \scriptsize{(2.9)} & \textbf{79.5} \scriptsize{(1.8)} & \textbf{79.7} \scriptsize{(1.5)} &   
                \textbf{70.8} \scriptsize{(3.2)} & \textbf{88.2} \scriptsize{(2.3)} & \textbf{89.6} \scriptsize{(1.9)} \\
                {} & {} & 
			10 & 
			\textbf{77.5} \scriptsize{(3.3)} & \textbf{92.6} \scriptsize{(1.3)} & \textbf{90.6} \scriptsize{(1.8)} & 
                \textbf{68.9} \scriptsize{(2.0)} & \textbf{82.6} \scriptsize{(1.7)} & \textbf{80.0} \scriptsize{(1.6)} &  
                \textbf{73.7} \scriptsize{(2.7)} & \textbf{91.2} \scriptsize{(1.7)} & \textbf{86.7} \scriptsize{(1.9)} \\
			\bottomrule
		\end{tabular}
	}
	\label{tab:appft}
	\vspace{-2mm}
\end{table*}

Combining the above analysis of sufficiency and necessity, we can conclude that $\mathcal{D}_\mathtt{FEEDER}$ is an exact $\mathtt{FEEDER}$ for $\mathcal{D}_\mathtt{TRAIN}$.
\end{proof}

\section{$\mathtt{FEEDER}$ in In-context Learning Setting}
\subsection{Demonstration Selectors}
\label{app:retriever}
As described in Section~\ref{sec:incontext}, when applied in the ICL setting, our $\mathcal{D}_\mathtt{FEEDER}$ is assessed by serving as the selection pool, replacing $\mathcal{D}_\mathtt{TRAIN}$ for existing demonstration selectors.

The first one is a random selector, denoted as Random, which randomly selects samples from the selection pool.

The second one is a similarity-based selector, denoted as Similarity, which selects samples similar to the test samples.
Formally, let $\mathcal{D}_\mathtt{SELECT}$ denote the selection pool.
Then, for each test sample $\bm{x}_m$, the metric of similarity can be written as:
\begin{equation}
\begin{aligned}
\mathtt{SIM}(\bm{x}_m,\bm{x}_n)
= \mathtt{COS}(\mathtt{TRANSFORMER}(\bm{x}_m),\mathtt{TRANSFORMER}(\bm{x}_n)),  
\end{aligned}
\end{equation}
where $\bm{x}_n\in\mathcal{D}_\mathtt{SELECT}$, $\mathtt{COS}(\cdot)$ is a cosine similarity metric, and
$\mathtt{TRANSFORMER}(\cdot)$ denotes a sentence transformer \citep{reimers2019sentence}.
Here, we directly use the Sentence Transformers library\footnote{\url{https://huggingface.co/sentence-transformers}} from Hugging Face in our implementation.
Then, we are able to select $N_\mathtt{shot}$ samples with maximum $\mathtt{SIM}$ values from $\mathcal{D}_\mathtt{SELECT}$.

The third one is a diversity-based selector, denoted as Diversity, where we adopt the maximal marginal relevance method \citep{carbonell1998use} as the metric of Diversity.
Formally, we have:
\begin{equation}
\mathtt{DIV}(\bm{x}_m,\bm{x}_n)=\mathtt{SIM}(\bm{x}_m,\bm{x}_n) - \eta \cdot \max_{\bm{x}_{n'}\in\mathcal{D}'_\mathtt{SELECT}} \mathtt{SIM}(\bm{x}_m,\bm{x}_{n'}),
\end{equation}
where $\bm{x}_n\in\mathcal{D}_\mathtt{SELECT}-\mathcal{D}'_\mathtt{SELECT}$, and $\mathcal{D}'_\mathtt{SELECT}$ denotes the set of previously selected instances.
We can see that Diversity prefers the instance that is both similar to the test samples meanwhile distant to previously selected instances.
$\eta$ is a hyper-parameter to balance the above two parts.
We set $\eta=1$ in our experiment.

The fourth one is an uncertainty-based selector \citep{koksal2022meal}, denoted as Uncertainty, which conducts selections according to their uncertainty metric;

The fifth one is a clustering-based selector \citep{zhou2023survival}, denoted as Clustering, which searches demonstrations by clustering.

The sixth one uses LLMs as latent variable models \citep{wang2024large}, denoted as Latent, which learns latent variables for down-streaming ICL.

In our experiment, we run our approximation algorithm for 1 run to get $\mathcal{D}_\mathtt{FEEDER}$, and then treat $\mathcal{D}_\mathtt{FEEDER}$ as the selection pool for the above demonstration selectors.
In our results, we report both the mean and variance of accuracy using 8 different seeds and 5 different permutations of n-shots.

We also want to emphasize that since our pre-selector and pre-selection process are novel, we evaluate the performance of $\mathtt{FEEDER}$ in an ablation fashion. 
Specifically, our results (denoted as $\mathcal{D}_\mathtt{FEEDER}$ in the $\mathcal{D}$ column) can be interpreted as $\mathtt{FEEDER}$ + X (where X represents any demonstration retriever described above), meaning that $\mathtt{FEEDER}$ is used for pre-selection of input demonstrations, and X is used to select specific demonstrations considering the target inputs. 
Our baseline (denoted as $\mathcal{D}_\mathtt{TRAIN}$ in the $\mathcal{D}$ column) can be formulated as X + X, meaning X is used for both pre-selection of input demonstrations and for selecting specific demonstrations with regard to the target inputs.

\subsection{Additional Results with Diverse Datasets}
\label{app:addres}
We report performance comparison results on text classification datasets SUBJ, SST-2, and COLA datasets in Table~\ref{tab:main}.
We include the results of FPB, SST-5, and TREC datasets in Table~\ref{tab:appmain}, whose trend is consistent with our results in Table~\ref{tab:main}.
These results further verify the superiority of our $\mathtt{FEEDER}$ in ICL.

Besides three basic demonstration selectors, denoted as Random, Similarity, and Diversity, we also examine the performance of $\mathtt{FEEDER}$ with some recently proposed demonstration selectors, denoted as Uncertainty, Clustering, and Latent.
We summarize the corresponding results in Table~\ref{tab:appadd}, whose trend is consistent with our results in Table~\ref{tab:mainadd}.
Overall, compared to using the entire training dataset $\mathcal{D}_\mathtt{TRAIN}$ as the retrieval pool, treating its core set $\mathcal{D}_\mathtt{FEEDER}$ as the retrieval pool can improve the LLM performance at most cases. 
These results are consistent with the analysis reported in Section~\ref{sec:incontext}, which together verify that our $\mathtt{FEEDER}$ collaborating with various demonstration selectors works well in ICL.

\subsection{Additional Results with Diverse Demonstration Selectors}
\label{app:addselector}
We report performance comparison results on the reasoning dataset GSM8K and the semantic parsing dataset SMCALFlow in Table~\ref{tab:new}. 
The corresponding results for additional demonstration selectors, Clustering and Latent, are provided in Table~\ref{tab:appnew}, showing a similar trend. Together, these results further demonstrate the superiority of our $\mathtt{FEEDER}$ framework in the in-context learning setting.

\subsection{Performance Comparison with Different Number of Iterations}
\label{app:iteration}
We report performance comparison results of $\mathtt{FEEDER}$ with different runs $R$ in Figure~\ref{fig:round} in Section~\ref{sec:incontext}.
Here, we further evaluate the performance of $\mathtt{FEEDER}$ with different iterations $K$.
The corresponding results are depicted in Figure~\ref{fig:iteration}.
This figure shows a similar trend to Figure~\ref{fig:round}, which also indicates that our method can allow the LLM to achieve comparable performance with fewer samples.

Combining all the above results, we observe that both increasing the tree depth (i.e., the number of iterations $K$) in each round and increasing the number of rounds $R$ contribute to reducing the size of the resulting $\mathtt{FEEDER}$ set.
However, there are notable trade-offs between these two approaches. Increasing the tree depth is computationally more expensive but offers greater robustness, as it minimizes the risk of mistakenly filtering out useful samples. 
On the other hand, increasing the number of rounds is relatively inexpensive but carries a higher likelihood of discarding valuable data points due to less rigorous evaluations.

\begin{table*}
	\centering
	\caption{Performance comparisons on reasoning GSM8K dataset and semantic-parsing SMCALFlow dataset are conducted in the in-context learning setting. 
    We report both the mean and variance of accuracy using 8 different seeds and 5 different permutations of n-shots.
    This table is extended from Table~\ref{tab:new}. 
    }
        \vspace{2mm}
	\resizebox{0.6\textwidth}{!}{
		\begin{tabular}{@{\extracolsep{4pt}}|c|c|c|c|c|c|c|c|c|c|c|c|c|c|c|c|}
			\toprule
			\multirow{2}{*}{$\Psi_\mathtt{LLM}(\cdot)$} & \multirow{2}{*}{$\mathcal{D}$} & \multirow{2}{*}{$n$} & \multicolumn{2}{c|}{GSM8K} & \multicolumn{2}{c|}{SMCALFlow} \\
			\cmidrule{4-5}
			\cmidrule{6-7}
			{} & {} & {} & Clustering & Latent & Clustering & Latent \\
                \midrule
			\multirow{8}{*}{Gemma-2 \scriptsize{(2B)}} & \multirow{4}{*}{$\mathcal{D}_\mathtt{TRAIN}$} & 
			1 & 
                16.17 \scriptsize{(0.18)} & 16.20 \scriptsize{(0.19)} &
                20.02 \scriptsize{(0.21)} & 19.54 \scriptsize{(0.14)} \\
			{} & {} & 
			2 & 
			19.89 \scriptsize{(0.96)} & 20.52 \scriptsize{(0.15)} & 
                22.58 \scriptsize{(0.45)} & 23.05 \scriptsize{(0.36)} \\
                {} & {} & 
			5 & 
			21.31 \scriptsize{(0.84)} & 23.56 \scriptsize{(0.66)} & 
                29.30 \scriptsize{(0.90)} & 28.65 \scriptsize{(0.95)} \\
                {} & {} & 
			10 & 
			22.52 \scriptsize{(0.49)} & 23.85 \scriptsize{(0.65)} & 
                30.12 \scriptsize{(1.11)} & 31.11 \scriptsize{(0.91)} \\
                \cmidrule{2-7}
                {} & \multirow{4}{*}{$\mathcal{D}_\mathtt{FEEDER}$} & 
			1 & 
                \textbf{17.25} \scriptsize{(0.21)} & \textbf{16.68} \scriptsize{(0.24)} &
                \textbf{21.12} \scriptsize{(1.78)} & \textbf{20.89} \scriptsize{(1.21)} \\ 
			{} & {} & 
			2 & 
			\textbf{20.68} \scriptsize{(0.83)} & \textbf{21.01} \scriptsize{(0.85)} &
                \textbf{22.85} \scriptsize{(2.65)} & \textbf{25.03} \scriptsize{(0.18)} \\
                {} & {} & 
			5 & 
			\textbf{22.55} \scriptsize{(0.75)} & 23.05 \scriptsize{(0.77)} &
                \textbf{31.20} \scriptsize{(1.15)} & \textbf{29.54} \scriptsize{(4.58)} \\
                {} & {} & 
			10 & 
			\textbf{22.75} \scriptsize{(0.85)} & \textbf{24.02} \scriptsize{(2.20)} & 
                \textbf{32.10} \scriptsize{(2.01)} & \textbf{32.48} \scriptsize{(1.52)} \\
			\midrule
			\multirow{8}{*}{GPT-3 \scriptsize{(6B)}} & \multirow{4}{*}{$\mathcal{D}_\mathtt{TRAIN}$} & 
			1 & 
                2.95 \scriptsize{(0.12)} & 2.87 \scriptsize{(0.25)} &
                9.95 \scriptsize{(0.79)} & 9.21\scriptsize{(0.85)} \\ 
			{} & {} & 
			2 & 
			4.78 \scriptsize{(0.33)} & 4.21 \scriptsize{(0.25)} & 
                10.12 \scriptsize{(0.46)} & 10.14 \scriptsize{(0.88)} \\
                {} & {} & 
			5 & 
			7.21 \scriptsize{(0.78)} & 8.00 \scriptsize{(1.05)} & 
                12.31 \scriptsize{(1.11)} & 12.15 \scriptsize{(1.30)} \\
                {} & {} & 
			10 & 
			8.05 \scriptsize{(1.20)} & 7.44 \scriptsize{(1.25)} &
                14.14 \scriptsize{(1.57)} & 13.99 \scriptsize{(1.54)} \\
                \cmidrule{2-7}
                {} & \multirow{4}{*}{$\mathcal{D}_\mathtt{FEEDER}$} & 
			1 & 
                \textbf{4.10} \scriptsize{(0.22)} & \textbf{3.25} \scriptsize{(0.24)} &
                \textbf{12.52} \scriptsize{(1.13)} & \textbf{11.42} \scriptsize{(1.02)} \\ 
			{} & {} & 
			2 & 
			4.26 \scriptsize{(0.64)} & \textbf{4.55} \scriptsize{(0.82)} &
                \textbf{11.73} \scriptsize{(0.54)} & \textbf{12.05} \scriptsize{(0.80)} \\ 
			{} & {} & 
			5 & 
			\textbf{8.85} \scriptsize{(1.28)} & \textbf{8.14} \scriptsize{(0.87)} &
                \textbf{13.58} \scriptsize{(1.44)} & \textbf{12.44} \scriptsize{(1.69)} \\
                {} & {} & 
			10 & 
			\textbf{9.52} \scriptsize{(1.88)} & \textbf{8.50} \scriptsize{(1.21)} &
                \textbf{15.08} \scriptsize{(1.91)} & \textbf{16.50} \scriptsize{(1.25)} \\
                \midrule
			\multirow{8}{*}{Llamma-2 \scriptsize{(7B)}} & \multirow{4}{*}{$\mathcal{D}_\mathtt{TRAIN}$} & 
			1 & 
                3.68 \scriptsize{(0.89)} & 3.98 \scriptsize{(0.88)} & 
                10.12 \scriptsize{(0.95)} & 9.25 \scriptsize{(0.85)} \\ 
			{} & {} & 
			2 & 
			5.20 \scriptsize{(0.38)} & 5.55 \scriptsize{(0.85)} &
                11.05 \scriptsize{(1.36)} & 12.52 \scriptsize{(1.45)} \\
                {} & {} & 
			5 & 
			7.58 \scriptsize{(0.89)} & 7.52 \scriptsize{(0.96)} &
                15.18 \scriptsize{(1.15)} & 15.30 \scriptsize{(1.20)} \\
                {} & {} & 
			10 & 
			9.85 \scriptsize{(0.85)} & 9.21 \scriptsize{(0.98)} & 
                17.95 \scriptsize{(1.25)} & 18.55 \scriptsize{(2.01)} \\
                \cmidrule{2-7}
                {} & \multirow{4}{*}{$\mathcal{D}_\mathtt{FEEDER}$} & 
			1 & 
                \textbf{4.25} \scriptsize{(0.21)} & \textbf{4.17} \scriptsize{(0.89)} &
                \textbf{11.89} \scriptsize{(0.51)} & \textbf{12.05} \scriptsize{(0.63)} \\ 
			{} & {} & 
			2 & 
			\textbf{5.88} \scriptsize{(0.63)} & \textbf{6.02} \scriptsize{(0.58)} &
                \textbf{13.03} \scriptsize{(0.16)} & \textbf{14.13} \scriptsize{(1.10)} \\
                {} & {} & 
			5 & 
			\textbf{8.22} \scriptsize{(1.01)} & \textbf{9.17} \scriptsize{(0.98)} &
                \textbf{18.20} \scriptsize{(3.66)} & \textbf{19.66} \scriptsize{(5.20)} \\
                {} & {} & 
			10 & 
			\textbf{10.17} \scriptsize{(1.22)} & \textbf{9.65} \scriptsize{(0.83)} &
                \textbf{22.11} \scriptsize{(1.22)} & \textbf{21.25} \scriptsize{(1.26)} \\
			\bottomrule
		\end{tabular}
	}
	\label{tab:appnew}
\end{table*}

\section{Scaling Up $\mathtt{FEEDER}$ into Real-world Applications}
\label{app:scale}
\subsection{Scaling up $\mathtt{FEEDER}$ to larger LLMs.}
\label{app:largerllm}
As the LLM scales up in size (e.g., scaling up to Llama-65B \citep{touvron2023llama} and Gemma-70B \citep{team2024gemma}), the execution of our approximation algorithm for searching $\mathcal{D}_\mathtt{FEEDER}$ can become exceedingly time-consuming.
In response to this challenge, we propose a strategy wherein a smaller LLM is employed to generate a $\mathtt{FEEDER}$ set, which is then stored and utilized by the larger LLM.
To assess the viability of this approach, we conducted an experiment comparing the performance of using GPT-2 variants and GPT-neo as the LLMs for obtaining a $\mathtt{FEEDER}$ set, and then we use this set as the retrieval pool to acquire demonstrations for GPT-neo.
Results summarized in Table~\ref{tab:llm} demonstrate that  
even when $\mathcal{D}_\mathtt{FEEDER}$ is pre-selected by a small LLM, it contributes to improved performance, compared to using $\mathcal{D}_\mathtt{TRAIN}$, as reported in Table~\ref{tab:main}.
This observation suggests the potential feasibility of employing a more compact LLM for pre-selecting $\mathcal{D}_\mathtt{FEEDER}$ to enhance the performance of a larger LLM. 

\subsection{Scaling up $\mathtt{FEEDER}$ by Incremental Update.}
Notice that numerous real-world datasets are temporal and require frequent updates.
Re-running the tree based approximation algorithm for $\mathtt{FEEDER}$ over all samples can be excessively time-consuming.
To address this, we design an incremental approach, treating the unchanged portion as a plug-and-play $\mathtt{FEEDER}$ set and the LLM as a whole, forming a new ``LLM''. 
Therefore, we can apply $\mathtt{FEEDER}$ solely to compute incremental data for the modified part, encompassing newly added and modified data points. 
Also, a significant challenge of $\mathtt{FEEDER}$ arises from the temporal nature of many real-world datasets, some of which require frequent updates, potentially on a daily basis. 
The conventional approach of recalculating a $\mathtt{FEEDER}$ over all unchanged and changed samples can be time-consuming in such dynamic scenarios. 
To address this challenge, we introduce an incremental update algorithm for $\mathtt{FEEDER}$, enabling the efficient re-computation of only the changed portions, including newly added and modified samples.

As depicted in Figure~\ref{fig:application}, once a $\mathtt{FEEDER}$ set for the original dataset is generated, we treat the unchanged part of plug-and-play plugged data and the LLM as a whole (depicted by the dashed box) as a new ``LLM''.
Subsequently, we apply $\mathtt{FEEDER}$ exclusively to compute incremental data for the changed part, covering newly added and modified data points. 
This strategy aims to enhance the efficiency and responsiveness of $\mathtt{FEEDER}$ in the context of evolving and temporal datasets.

\begin{figure}[t]
    \centering
    \vspace{-0mm}
    \captionsetup{font=footnotesize}
    \includegraphics[width=1\linewidth]{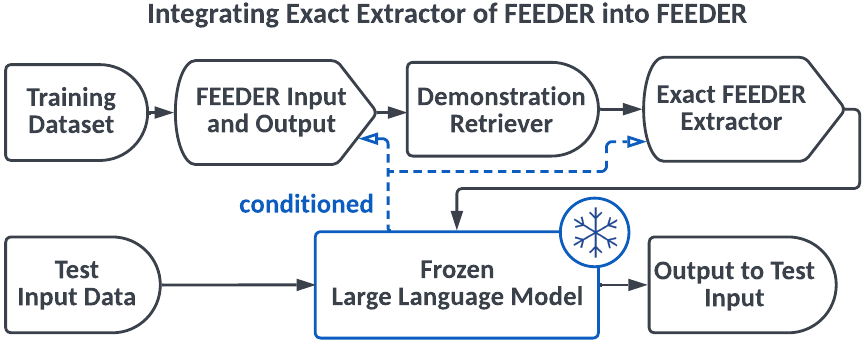}
    \caption{
        Integrating our extraction algorithm for $\mathtt{FEEDER}$ (i.e., Algorithm~\ref{algo:alterexact}) into our in-context learning framework (as introduced in Figure~\ref{fig:overview}(a)). 
        }
    \label{fig:exact}
    \vspace{-2mm}
\end{figure}

\begin{table*}
	\centering
	\caption{Performance comparisons among using different LLMs GPT-2, GPT-3, GPT-neo as the base for acquiring a $\mathtt{FEEDER}$ set and using GPT-neo for inference on COLA dataset are conducted in the in-context learning setting.
    We report both the mean and variance of accuracy using 8 different seeds and 5 different permutations of n-shots.}
        \vspace{2mm}
	\resizebox{1\textwidth}{!}{
		\begin{tabular}{@{\extracolsep{4pt}}|c|c|c|c|c|c|c|c|c|c|c|c|}
			\toprule
			\multirow{2}{*}{$\Psi_\mathtt{LLM}(\cdot)$} & \multirow{2}{*}{$\mathcal{D}$} & \multirow{2}{*}{$n$} & \multicolumn{3}{c|}{GPT-2 \scriptsize{(0.8B)}} & \multicolumn{3}{c|}{GPT-3 \scriptsize{(6B)}} & \multicolumn{3}{c|}{GPT-neo \scriptsize{(1.3B)}} \\
			\cmidrule{4-6}
			\cmidrule{7-9}
			\cmidrule{10-12}
			{} & {} & {} & Random & Similarity & Diversity & Random & Similarity & Diversity & Random & Similarity & Diversity \\
			\midrule
			\multirow{4}{*}{GPT-neo \scriptsize{(1.3B)}} & \multirow{4}{*}{$\mathcal{D}_\mathtt{FEEDER}$} & 1 & 
                23.7 \scriptsize{(5.7)} & 31.0 \scriptsize{(1.3)} & 31.0 \scriptsize{(1.3)} & 
			25.3 \scriptsize{(4.1)} & 34.6 \scriptsize{(1.8)} & 34.6 \scriptsize{(1.8)} & 
                28.3 \scriptsize{(5.4)} & 34.8 \scriptsize{(1.3)} & 34.8 \scriptsize{(1.3)} \\ 
			{} & {} & 2 & 
			45.1 \scriptsize{(5.6)} & 49.7 \scriptsize{(1.4)} & 46.1 \scriptsize{(0.8)} & 
                58.5 \scriptsize{(3.2)} & 57.8 \scriptsize{(1.2)} & 56.4 \scriptsize{(1.0)} & 
                69.3 \scriptsize{(3.7)} & 64.7 \scriptsize{(1.4)} & 64.7 \scriptsize{(1.6)} \\
                {} & {} & 5 & 
			49.4 \scriptsize{(4.6)} & 58.1 \scriptsize{(2.5)} & 59.1 \scriptsize{(1.9)} & 
                54.6 \scriptsize{(3.8)} & 64.5 \scriptsize{(1.1)} & 61.7 \scriptsize{(2.4)} & 
                68.7 \scriptsize{(3.2)} & 67.2 \scriptsize{(2.4)} & 65.8 \scriptsize{(1.8)} \\
                {} & {} & 10 & 
			59.4 \scriptsize{(4.6)} & 62.4 \scriptsize{(1.5)} & 65.8 \scriptsize{(1.5)} & 
                60.6 \scriptsize{(3.8)} & 64.7 \scriptsize{(1.8)} & 66.0 \scriptsize{(1.4)} & 
                69.8 \scriptsize{(2.8)} & 68.8 \scriptsize{(1.4)} & 68.9 \scriptsize{(1.3)} \\
			\bottomrule
		\end{tabular}
	}
	\label{tab:llm}
	\vspace{-1mm}
\end{table*}

\section{Integrating Algorithm~\ref{algo:alterexact} in $\mathtt{FEEDER}$}
\label{app:integrate}
One limitation to directly applying Algorithm~\ref{algo:exact} or \ref{algo:alterexact} is that $\mathcal{D}_\mathtt{TRAIN}$ is too large to be directly used as input demonstrations. 
For this purpose, we incorporate running Algorithm~\ref{algo:alterexact} for one round into our $\mathtt{FEEDER}$ as follows.
As shown in Figure~\ref{fig:exact}, we place Algorithm~\ref{algo:alterexact} after the demonstration selector to filter out the unnecessary parts from the pre-selected data.
Concretely, we first select $n$ samples from our $\mathtt{FEEDER}$ set (i.e., $\mathcal{D}_\mathtt{FEEDER}$), then filter retrieved samples by running Algorithm~\ref{algo:alterexact} for one round (treating the set of retrieved samples as $\mathcal{D}_\mathtt{IN}$).
Then, re-select $n-|\mathcal{D}_\mathtt{OUT}|$ where $\mathcal{D}_\mathtt{OUT}$ indicates the output of Algorithm~\ref{algo:alterexact}.

\begin{figure}[h]
    \centering
    \includegraphics[width=0.95\linewidth]{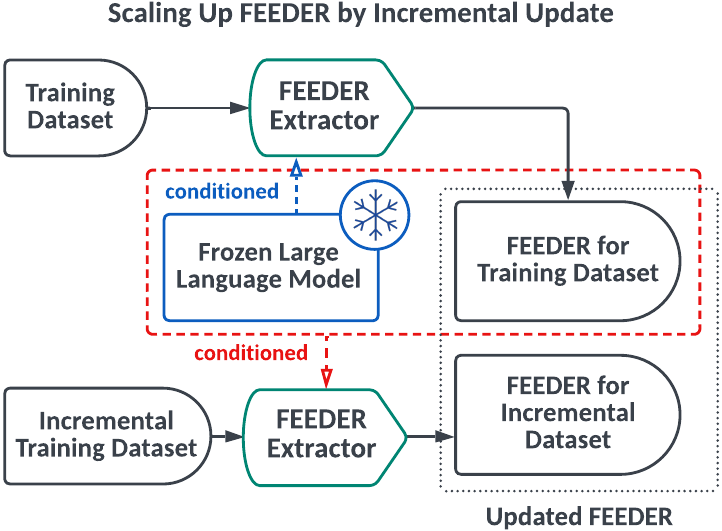}
    \caption{
        In order to scale up $\mathtt{FEEDER}$ for real-world applications dealing with dynamic data, we introduce an incremental update algorithm. This algorithm is designed to efficiently handle changes in training examples, avoiding the need to recompute over unchanged training examples. 
        }
    \label{fig:application}
    \vspace{-2mm}
\end{figure}

\section{$\mathtt{FEEDER}$ in Fine-tuning Setting}
\subsection{Implementation Details}
\label{app:details}
As summarized in Algorithm~\ref{algo:bilevel} in Section~\ref{sec:bilevel}, we can integrate our $\mathtt{FEEDER}$ selection and LLM fine-tuning into a bi-level optimization problem.
To evaluate the performance of our bi-level optimization, we first run Algorithm~\ref{algo:bilevel} for one run to get a pre-selected $\mathtt{FEEDER}$ set (i.e., $\mathcal{D}_\mathtt{FEEDER}$) and a tuned LLM.
Then, we update our $\mathtt{FEEDER}$ set with the tuned LLM and evaluate the performance of LLM in the in-context learning setting (i.e., few-shot inference), where we allow the LLM to retrieve relevant information from the pre-selected $\mathtt{FEEDER}$ set or the training dataset.

Concretely, our baseline is to first tune the LLM on the entire training dataset (i.e., $\mathcal{D}_\mathtt{TRAIN}$) and then do few-shot inference on the test dataset (i.e., $\mathcal{D}_\mathtt{TEST}$) with $\mathcal{D}_\mathtt{TRAIN}$ as the retrieval pool.
In contrast, ours is to first pre-select a $\mathtt{FEEDER}$ set (i.e., $\mathcal{D}_\mathtt{FEEDER}$) from $\mathcal{D}_\mathtt{TRAIN}$ and then tune the LLM on $\mathcal{D}_\mathtt{FEEDER}$.
Our $\mathtt{FEEDER}$ set is updated according to the tuned LLM using Algorithm~\ref{algo:framework} for 1 run, and our approach is evaluated on $\mathcal{D}_\mathtt{TEST}$ with the updated $\mathcal{D}_\mathtt{FEEDER}$ as the retrieval pool.

We conduct the fine-tuning pipeline in this manner to not only verify the superiority of our $\mathtt{FEEDER}$ but also to validate our bi-level optimization framework, which is able to tune both the $\mathtt{FEEDER}$ set and the LLMs in each loop.

We list some key hyper-parameters for fine-tuning as follows.
The batch size is set as 32, the warm steps is set as 100, the learning rate is set as $5\times 10^{-5}$, and the weight decay is set as 0.01.
All our experiments are conducted with NVIDIA A100s\footnote{\url{https://www.nvidia.com/en-us/data-center/a100/}}.

\subsection{Additional Results with Diverse Datasets}
\label{app:addfinetune}
We report performance comparison results on text classification datasets SUBJ, SST-2, and COLA datasets in Table~\ref{tab:ft}.
We include the results of FPB, SST-5, and TREC datasets in Table~\ref{tab:appft}, whose trend is consistent with our analysis in Section~\ref{sec:ft}.
These results further verify the superiority of our $\mathtt{FEEDER}$ in the fine-tuning setting.



\section{In-Depth Analysis of $\mathtt{FEEDER}$}

\begin{table*}
	\centering
	\caption{Performance comparisons between using randomly-selected $\mathcal{D}^*_\mathtt{TRAIN}$ (where $|\mathcal{D}^*_\mathtt{TRAIN}|=|\mathcal{D}_\mathtt{TRAIN}|$) as the base for acquiring a $\mathtt{FEEDER}$ set and using GPT-neo for inference on SST-2, SST-5, and COLA datasets are conducted in ICL.
    We report both the mean and variance of accuracy using 8 different seeds and 5 different permutations of n-shots.}
        \vspace{1mm}
	\resizebox{1\textwidth}{!}{
		\begin{tabular}{@{\extracolsep{4pt}}|c|c|c|c|c|c|c|c|c|c|c|c|}
			\toprule
			\multirow{2}{*}{$\Psi_\mathtt{LLM}(\cdot)$} & \multirow{2}{*}{$\mathcal{D}$} & \multirow{2}{*}{$n$} & \multicolumn{3}{c|}{SST-2} & \multicolumn{3}{c|}{SST-5} & \multicolumn{3}{c|}{COLA} \\
			\cmidrule{4-6}
			\cmidrule{7-9}
			\cmidrule{10-12}
			{} & {} & {} & Random & Similarity & Diversity & Random & Similarity & Diversity & Random & Similarity & Diversity \\
			\midrule
			\multirow{6}{*}{GPT-neo \scriptsize{(1.3B)}} & \multirow{2}{*}{$\mathcal{D}_\mathtt{TRAIN}$} & 2 & 
                76.8 \scriptsize{(3.5)} & 81.5 \scriptsize{(0.1)} & 76.3 \scriptsize{(0.4)} & 
			17.9 \scriptsize{(3.6)} & 26.9 \scriptsize{(0.1)} & 22.7 \scriptsize{(0.1)} & 
                30.7 \scriptsize{(3.1)} & 55.5 \scriptsize{(0.2)} & 56.5 \scriptsize{(0.4)} \\ 
                {} & {} & 5 & 
			65.1 \scriptsize{(3.5)} & 80.8 \scriptsize{(0.2)} & 66.1 \scriptsize{(0.3)} & 
                19.0 \scriptsize{(3.9)} & 29.2 \scriptsize{(0.1)} & 25.1 \scriptsize{(0.1)} & 
                40.0 \scriptsize{(3.6)} & 55.9 \scriptsize{(0.1)} & 52.5 \scriptsize{(0.2)} \\
                \cmidrule{2-12}
            {} & \multirow{2}{*}{$\mathcal{D}^*_\mathtt{TRAIN}$} & 2 & 
                73.2 \scriptsize{(3.6)} & 77.8 \scriptsize{(2.3)} & 72.4 \scriptsize{(2.4)} & 
			14.5 \scriptsize{(3.8)} & 23.3 \scriptsize{(3.6)} & 20.0 \scriptsize{(1.0)} & 
                28.3 \scriptsize{(5.4)} & 48.8 \scriptsize{(3.3)} & 49.7 \scriptsize{(3.1)} \\ 
                {} & {} & 5 & 
			62.4 \scriptsize{(3.5)} & 77.6 \scriptsize{(3.3)} & 62.2 \scriptsize{(2.2)} & 
                16.6 \scriptsize{(2.8)} & 25.5 \scriptsize{(2.1)} & 27.7 \scriptsize{(2.8)} & 
                33.8 \scriptsize{(4.4)} & 50.2 \scriptsize{(3.4)} & 48.7 \scriptsize{(2.8)} \\
                \cmidrule{2-12}
                {} & \multirow{2}{*}{$\mathcal{D}_\mathtt{FEEDER}$} & 2 & 
                \textbf{75.1} \scriptsize{(2.8)} & \textbf{82.6} \scriptsize{(2.1)} & \textbf{78.5} \scriptsize{(1.9)} & 
			\textbf{19.7} \scriptsize{(2.7)} & \textbf{27.4} \scriptsize{(2.1)} & \textbf{22.8} \scriptsize{(1.8)} & 
                \textbf{59.3} \scriptsize{(3.7)} & \textbf{64.7} \scriptsize{(1.4)} & \textbf{64.7} \scriptsize{(1.6)} \\ 
                {} & {} & 5 & 
			\textbf{73.2} \scriptsize{(4.2)} & \textbf{82.9} \scriptsize{(2.7)} & \textbf{71.6} \scriptsize{(2.4)} & 
                \textbf{19.2} \scriptsize{(3.2)} & 
                \textbf{30.2} \scriptsize{(1.1)} & 
                \textbf{26.4} \scriptsize{(2.4)} & 
                \textbf{58.7} \scriptsize{(3.2)} & 
                \textbf{67.2} \scriptsize{(2.4)} & \textbf{65.8} \scriptsize{(1.8)} \\
			\bottomrule
		\end{tabular}
	}
	\label{tab:ran}
	\vspace{-1mm}
\end{table*}

\subsection{Performance Gap between using $\mathtt{FEEDER}$ and $\mathtt{RAN}$ as Pre-Selector}
As our paper introduces a new pre-selection stage before the demonstration selection process, we also include an ablation study that randomly selects the same number of samples to form a randomly selected training dataset, denoted as $\mathcal{D}^*_\mathtt{TRAIN}$, which matches the sample size of $\mathcal{D}_\mathtt{FEEDER}$.
The corresponding results are reported in Table~\ref{tab:ran}.
A comparison of Table~\ref{tab:ran} with Tables~\ref{tab:main} and \ref{tab:appmain} indicates that replacing the entire training dataset with randomly selected samples significantly degrades LLM performance.
In contrast, the $\mathtt{FEEDER}$-selected samples act as a core set that summarizes the key information of the entire training dataset. 
By focusing on high-value samples, our approach enables LLMs to achieve better performance, effectively leveraging the essential knowledge within the dataset.


\subsection{Performance Gap among Using Different Numbers of Rounds and Different Depths of Tree}
\label{app:depth}
\begin{figure}
	\centering
	\includegraphics[width=1\linewidth]{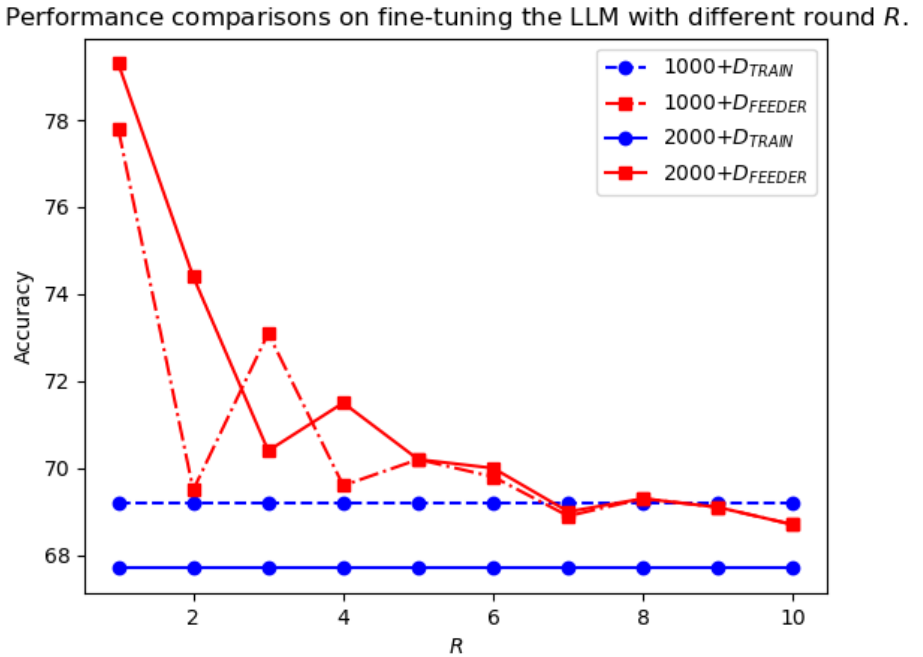}
	\vspace{-6mm}
	\captionsetup{font=footnotesize}
	\caption{Performance comparisons on fine-tuning $\mathtt{NEO}$ with running our approximation algorithm to pre-select $\mathcal{D}_\mathtt{FEEDER}$ with different run $R$. 
    Our evaluation operates on COLA dataset in the zero-shot setting after fine-tuning on 1000 and 2000 batches.}
	\label{fig:finetune}
\end{figure}
$\mathtt{FEEDER}$'s performance first rises and then drops with increasing tree algorithm runs $R$.
Figure~\ref{fig:finetune} visualizes the impact of employing different numbers of runs of our approximation algorithm (as described in Section~\ref{sec:model}) to derive $\mathcal{D}_\mathtt{FEEDER}$ for fine-tuning $\mathtt{NEO}$.
For ease of comparison, the results of fine-tuning $\mathtt{NEO}$ on $\mathcal{D}_\mathtt{TRAIN}$ are also included with the blue line. 
The observations suggest that fine-tuning with a smaller dataset with high data quality can enhance performance, but excessively reducing the dataset size may not lead to the desired outcomes. 
Also, it also indicates that fine-tuning LLMs on ``unnecessary'' data samples would not help.

\begin{figure}
	\centering
	\includegraphics[width=1\linewidth]{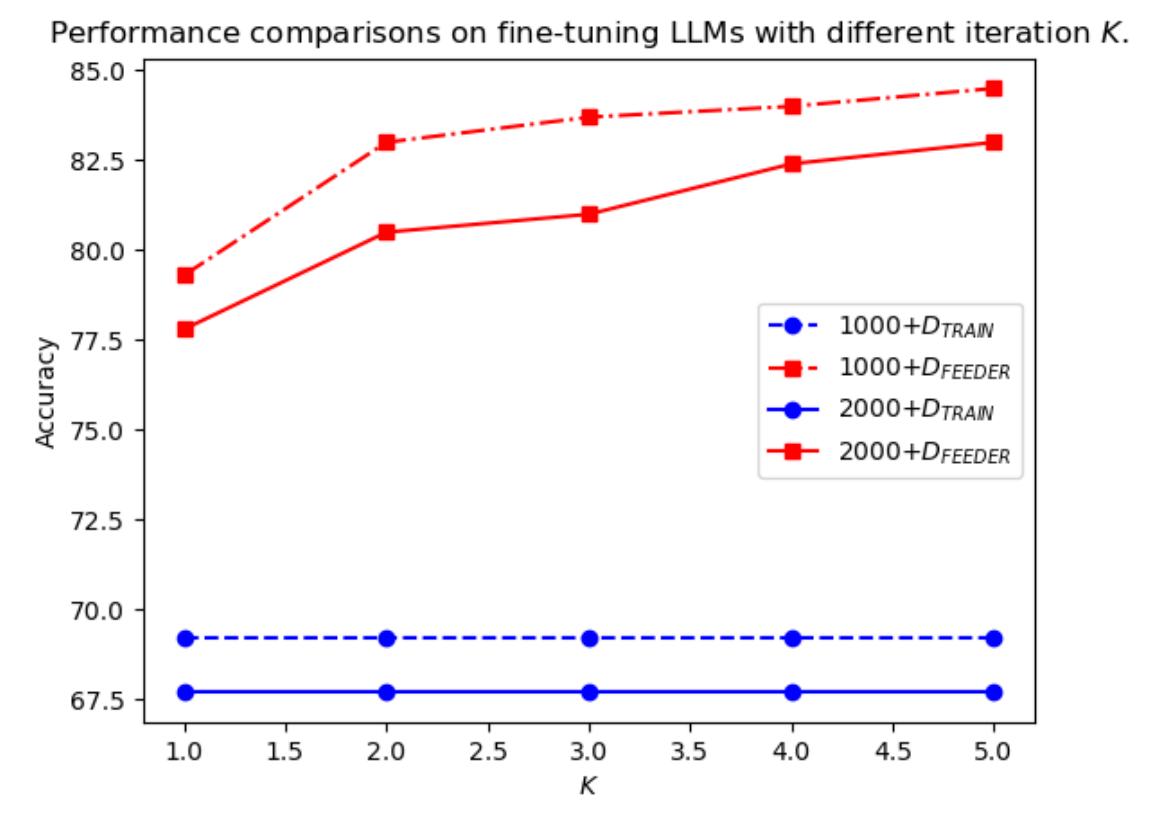}
	\vspace{-8mm}
	\captionsetup{font=footnotesize}
	\caption{Performance comparisons on fine-tuning $\mathtt{NEO}$ with running our approximation algorithm to pre-select $\mathcal{D}_\mathtt{FEEDER}$ with different round $K$. 
    Our evaluation operates on COLA dataset in the zero-shot setting after fine-tuning on 1000 and 2000 batches.}
	\label{fig:ablationk}
\end{figure}

As described in Section~\ref{sec:model}, we set the tree depth to 2 (corresponding to $K=1$), utilizing the one-shot inference capability of LLMs as the sufficiency filter to eliminate unnecessary samples.
To further explore the performance impact of varying tree depths, we investigate the performance gap associated with different depths of the tree. 
Figure~\ref{fig:ablationk} visualizes the impact of employing different numbers of runs of our approximation algorithm (as outlined in Section~\ref{sec:model}) to derive $\mathcal{D}_\mathtt{FEEDER}$ for fine-tuning $\mathtt{NEO}$.
For ease of comparison, the results of fine-tuning $\mathtt{NEO}$ on $\mathcal{D}_\mathtt{TRAIN}$ are also presented as a baseline (depicted by the blue line).
The results suggest that fine-tuning with a smaller, high-quality dataset can significantly enhance performance. 
However, when comparing to Figure~\ref{fig:finetune}, we observe that increasing the tree depth leads to more ``smoothing'' changes in the LLM performance.
There are two potential explanations for this phenomenon:
(i) The hyper-parameter $K$, which controls the tree depth, typically changes within a relatively small scope compared to $R$ due to its high computational cost and diminishing returns.
While increasing $K$ initially enhances the filtering process by leveraging deeper evaluations of sufficiency, the marginal improvements in the quality or size of the resulting $\mathtt{FEEDER}$ set decreases as $K$ grows.
(ii) Increasing the tree depth corresponds to performing n-shot inference to satisfy the sufficiency condition described in Eq.~(\ref{eqn:sufficient}). 
This is significantly more challenging than a one-shot inference check and results in a much smaller reduction in the number of samples in the training dataset.
(iii) Leveraging the n-shot inference capability of LLMs may yield more robust results. 
Specifically, the unnecessary samples filtered out by an n-shot sufficiency check are more likely to be genuinely unnecessary, thereby ensuring a higher-quality training set for fine-tuning.

\begin{table*}[h]
	\centering
	\caption{Results of performance difference between using $\mathcal{D}^*_\mathtt{FEEDER}$ (derived by using $\mathtt{FEEDER}$ version introduced in Appendix~\ref{app:integrate}), we also evaluate the performance of our variants of $\mathtt{FEEDER}$ with duplicated training dataset. We evaluate GPT-neo's performance on the n-shot settings.}
        \vspace{2mm}
	\resizebox{1\textwidth}{!}{
		\begin{tabular}{@{\extracolsep{4pt}}|c|c|c|c|c|c|c|c|c|c|c|c|}
			\toprule
			\multirow{2}{*}{$\Psi_\mathtt{LLM}(\cdot)$} & \multirow{2}{*}{$\mathcal{D}$} & \multirow{2}{*}{$n$} & \multicolumn{3}{c|}{SST-2} & \multicolumn{3}{c|}{SST-5} & \multicolumn{3}{c|}{COLA} \\
			\cmidrule{4-6}
			\cmidrule{7-9}
			\cmidrule{10-12}
			{} & {} & {} & Random & Similarity & Diversity & Random & Similarity & Diversity & Random & Similarity & Diversity \\
			\midrule
			\multirow{12}{*}{GPT-neo \scriptsize{(1.3B)}} & \multirow{2}{*}{$\mathcal{D}_\mathtt{TRAIN}$} & 2 & 
                76.8 \scriptsize{(3.5)} & 81.5 \scriptsize{(0.1)} & 76.3 \scriptsize{(0.4)} & 
			17.9 \scriptsize{(3.6)} & 26.9 \scriptsize{(0.1)} & 22.7 \scriptsize{(0.1)} & 
                30.7 \scriptsize{(3.1)} & 55.5 \scriptsize{(0.2)} & 56.5 \scriptsize{(0.4)} \\ 
                {} & {} & 5 & 
			65.1 \scriptsize{(3.5)} & 80.8 \scriptsize{(0.2)} & 66.1 \scriptsize{(0.3)} & 
                19.0 \scriptsize{(3.9)} & 29.2 \scriptsize{(0.1)} & 25.1 \scriptsize{(0.1)} & 
                40.0 \scriptsize{(3.6)} & 55.9 \scriptsize{(0.1)} & 52.5 \scriptsize{(0.2)} \\
                \cmidrule{2-12}
                {} & \multirow{2}{*}{$\mathcal{D}'_\mathtt{TRAIN}$} & 2 & 
                73.4 \scriptsize{(6.6)} & 78.4 \scriptsize{(0.3)} & 75.4 \scriptsize{(2.4)} & 
			14.9 \scriptsize{(3.8)} & 22.7 \scriptsize{(2.9)} & 21.7 \scriptsize{(1.0)} & 
                29.3 \scriptsize{(5.4)} & 49.8 \scriptsize{(1.3)} & 52.7 \scriptsize{(3.3)} \\ 
                {} & {} & 5 & 
			59.4 \scriptsize{(3.5)} & 75.3 \scriptsize{(1.3)} & 64.1 \scriptsize{(3.5)} & 
                17.5 \scriptsize{(2.8)} & 23.5 \scriptsize{(2.1)} & 22.7 \scriptsize{(2.8)} & 
                37.8 \scriptsize{(4.2)} & 51.2 \scriptsize{(1.4)} & 51.0 \scriptsize{(2.3)} \\
                \cmidrule{2-12}
                {} & \multirow{2}{*}{$\mathcal{D}_\mathtt{FEEDER}$} & 2 & 
                75.1 \scriptsize{(2.8)} & 82.6 \scriptsize{(2.1)} & 78.5 \scriptsize{(1.9)} & 
			19.7 \scriptsize{(2.7)} & 27.4 \scriptsize{(2.1)} & 22.8 \scriptsize{(1.8)} & 
                59.3 \scriptsize{(3.7)} & 64.7 \scriptsize{(1.4)} & 64.7 \scriptsize{(1.6)} \\ 
                {} & {} & 5 & 
			73.2 \scriptsize{(4.2)} & 82.9 \scriptsize{(2.7)} & 71.6 \scriptsize{(2.4)} & 
                19.2 \scriptsize{(3.2)} & 
                30.2 \scriptsize{(1.1)} & 
                26.4 \scriptsize{(2.4)} & 
                58.7 \scriptsize{(3.2)} & 
                67.2 \scriptsize{(2.4)} & 65.8 \scriptsize{(1.8)} \\
                \cmidrule{2-12}
                {} & \multirow{2}{*}{$\mathcal{D}'_\mathtt{FEEDER}$} & 2 & 
                74.3 \scriptsize{(2.9)} & 81.3 \scriptsize{(1.1)} & 76.4 \scriptsize{(1.8)} & 
			18.2 \scriptsize{(2.2)} & 26.1 \scriptsize{(2.1)} & 21.0 \scriptsize{(1.8)} & 
                58.3 \scriptsize{(2.7)} & 62.5 \scriptsize{(1.4)} & 63.5 \scriptsize{(1.1)} \\ 
                {} & {} & 5 & 
			71.1 \scriptsize{(3.2)} & 80.0 \scriptsize{(2.4)} & 69.8 \scriptsize{(2.1)} & 
                19.0 \scriptsize{(2.0)} & 
                29.4 \scriptsize{(1.3)} & 
                25.3 \scriptsize{(2.1)} & 
                57.5 \scriptsize{(3.0)} & 
                65.0 \scriptsize{(2.4)} & 64.1 \scriptsize{(1.8)} \\
                \cmidrule{2-12}
                {} & \multirow{2}{*}{$\mathcal{D}^{*}_\mathtt{FEEDER}$} & 2 & 
                75.6 \scriptsize{(1.8)} & 83.1 \scriptsize{(1.0)} & 79.0 \scriptsize{(1.1)} & 
			20.1 \scriptsize{(2.0)} & 27.8 \scriptsize{(2.3)} & 23.1 \scriptsize{(1.2)} & 
                60.2 \scriptsize{(3.2)} & 64.9 \scriptsize{(1.4)} & 65.0 \scriptsize{(1.1)} \\ 
                {} & {} & 5 & 
			73.7 \scriptsize{(4.1)} & 82.8 \scriptsize{(2.2)} & 71.8 \scriptsize{(2.1)} & 
                19.0 \scriptsize{(3.0)} & 
                31.2 \scriptsize{(1.0)} & 
                26.3 \scriptsize{(2.1)} & 
                59.2 \scriptsize{(2.7)} & 
                67.3 \scriptsize{(2.1)} & 65.4 \scriptsize{(2.2)} \\
                \cmidrule{2-12}
                {} & \multirow{2}{*}{$\mathcal{D}^{*'}_\mathtt{FEEDER}$} & 2 & 
                75.2 \scriptsize{(2.0)} & 82.8 \scriptsize{(2.0)} & 78.4 \scriptsize{(1.3)} & 
			19.9 \scriptsize{(2.2)} & 27.0 \scriptsize{(2.1)} & 22.7 \scriptsize{(1.8)} & 
                59.4 \scriptsize{(1.7)} & 64.9 \scriptsize{(1.2)} & 64.5 \scriptsize{(1.2)} \\ 
                {} & {} & 5 & 
			73.5 \scriptsize{(4.2)} & 82.4 \scriptsize{(2.2)} & 71.3 \scriptsize{(2.2)} & 
                18.9 \scriptsize{(2.2)} & 
                29.9 \scriptsize{(1.0)} & 
                26.2 \scriptsize{(1.2)} & 
                56.5 \scriptsize{(2.2)} & 
                65.5 \scriptsize{(2.2)} & 64.7 \scriptsize{(1.4)} \\
			\bottomrule
		\end{tabular}
	}
	\label{tab:duplicate}
\end{table*}

\subsection{Performance Gap between our Approximately Computed $\mathtt{FEEDER}$ Set and Exact $\mathtt{FEEDER}$ Set}
\label{app:robust}
As described in Section~\ref{sec:model}, our approximation algorithm ensures the sufficiency of the resulting $\mathtt{FEEDER}$ set but does not guarantee the necessity of each sample within it.
To address this, we employ the integration method outlined in Appendix~\ref{app:integrate}, which ensures that the selected demonstrations are both sufficient and necessary.
We denote this refined set as $\mathcal{D}^*_\mathtt{FEEDER}$.
We compare the performance of few-shot preference using $\mathcal{D}_\mathtt{FEEDER}$, $\mathcal{D}^*_\mathtt{FEEDER}$, and $\mathcal{D}_\mathtt{TRAIN}$, with the results summarized in Table~\ref{tab:duplicate}.
The results indicates that $\mathcal{D}^*_\mathtt{FEEDER}$ achieves a slight improvement in LLM performance compared to $\mathcal{D}_\mathtt{FEEDER}$, further validating the effectiveness of integrating sufficiency and necessity in the pre-selection process.

We further evaluate the robustness of our $\mathcal{D}^*_\mathtt{FEEDER}$ and $\mathcal{D}_\mathtt{FEEDER}$ by duplicating the training dataset $\mathcal{D}_\mathtt{TRAIN}$.
The duplicated dataset is denoted as $\mathcal{D}'_\mathtt{TRAIN}$, and the corresponding resulting sets derived using our approximation and integration methods are denoted as $\mathcal{D}'_\mathtt{FEEDER}$ and $\mathcal{D}^{*'}_\mathtt{FEEDER}$ respectively.
The results of this evaluation are summarized in Table~\ref{tab:duplicate}.
From the table, we observe that both random and similarity-based demonstration retrievers are significantly impacted by the duplicated dataset. 
This is because the retrieved demonstrations can include duplicates, particularly when using a similarity-based retriever, as similarity scores are calculated independently for each sample.
In contrast, our $\mathcal{D}'_\mathtt{FEEDER}$ and $\mathcal{D}^{*'}_\mathtt{FEEDER}$ act as ``weak'' and ``strong'' filters, respectively, by effectively removing redundant or unnecessary samples from the input. 
The ``weak'' filter provided by $\mathcal{D}'_\mathtt{FEEDER}$ ensures sufficiency by eliminating a significant portion of redundant data while maintaining the core information needed for the task.
On other hand, the ``strong'' filter represented by $\mathcal{D}^{*'}_\mathtt{FEEDER}$ not only ensures sufficiency but also guarantees necessity, leading to an even more refined dataset that further enhances model robustness and performance.
This differentiation highlights the flexibility and effectiveness of our filtering mechanisms in handling noisy or duplicated datasets.


\begin{figure*}[t]
    \centering
    \includegraphics[width=1\textwidth]{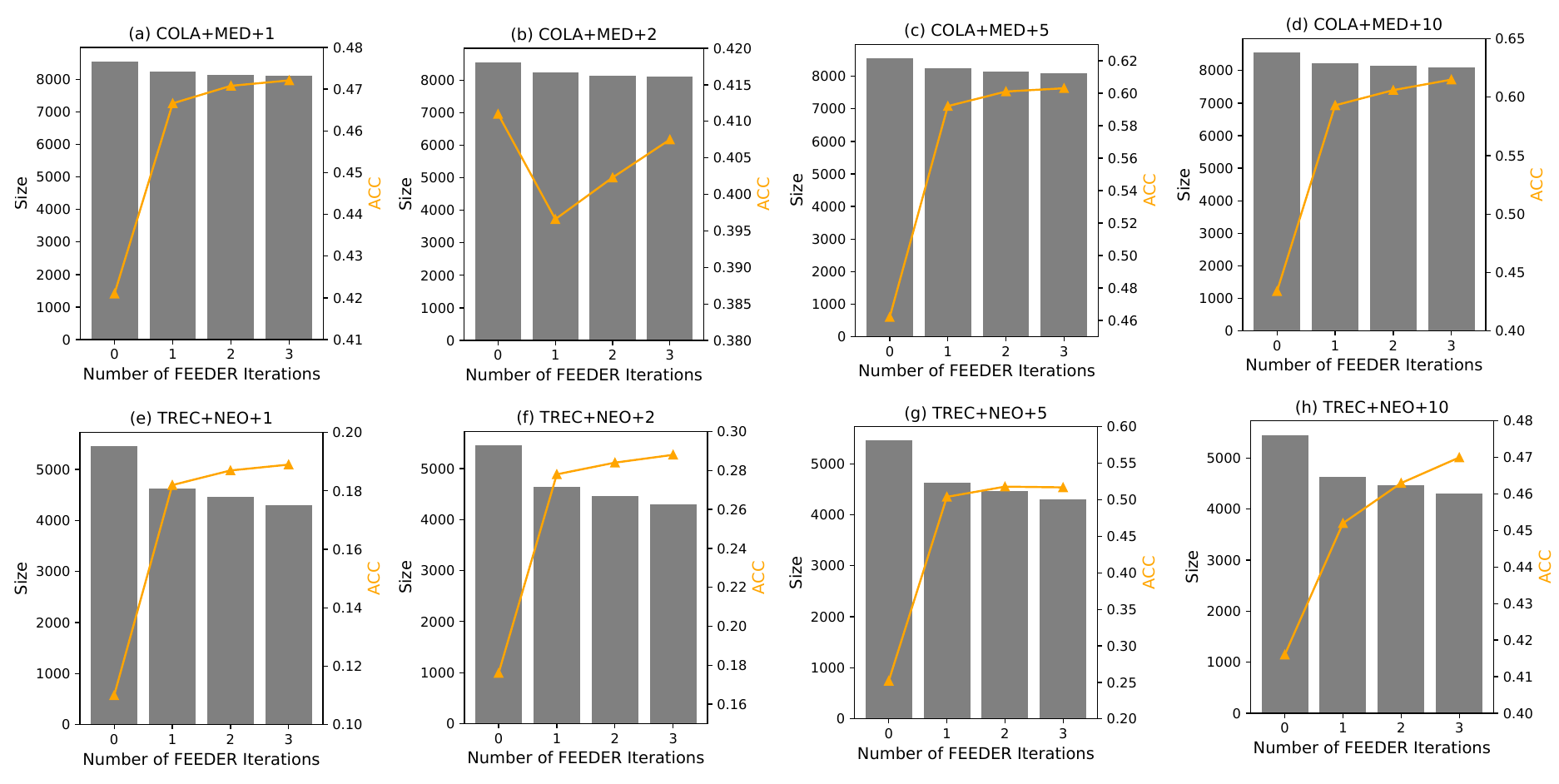}
    \vspace{-3mm}
    \caption{Performance comparisons for running our approximation algorithm to pre-select $\mathtt{FEEDER}$ with different iterations $K$ are evaluated in terms of accuracy (denoted as ACC) with $\mathtt{RAN}$ as the retriever and the size of the resulting $\mathtt{FEEDER}$ set (denoted as Size).
    Each sub-figure is entitled with Dataset+LLM base+n shots.
    }
    \label{fig:iteration}
    \vspace{-2mm}
\end{figure*}

\section{Case Study with Artificial Data Points Generated by LLMs}
\subsection{Case Study for Transitivity of LLMs}
\label{app:transitivity}
To illustrate the transitivity of LLMs, we conducted a simple experiment using $\mathtt{gpt}$-$\mathtt{3.5}$-$\mathtt{turbo}$.
We prompted the model with the question \emph{which place does Jerry lives in?}
LLM responses with \emph{I'm sorry, but I don't have access to personal information about individuals, including your friend Jerry.}
Then, let $\mathcal{D}_\mathtt{A}$, $\mathcal{D}_\mathtt{B}$, $\mathcal{D}_\mathtt{C}$ denote the city, the country, and the continent he lives in.
Then, we can observe that if we tell the LLM about the city (e.g., London), then the LLM can tell about the country (e.g., United Kingdom); if we tell the LLM about the country (e.g., United Kingdom), then the LLM can tell about the continent (e.g., Europe).
Also, if we tell the LLM about the city (e.g., London), then we ask about the continent, and the LLM also can tell (e.g., Europe).
This observation is one case to demonstrate the transitivity of sufficiency.

\subsection{Case Study for Demonstrations Selection Should be Aware of LLMs in Use}
\label{app:case}
Subsequently, we conduct a case study to substantiate the central proposition of this paper: 
Whether the assessment of the quality of a demonstration should depend on the specific LLM in use.

We consider the factual error made by Google Bard in the first demo\footnote{\url{https://www.theverge.com/2023/2/8/23590864/google-ai-chatbot-bard-mistake}\\ \url{-error-exoplanet-demo}}.
Bard said \emph{The James Webb Space Telescope took the very first pictures of a planet outside of our own solar system}.
However, the fact is \emph{The Very Large Telescope took the very first pictures of a planet outside of our own solar system}.
Based on the above statements, we produce the question \emph{What took the very first pictures of a planet outside of our own solar system?}

We ask the question to $\mathtt{gpt}$-$\mathtt{3.5}$-$\mathtt{turbo}$.
The corresponding answer is:

\emph{The first pictures of a planet outside of our own solar system were taken by the Hubble Space Telescope in 2008. The planet, named Fomalhaut b, orbits the star Fomalhaut, which is located about 25 light-years away from Earth in the constellation Piscis Austrinus.
}

We can see that the above answer is incorrect.
We prompt the fact into the LLM, then the answer is:

\emph{The Very Large Telescope took the very first pictures of a planet outside of our own solar system.
}

This indicates that prompting the fact works.
We further ask $\mathtt{gpt}$-$\mathtt{3.5}$-$\mathtt{turbo}$ to generate some sufficient statements for the fact.
The following is its output:

\emph{1. The Very Large Telescope made history by capturing the first-ever images of a planet beyond our solar system.}

\emph{2. Thanks to the advanced technology of the Very Large Telescope, scientists were able to capture the first-ever photographs of an exoplanet.}

\emph{3. The groundbreaking images captured by the Very Large Telescope marked a major milestone in the study of exoplanets.}

\emph{4. The discovery of a planet outside of our solar system was made possible by the cutting-edge capabilities of the Very Large Telescope.}

\emph{5. The Very Large Telescope's ability to capture images of distant celestial bodies allowed scientists to observe an exoplanet for the first time in history.}

We also evaluate separately using the above statements as a prompt to $\mathtt{gpt}$-$\mathtt{3.5}$-$\mathtt{turbo}$, and either one of them can lead to the correct answer.
We provide detailed answers as follows.

\emph{1. The Very Large Telescope took the very first pictures of a planet outside of our own solar system.}

\emph{2. The Very Large Telescope took the very first pictures of a planet outside of our own solar system.}

\emph{3. The Very Large Telescope took the very first pictures of a planet outside of our own solar system.}

\emph{4. The Very Large Telescope took the very first pictures of a planet outside of our own solar system.}

\emph{5. The Very Large Telescope took the very first pictures of a planet outside of our own solar system.}

We can see, in this case, that either one of the statements generated by $\mathtt{gpt}$-$\mathtt{3.5}$-$\mathtt{turbo}$, is a sufficient and necessary instance to answer \emph{What took the very first pictures of a planet outside of our own solar system?}

We further evaluate the performance of $\mathtt{gpt}$-$\mathtt{j}$-$\mathtt{6b}$.
Without any prompt (i.e., in the zero-shot setting), its answer is:

\emph{The Hubble Space Telescope.}

We then prompt the above 5 statements provided by $\mathtt{gpt}$-$\mathtt{3.5}$-$\mathtt{turbo}$ 
into $\mathtt{gpt}$-$\mathtt{j}$-$\mathtt{6b}$, then the answer is:

\emph{1. The first-ever images of a planet beyond our solar system have been captured by the Very Large Telescope in Chile.}

\emph{2. The Hubble Space Telescope.}

\emph{3. A team of astronomers led by the University of Arizona’s Michael Liu.}

\emph{4. The Hubble Space Telescope.}

\emph{5. The Very Large Telescope.}

We can see that only the 1-st statement or the 5-th statement is a sufficient and necessary instance.
Combining the results of $\mathtt{gpt}$-$\mathtt{j}$-$\mathtt{6b}$ and the results of $\mathtt{gpt}$-$\mathtt{3.5}$-$\mathtt{turbo}$ can verify the core insight of our paper: 
\emph{the measurement over prompting a demonstration should consider what LLM is in use.}

Furthermore, we also evaluate the performance of 3 GPT variants used in the paper.
We begin by evaluating the zero-shot performance of 
$\mathtt{gpt}$-$\mathtt{neo}$-$\mathtt{1.3B}$ (denoted as $\mathtt{NEO}$ in the experiment), and its result is:

\emph{The first pictures of a planet outside of our own solar system were taken by the Voyager 1 spacecraft in 1977.}

We then prompt the above 5 statements into $\mathtt{gpt}$-$\mathtt{neo}$-$\mathtt{1.3B}$, then the answer is:

\emph{1. The very large telescope was built in the early 1990s by the European Southern Observatory (ESO) in Chile. The Very Large Telescope.}

\emph{2. The Very Large Telescope in Chile.}

\emph{3. The Very Large Telescope (VLT) in Chile.}

\emph{4. The Very Large Telescope.}

\emph{5. The Very Large Telescope in Chile.}

The above results show that either one of the latter 4 statements is a sufficient instance.
The results of $\mathtt{gpt2}$-$\mathtt{large}$ (denoted as $\mathtt{LAR}$ in the experiment) show that neither of the 5 statements is a sufficient and necessary instance:

\emph{1. The very large telescope was built in the early 1990s by the European Southern Observatory (ESO) in Chile. The Very Large Telescope.}

\emph{2. The Hubble Space Telescope.}

\emph{3. The first pictures of a planet outside of our own solar system were taken by the Hubble Space Telescope in 1990.}

\emph{4. The Hubble Space Telescope.}

\emph{5. The very first pictures of a planet outside of our own solar system were taken by the Hubble Space Telescope.}

The results of $\mathtt{gpt2}$-$\mathtt{medium}$ (denoted as $\mathtt{MED}$ in the experiment) show that only the 4-th statement is not a sufficient and necessary instance:

\emph{1. The Very Large Telescope.}

\emph{2. The Very Large Telescope.}

\emph{3. The Very Large Telescope.}

\emph{4. The Hubble Space Telescope.}

\emph{5. The Very Large Telescope.}

All the above results verify that \emph{quality} of one demonstration should be LLM-specific, which is the key idea of our paper.



\begin{algorithm}[h]
	\caption{Alternative Exact Algorithm for $\mathtt{FEEDER}$}
	\label{algo:alterexact}
	{\bfseries Input:}
	Training dataset $\mathcal{D}_\mathtt{TRAIN}$.\\
	{\bfseries Output:}
	Exact $\mathtt{FEEDER}$ $\widetilde{\mathcal{D}}_\mathtt{FEEDER}$.\\
    Initialize the number of rounds $r=0$.\\
    Initialize the set of unnecessary data $\mathcal{D}_r=\emptyset$.\\
    \Repeat{$|\mathcal{H}_\mathtt{UNNCESSARY}| \leq 1$}{
    Initialize $k=1$.\\
    Initialize $\mathscr{H}_0=\emptyset$. \\
    Update input data by removing the unnecessary part $\mathcal{D}_\mathtt{IN}=\mathcal{D}_\mathtt{TRAIN}-\mathcal{D}_r$.\\
    \For{each instance $(\bm{x}_n, \bm{y}_n)\in\mathcal{D}_\mathtt{IN}$}{
    Check $Y_{(\{\bm{x}_{n'}|\bm{x}_{n'}\in\mathcal{D}_\mathtt{IN}\})}=\mathbf{1}_{|\mathcal{D}_\mathtt{IN}|}|\mathtt{unplug}((\bm{x}_n,\bm{y}_n$ $));$ $C,S$ (a),  $C=\mathcal{D}_\mathtt{IN}$, $S=(Y_{(\{\bm{x}_{n'}|\bm{x}_{n'}\in\mathcal{D}_\mathtt{IN}\})}$ $=\mathbf{1}_{|\mathcal{D}_\mathtt{IN}|}$ $)$. \label{line:11}\\
    If (a) holds, let $\mathcal{H}_n=\{(\bm{x}_n,\bm{y}_n)\}$ and append $\mathcal{H}_n$ to $\mathscr{H}_0$.
    }
    \label{line:12}
    \Repeat{$|\mathscr{H}_k| = 1 \text{ where we assume the current round is }K$}{
    \For{each pair $(\mathcal{H}_i,\mathcal{H}_j)$ where $\mathcal{H}_i,\mathcal{H}_j\in\mathscr{H}_{k-1}$}{
        Check $Y_{(\{\bm{x}_{n}|\bm{x}_{n}\in\mathcal{D}_\mathtt{IN}\})}=\mathbf{1}_{|\mathcal{D}_\mathtt{IN}|}|\mathtt{unplug}(\mathcal{H}_i\cup\mathcal{H}_j);C,S$ (b), where $C=\mathcal{D}_\mathtt{IN}$ and $S=(Y_{(\{\bm{x}_{n'}|\bm{x}_{n'}\in\mathcal{D}_\mathtt{IN}\})}$ $=\mathbf{1}_{|\mathcal{D}_\mathtt{IN}|}$ $)$.\\
        If (b) holds, generate a new node $\mathcal{H}_i\cup\mathcal{H}_j$, append it to $\mathscr{H}_{k}$, and assign $\mathcal{H}_i\cup\mathcal{H}_j$; otherwise, append $\mathcal{H}_i$ and $\mathcal{H}_j$ to $\mathscr{H}_{k}$.\\
        Remove $\mathcal{H}_i,\mathcal{H}_j$ from $\mathscr{H}_{k-1}$, i.e., $\mathscr{H}_{k-1}=\mathscr{H}_{k-1}-\{\mathcal{H}_i,\mathcal{H}_j\}$.
    }
    Grow tree from bottom to top via $k= k+1$.
    }
    Let $\mathcal{H}_\mathtt{UNNCESSARY}$ denote only one element (i.e. the root node) in $\mathscr{H}_K$.  \\
    Update the number of rounds, i.e., $r= r+1$.\\
    Update $\mathcal{D}_r$ to include the unnecessary part $\mathcal{H}_\mathtt{UNNCESSARY}$, i.e., $\mathcal{D}_r=\mathcal{D}_r\cup\mathcal{H}_\mathtt{UNNCESSARY}$.
    }
    Assign $\widetilde{\mathcal{D}}_\mathtt{FEEDER}$ as removing $\mathcal{D}_r$ from $\mathcal{D}_\mathtt{TRAIN}$, i.e., $\widetilde{\mathcal{D}}_\mathtt{FEEDER}=\mathcal{D}_\mathtt{TRAIN}-\mathcal{D}_r$.
\end{algorithm}


\end{document}